\newcommand{\Ab}{\mathbf{A}}
\newcommand{\EE}{\mathbb{E}}
\newcommand{\RR}{\mathbb{R}}
\newcommand{\norm}[1]{\|#1\|}
\newcommand{\bbb}{\mathbf{b}}
\theoremstyle{plain}
\newtheorem{theorem}{Theorem}[section]
\newtheorem{proposition}[theorem]{Proposition}
\newtheorem{lemma}{Lemma}
\newtheorem{sublemma}{lemma}[lemma]
\theoremstyle{definition}
\newtheorem{assumption}[theorem]{Assumption}
\theoremstyle{remark}
\newtheorem{remark}[theorem]{Remark}
\title{Two-Timescale Critic-Actor for Average Reward MDPs with Function Approximation}
\author{Prashansa Panda and Shalabh Bhatnagar
}
\begin{document}

\maketitle

\begin{abstract}
Several recent works have focused on carrying out non-asymptotic convergence analyses for AC algorithms. Recently, a two-timescale critic-actor algorithm has been presented for the discounted cost setting in the look-up table case where the timescales of the actor and the critic are reversed and only asymptotic convergence shown. 
In our work, we present the first two-timescale critic-actor algorithm with function approximation in the long-run average reward setting and present the first finite-time non-asymptotic as well as asymptotic convergence analysis for such a scheme. 
We obtain optimal learning rates and prove that our algorithm achieves a sample complexity of {$\mathcal{\tilde{O}}(\epsilon^{-(2+\delta)})$ with $\delta >0$  arbitrarily close to zero,} for the mean squared error of the critic to be upper bounded by $\epsilon$ which is better than the one obtained for two-timescale AC in a similar setting. A notable feature of our analysis is that we present the asymptotic convergence analysis of our scheme in addition to the finite-time bounds that we obtain and show the almost sure asymptotic convergence of the (slower) critic recursion to the attractor of an associated differential inclusion with actor parameters corresponding to local maxima of a perturbed average reward objective. We also show the results of numerical experiments on three benchmark settings and observe that our critic-actor algorithm performs the best amongst all algorithms.
\end{abstract}

%

\section{Introduction}

Actor-Critic (AC) methods have proved to be efficient in solving many reinforcement learning (RL) tasks. Actor-only methods such as REINFORCE suffer from high variance during the estimation of the policy gradient whereas critic-only methods like Q-learning are efficient in the tabular setting but can diverge when function approximation is used. AC methods try to circumvent these problems by combining both policy- and value-based methods to solve RL problems. In these approaches, the goal of the actor is to learn the optimal policy using the value updates provided by the critic, while the goal of the critic is to learn the value function for a policy prescribed by the actor. One obtains stable behavior of such algorithms through a difference in timescales that we explain in more detail below.


The AC framework is designed to mimic the policy iteration (PI) procedure for Markov decision processes \citep{puterman}. 
The AC algorithms incorporate two-timescale coupled stochastic recursions with the learning rate of the actor typically converging to zero at a rate faster than that of the critic. 
The timescale separation in two-timescale stochastic approximation algorithms such as AC is critical in ensuring stability of the recursions and their almost sure convergence. This is because
from the viewpoint of the faster timescale, the slower recursion appears to be quasi-static while from the viewpoint of the slower timescale, the faster recursion appears to have converged. This helps the AC scheme to emulate policy iteration and thereby converge to the optimal policy. Asymptotic convergence analyses of two-timescale AC schemes are largely available via the ordinary differential equation (ODE) based approach.
In \citep{bhatnagar2023actorcritic}, the critic-actor (CA) algorithm was proposed, in the lookup table setting for the infinite-horizon discounted cost criterion, where the roles of the actor and the critic were reversed by swapping their timescales. 
The resulting procedure is seen to track value iteration instead of policy iteration.  

In this paper, we carry this idea forward and present, for the first time, a critic-actor algorithm with function approximation and for the long-run average (and not discounted) reward setting. We then carry out detailed asymptotic and non-asymptotic convergence analyses of the same. Our algorithm runs temporal difference learning on the slower timescale to estimate the critic updates and stochastic policy gradient on the faster timescale for the actor. 
We prove that this algorithm emulates an approximate value iteration scheme. 
Our paper plugs in an important gap that existed previously by studying a new class of algorithms obtained by merely reversing the timescales of the actor and the critic. 
Our finite-time analysis shows that the two-timescale CA algorithm has a better sample complexity when compared with the two-timescale AC algorithm.

In our algorithm, even though there are three recursions, the average reward and actor recursions together proceed on the same timescale that is faster than the timescale of the critic update. 
Notice the difference of our scheme with AC algorithms for average-reward MDPs such as those in \citep{wu2022finite, BHATNAGAR20092471}, where the average-reward recursion proceeds on the same (faster) timescale as the critic while the actor recursion proceeds slower.
We use linear function approximation for the critic recursion and a policy gradient actor. 
We perform the non-asymptotic analysis of this algorithm and obtain its sample complexity. In addition, we prove that the scheme remains asymptotically stable and is almost surely convergent to the attractors of an underlying differential inclusion. 
Our analysis helps us in getting the  optimised learning rates for the actor and the critic recursions. 

Finally, we show numerical performance comparisons of our algorithm with the AC and a few other algorithms over three different OpenAI Gym environments and observe that the CA algorithm shows the best performance amongst all algorithms considered, though by small margins. In terms of the training time performance, CA is better than all algorithms except DQN on all three environments, and in fact, it takes about half the training time on two of the environments. 

\noindent \textit{\textbf{Main Contributions:}}\\
 (\textit{a}) We present the first critic-actor (CA) algorithm with linear function approximation for the long-run average-reward criterion where the critic runs on a slower timescale than the actor.\\
(\textit{b}) We carry out the first finite-time analysis of the two-timescale CA algorithm wherein we present finite-time bounds for the critic error, actor error and the average reward estimation error, respectively. In particular, we obtain a sample complexity of {$\mathcal{\tilde{O}}(\epsilon^{-(2+\delta)})$ with $\delta >0$  arbitrarily close to zero}, for the mean squared error of the critic to be upper bounded by $\epsilon$. This is better than  
the sample complexity of $\tilde{\mathcal{O}}(\epsilon^{-2.5})$ obtained by the two-timescale AC algorithm of \citep{wu2022finite} and can be brought as close as possible to the sample complexity of the recently studied single-timescale AC schemes \citep{olshevsky, chen2023finitetime} where the same is $\tilde{\mathcal{O}}(\epsilon^{-2})$. Note that for the latter schemes, there are no formal proofs available for the asymptotic stability and almost sure convergence (see Section ~\ref{ft} and Appendix for details). 
\\
(\textit{c}) We perform a novel asymptotic analysis of convergence of this scheme by showing that the slower timescale critic recursion remains stable and tracks a limiting differential inclusion that depends on the set of local maxima of the actor recursion corresponding to any critic update. Such an analysis under Markov noise has not been previously carried out in the context of any AC algorithm and is a generalization of the ODE based analysis of such algorithms in the presence of multiple attractors of the actor \citep{aubin, benaim-inclusions}. We mention here that unlike us, most papers on finite-time analysis of RL algorithms do not prove stability and almost sure convergence of such algorithms, see Table~\ref{sample-table}. As a result, we provide stronger guarantees than such algorithms. See
Appendix~\ref{asympanalysis} for details of this analysis. \\
(\textit{d}) We show the results of experiments comparing our CA algorithm with some other well-studied algorithms, on three different OpenAI Gym environments and observe that CA performs better than the other algorithms in average reward performance. In terms of training time, the CA algorithm performs uniformly better than AC requiring half the training time on two of the environments (see Section~\ref{exp_results} and Appendix~\ref{simdetails}). 

\vspace*{10pt}
\noindent\textbf{Notation: }
For two sequences $\{c_n\}$ and $\{d_n\}$, we write $c_n = \mathcal{O}(d_n)$ if there exists a constant $P>0$ such that ${\displaystyle \frac{|c_n|}{|d_n|} \le P}$. To further hide logarithmic factors, we use the notation $\tilde{\mathcal{O}}(\cdot)$. Without any other specification, $\|\cdot\|$ denotes the $\ell_2$-norm of Euclidean vectors.
$d_{TV}(M,N)$ is the total variation norm distance between two probability measures $M$ and $N$, and is defined as $d_{TV}(M,N) = \frac{1}{2} \int_{\mathcal{X}}|M(dx)-N(dx)|$.

\section{Related Work}
\label{related_work}

\begin{table*}[t]
\caption{Comparison with related works: \citep{olshevsky} uses Discounted Reward Setting while Others are for Average Reward.
}
\label{sample-table}
\vskip 0.15in
\begin{center}
\begin{small}
\begin{tabular}{|p{2 cm}|c|c|c|p{2 cm}|p{1.6 cm}|}
\toprule
Reference & Algorithm & Sampling & Asymptotic Analysis & Sample Complexity & Critic\\     
\midrule
\citep{wu2022finite}    & \multicolumn{1}{c|}{Two-timescale AC} & \multicolumn{1}{c|}{Markovian} &\multicolumn{1}{c|}{Shown in \citep{BHATNAGAR20092471}} & $\tilde{\mathcal{O}}(\epsilon^{-2.5})$ & TD(0) \\ \hline
\citep{olshevsky} &  \multicolumn{1}{c|}{Single-timescale AC} 
& \multicolumn{1}{c|}{i.i.d}  &\multicolumn{1}{c|}{Not shown} & $\tilde{\mathcal{O}}(\epsilon^{-2})$ & TD(0)\\ \hline
\citep{chen2023finitetime} & \multicolumn{1}{c|}{Single-timescale AC}  & \multicolumn{1}{c|}{Markovian}  &\multicolumn{1}{c|}{Not shown} & $\tilde{\mathcal{O}}(\epsilon^{-2})$ & TD(0)\\ \hline
\citep{suttle2023exponentially} &    \multicolumn{1}{c|}{Two-timescale MLAC}  & \multicolumn{1}{c|}{Markovian} &  \multicolumn{1}{c|}{Not Shown} & $\widetilde{\mathcal{O}}(\tau^{2}_{mix}\epsilon^{-2})$ & MLMC\\ \hline
\rowcolor{blue!10} Our work    & \multicolumn{1}{c|}{Two-timescale CA}  & \multicolumn{1}{c|}{Markovian} & Shown & $\tilde{\mathcal{O}}(\epsilon^{-(2+\delta)})$ & TD(0) \\
\bottomrule
\end{tabular}
\end{small}
\end{center}
\vskip -0.1in
\end{table*}

We briefly review here some of the related work.
  In \citep{konda_actor_critic_type}, AC algorithms were presented for the look-up table representations and the first asymptotic analysis of these algorithms was carried out. Subsequently, \citep{konda_onactorcritic} presented AC algorithms with function approximation using the Q-value function and an asymptotic analysis of convergence was presented. In \citep{kakade_2001}, a natural gradient based algorithm was presented. Subsequently, works such as  \citep{dicastro2009convergent} and \citep{ zhang2020provably} have also carried out the  asymptotic analysis of AC algorithms. In \citep{BHATNAGAR20092471}, natural AC algorithms were presented that perform bootstrapping in both the actor and the critic recursions, and an asymptotic analysis of convergence including stability was provided. A new method for solving two-timescale optimization that achieves faster convergence was recently proposed in \citep{pmlr-v247-zeng24a}. 

The CA algorithm was introduced in 
\citep{bhatnagar2023actorcritic} for the look-up table case. In this, the actor recursion is on the faster timescale compared to critic and the infinite horizon discounted cost criterion is considered. Asymptotic stability and almost sure convergence of the algorithm is shown. 
In our work, we present the first CA algorithm for the case of (a) function approximation and (b) the long-run average reward setting. Further, we present both -- asymptotic as well as non-asymptotic convergence analyses of the proposed scheme where we observe that our algorithm gives a better upper bound on the sample complexity as opposed to AC. We also observe that our algorithm performs on par and is in fact slightly better than the two-timescale AC algorithm.

During the past few years there has been significant research activity on finite-time analysis of various algorithms in RL. A finite-time analysis of a two-timescale AC algorithm under Markovian sampling has been conducted in \citep{wu2022finite} and a sample complexity of $\tilde{\mathcal{O}}(\epsilon^{-2.5})$ for convergence to an  $\epsilon$-approximate stationary point of the performance function has been obtained.

Finite-time analyses of a single-timescale AC algorithm have been presented in \citep{olshevsky, chen2023finitetime}.
In these algorithms, the actor and the critic recursions proceed on the same timescale but there are no proofs of  stability and almost sure convergence of the recursions. A prime reason here is that AC algorithms are based on the policy iteration procedure whereby one ideally requires convergence of the critic in between two updates of the actor. Such guarantees can usually be obtained when there is a timescale difference between the two updates.
A sample complexity of $\tilde{\mathcal{O}}(\epsilon^{-2})$ is obtained in \citep{olshevsky, chen2023finitetime} for single-timescale AC. While \citep{olshevsky} considers i.i.d sampling from the stationary distribution of the Markov chain in a discounted reward setting, \citep{chen2023finitetime} makes use of Markovian sampling and works with the average reward formulation. On the other hand, we obtain a sample complexity of ${\cal O}(\epsilon^{-(2+\delta)})$ with Markovian sampling and in the average reward setting, where $\delta>0$ can be made arbitrarily small. In the limit when $\delta=0$, one obtains a single-timescale AC algorithm for which asymptotic guarantees are not available. Thus, a major contribution of our work is to provide a sample complexity of our two-timescale CA scheme that is arbitrarily close to that of single-timescale AC but while providing theoretical assurances of asymptotic stability and almost sure convergence that single-timescale AC does not provide.   

Non-asymptotic convergence properties of two-timescale natural AC algorithm have been studied in \citep{9827586} in the look-up table case where a sample complexity of $\tilde{\mathcal{O}}(\epsilon^{-6})$ has been obtained. Finite-time analysis is helpful in finding out the optimal learning rates for different updates used in the various algorithms. Amongst other recent works, {\cite{han2024finitetimedecoupledconvergencenonlinear}, \cite{NEURIPS2022_6f6dd92b} and \cite{NEURIPS2021_096ffc29} have also provided finite-time analyses}.

Table \ref{sample-table} shows the comparison of our work with  some of these related works. In \citep{suttle2023exponentially}, a multi-level Monte-Carlo AC algorithm is analyzed with sample complexity of $\widetilde{\mathcal{O}}(\tau^{2}_{mix}\epsilon^{-2})$. However, unlike us, asymptotic stability and almost sure convergence is not shown.
It is also important to note that unlike many other variants (including the single-timescale AC algorithms), the two-timescale AC algorithm, as with our two-timescale CA algorithm, possesses asymptotic stability and almost sure convergence guarantees. 
For our algorithm, the latter properties are shown using a differential inclusions based analysis, see  Appendix~\ref{asympanalysis} for details. 

\section{The Framework and Algorithm}
\label{framework_and_algo}

In this section, we first discuss the Markov decision process (MDP) framework. We then present our two-timescale CA algorithm where we use linear function approximation for the value function estimates.

\subsection{Markov Decision Process}

We consider an MDP with finite state and action spaces that is characterised by the tuple $(S,A,P,r)$, where 
$S$ denotes the state space, 
$A$ is the action space, $P(s^{'} \vert s,a)$ is the probability of transition from state $s$ to $s^{'}$ under action $a$. Further,
$r$ denotes the single-stage reward that depends on the state $s$ and action $a$ at a given instant. Moreover, we let $\vert r(s,a) \vert \leq U_r$, $\forall s \in S,\forall a \in A$ where $U_r > 0$ is a constant.
We consider stationary randomized policies $\pi_\theta(a|s)$, $a\in A, s\in S$ parameterised by $\theta$. Our aim is to maximise the long-run average reward (with $\mu_\theta$ being the stationary distribution):
\begin{align*}
    L(\theta) :&= \lim\limits_{T \rightarrow \infty}\frac{1}{T}\sum\limits_{t=0}^{T-1}r(s_t,a_t) = E_{s \sim \mu_{\theta},a \sim \pi_{\theta}}[r(s,a)].
\end{align*}
The differential value function $V^{\theta}(s), s\in S$ is defined as (with $s_0$ being the starting state, $a_t \sim \pi_{\theta}(\cdot|s_t)$ and $s_{t+1} \sim P(\cdot| s_t,a_t)$): ${\displaystyle 
    V^{\theta}(s) = E\bigg[\sum\limits_{t=0}^{\infty}(r(s_t,a_t) - L(\theta)) |s_0 = s \bigg]}$.
The differential action-value (Q-value) function 
is defined as
\begin{align*}
Q_{\theta}(s,a)&= \mathbb{E}_{\theta}[\sum\limits_{t=0}^{\infty}(r(s_t,a_t)-L(\theta))|s_0=s,a_0=a]\\
    &\overset{\text{(i)}}
    {=}r(s,a)-L(\theta)+\mathbb{E}[V^{\theta}(s')],
\end{align*}
where the expectation in (i) is taken over $s'\sim P(\cdot|s,a)$.

The policy gradient theorem \citep{sutton1999, suttonbarto} gives the following expression for $\nabla_\theta L(\theta)$:
\begin{align*}
\nabla_{\theta}L(\theta)=\mathbb{E}_{s\sim\mu_{\theta},a\sim\pi_{\theta}}[A_{\theta}(s,a)\nabla_{\theta}\log\pi_{\theta}(a|s)],
\end{align*}
where $A_{\theta}(s,a) =Q_\theta(s,a)-V^{\theta}(s)$ denotes the advantage function.

\subsection{Function Approximation}

In order to save on the computational effort needed to find exact solutions, one often uses value function approximation techniques based on linear or nonlinear function approximation architectures. We use linear function approximators here for our theoretical results. Such approximators have been found to be viable for asymptotic analyses. For instance, see \citep{tsitsiklisroy2} for an asymptotic analysis of temporal difference learning algorithms and \citep{BHATNAGAR20092471} for an analysis of AC algorithms when linear function approximators are used in the average cost setting.
We approximate the state-value function here using a linear approximation architecture as
${\displaystyle
\widehat{V}^{\theta}(s;v)=\phi(s)^\top v}$, 
where $\phi: \mathcal{S}\rightarrow \mathbb{R}^{d_1}$ is a known feature mapping and $\theta$ is the policy parameter for the considered policy. 

\subsection{Two-Timescale Critic-Actor Algorithm}

\begin{algorithm}[tb]
   \caption{Two Timescale Critic-Actor Algorithm}
   \label{algo}
\begin{algorithmic}
   \STATE {\bfseries Input:} initial average reward parameter $L_0$, initial actor parameter $\theta_0$, initial critic parameter $v_0$, step-size $\alpha_{t}$ for actor, $\beta_{t}$ for critic and $\gamma_{t}$ for the average reward estimator.
   \STATE Draw $s_{0}$ from some initial distribution.
   \FOR {$t=0,1,2,\dots$}
    \STATE Take the action $a_t \sim \pi_{\theta_t}(\cdot|s_t)$
    \STATE Observe next state $s_{t+1} \sim P(\cdot|s_t,a_t)$ and the reward $r_t = r(s_t,a_t)$
    \STATE $L_{t+1} = L_t + \gamma_t(r_t - L_t)$
    \STATE $\delta_t = r_t - L_t + \phi(s_{t+1})^{\top} v_{t} - \phi(s_t)^{\top} v_{t}$
    \STATE $v_{t+1} = \Gamma(v_{t} + \beta_{t} \delta_t \phi(s_t))$\label{algline:critic_update}
    \STATE $\theta_{t+1} = \theta_{t} + \alpha_t \delta_t \nabla_{\theta} \log \pi_{\theta_{t}}(a_t|s_t)$\label{algline:actor_update}
\ENDFOR 
\end{algorithmic}
\end{algorithm}

Algorithm \ref{algo} presents the two-timescale CA algorithm involving linear function approximation for the critic recursion. All step-sizes satisfy the standard Robbins-Monro conditions. In addition, $\beta_t = o(\alpha_t)$ for $t \geq 0$  and $\gamma_t = K\alpha_t$ for some $K > 0$, $t \geq 0$. As a result of this, the average reward and actor updates are performed on the faster timescale compared to the critic updates. The projection operator $\Gamma(\cdot)$ has been used for the estimates of the critic. Here, for any $x\in \mathbb{R}^{d_1}$, $\Gamma(x)$ denotes the projection of $x$ to a compact and convex set $C\subset \mathbb{R}^{d_1}$. For any vector $y \in C$, we have $\Vert y \Vert \leq U_{v}$,  where $U_{v} > 0$ is a constant. As mentioned earlier, the single-stage reward is a function of the current state and action taken.

\section{Finite-Time Analysis}\label{ft}

We provide, in this section, the assumptions required and the main theoretical results for carrying out a non-asymptotic convergence analysis. We also state below the main results providing the optimal learning rate and sample complexity for the two-timescale CA algorithm. The detailed proofs are given in the appendix. Specifically, the details of the non-asymptotic analysis are provided in Appendix~\ref{finitetimeanalysis}. We further show the asymptotic convergence analysis in Appendix~\ref{asympanalysis}. 


\begin{assumption} \label{assum:bounded_feature_norm}
    The norm of each state feature is bounded by 1,  i.e., $\Vert \phi(i) \Vert \le 1$.
\end{assumption}
The above is not a restrictive assumption since the number of states $|S|$ is finite. Thus, the requirement on features can be accomplished by replacing any features $\phi(i)\in \mathbb{R}^{d_1},i\in S$ by ${\displaystyle \frac{\phi(i)}{\max_{j\in S} \phi(j)}}$. 
 This assumption is helpful in carrying out the finite time analysis of the actor and critic recursions as it helps provide suitable upper bounds for some of the terms.

\begin{assumption}
\label{assum:negative-definite}
    For all potential policy parameters $\theta$, the matrix $\Ab$ defined as under is negative definite:  
${\displaystyle
    \Ab := \EE_{s,a,s^{'}} \big[ \phi(s) \big( \phi(s^{'}) - \phi(s)\big)^{\top} \big],
}$
where $s \sim \mu_{\theta}(\cdot)$ (the stationary distribution under policy parameter $\theta$) and $a \sim \pi_{\theta}(\cdot | s), s^{'} \sim P(\cdot | s, a)$. 
Further, let $\lambda_\theta$ denote the largest eigenvalue of $\Ab$. Then $-\lambda \stackrel{\triangle}{=} \sup_\theta \lambda_\theta <0$. 
\end{assumption}

Under a given policy $\pi$, Assumption \ref{assum:negative-definite} has been shown to hold in \citep{tsitsiklisroy2} in the setting of temporal difference learning under the requirements that (a) the feature vectors are linearly independent and (b) $\Phi r \not= e$, where $e$ is the vector of all $1$'s. 
This assumption   
helps give the existence and uniqueness of $v^{*}(\theta)$ because the following equations hold: For $s \sim \mu_{\theta}(\cdot), a \sim \pi_{\theta}(\cdot | s)$,
\begin{align}
    &\Ab v^{*}(\theta) + \bbb = 0,\label{critic_conv_point}\\
    &\bbb:= \EE_{s,a,s^{'}} [(r(s,a)- L(\theta))\phi(s) ]\notag.
\end{align}
Assumption~\ref{assum:negative-definite} helps in carrying out a finite time analysis of the critic error.

\begin{assumption}[Uniform ergodicity] \label{assum:ergodicity}
     Consider a Markov chain generated as per the following: $a_t \sim \pi_{\theta}(\cdot | s_t), s_{t+1} \sim P(\cdot | s_t, a_t)$. Then there exist $b > 0$ and $k \in (0,1)$ such that:
    \begin{align*}
        d_{TV}\big(P(s_{\tau} \in \cdot | s_0 = s), \mu_{\theta}(\cdot)\big) \le b k^{\tau}, \forall \tau \ge 0, \forall s \in S.
    \end{align*}
\end{assumption}

Assumption \ref{assum:ergodicity} states that the $\tau$-step state distribution of the Markov chain under policy $\pi_\theta$ converges at a geometric rate to the stationary distribution $\mu_\theta$. 

\begin{assumption} \label{assum:policy-lipschitz-bounded}
There exist $L,B, K>0$ such that for all $s,s^{'}\in S$ and $a,a^{'}\in A$,
\begin{enumerate}
\item[(a)] $\big\|\nabla \log \pi_{\theta}(a|s) \big\| \le B$, $\forall \theta \in \RR^d$,
\item[(b)] $\big\|\nabla \log \pi_{\theta_1}(a|s) - \nabla \log \pi_{\theta_2}(a^{'}|s^{'}) \big\| \le  K \norm{\theta_1 - \theta_2}$, $\forall \theta_1,\theta_2 \in \RR^d$,
\item[(c)] $\big|\pi_{\theta_1}(a|s) - \pi_{\theta_2}(a|s) \big| \le L \norm{\theta_1 - \theta_2}$, $\forall \theta_1,\theta_2 \in \RR^d$.
\end{enumerate}
\end{assumption}

Assumptions \ref{assum:policy-lipschitz-bounded}(a) and (c) are standard in the literature on policy gradient methods, see \citep{wu2022finite}. Assumption \ref{assum:policy-lipschitz-bounded}(b) implies that the randomized policy is also $K$-smooth in the parameter $\theta$, in addition to being Lipschitz continuous (see Assumption \ref{assum:policy-lipschitz-bounded} (c)).

\begin{assumption}\label{smoothness_mu}
$\forall \theta_{1},\theta_{2} \in \RR^{d}$, $\forall s\in S$, $\exists L_{\mu}>0$ such that \\ $\Vert \nabla \mu_{\theta_1}(s) - \nabla \mu_{\theta_2}(s) \Vert \leq L_{\mu}\Vert \theta_1 - \theta_2 \Vert$.
\end{assumption}

Assumption \ref{smoothness_mu} implies that the stationary distribution $\mu_\theta$ is $L_\mu$-smooth as a function of $\theta$.
This assumption 
is required for proving smoothness of $v^{*}(\theta)$ and has been adopted in \citep{chen2023finitetime}. We provide sufficient conditions in Theorem~\ref{thm-stationary} for the verification of 
Assumption~\ref{smoothness_mu}. 

\begin{assumption}\label{V_lipschitz}
$\exists L_{v} > 0$ such that for any $s \in S$,
    \begin{align*}
        \Vert V^{\theta_1}(s) - V^{\theta_2}(s) \Vert \leq L_{v}\Vert \theta_1 - \theta_2 \Vert , \forall \theta_1,\theta_2 \in \RR^{d}.
    \end{align*}
\end{assumption}
Assumption \ref{V_lipschitz} is needed for deriving finite time bounds while proving convergence of actor.

Let $\tau_{t}$ denote the mixing time of an ergodic Markov chain. So, 
\begin{align}\label{eq:def_mixing_time}
    \tau_t & := 
    \min 
    \big \{
   m \ge 0 \mid 
    bk^{m-1} \le
    \min \{ \alpha_t, \beta_t,\gamma_t \}
    \big \},
\end{align}
where $b,k$ are defined as in Assumption \ref{assum:ergodicity}.

\subsection{Sample Complexity Results}

We provide here the sample complexity bounds that we obtain. The proofs of these results require several detailed steps that cannot be accommodated in the limited space. Hence,  we provide the complete detailed analysis in Appendix~\ref{finitetimeanalysis} while brief proof sketches of the main results are given here. 

We consider the following step-sizes: $\alpha_t = c_\alpha/(1+t)^\nu$, $\beta_t = c_\beta/(1+t)^\sigma , \gamma_t = c_\gamma/(1+t)^{\nu}$ with $0 < \nu < \sigma < 1$, $2\sigma < 3\nu$, $2\sigma - \nu < 1$ and $c_\alpha,c_\beta,c_\gamma > 0$. Thus, the actor and the average reward recursions proceed here on the same timescale but which is faster than the critic recursion. Let
\[
    \frac{c_\alpha}{c_\gamma} <  \frac{1}{2B(G + U_w) + U_{w}B},
\]
where, $G = 2(U_r + U_v)B$, $U_{w} = 2B(U_{v} +  \bar{U}_{v})$ and $\vert V^{\theta}(s) \vert \leq \bar{U}_{v} ,\forall \theta \in \RR^{d},\forall s \in S$, respectively.

\begin{theorem}[Convergence of Average reward estimate]
\label{average_reward_convergencee}
Under assumptions \ref{assum:bounded_feature_norm}, \ref{assum:ergodicity}, \ref{assum:policy-lipschitz-bounded}, \ref{V_lipschitz},
\begin{align*}
    &\sum\limits_{k=\tau_t}^{t} \mathbb{E}[(L_k - L(\theta_k))^2]
   \leq \mathcal{O}(\log^2 t \cdot t^{1-\nu}) + \mathcal{O}(t^\nu)\\
    & + 2\frac{(G + U_w)^2}{(1 - \frac{c_\alpha}{c_\gamma}U_w B)^2}\frac{c_\alpha^2}{c_\gamma^2}\sum\limits_{k=\tau_t}^{t}\mathbb{E}\Vert M(\theta_k,v_k)\Vert^2,  
\end{align*}
where, ${\displaystyle L(\theta_k) = \EE_{s \sim \mu_{\theta_k},a \sim \pi_{\theta_k},s^{'} \sim P(.|s,a) }[r(s,a)]}$ and \\
$ M(\theta_t,v_t) = E_{s_t \sim \mu_{\theta_t},a_t \sim \pi_{\theta_t},s_{t+1} \sim P(.|s_t,a_t)}[( r(s_t,a_t)- L(\theta_t)
    + \phi(s_{t+1})^{\top} v_{t} - \phi(s_t)^{\top} v_{t})\nabla \log\pi_{\theta_t}(a_t|s_t)]$.\\
\end{theorem}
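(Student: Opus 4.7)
The plan is a Lyapunov analysis of the squared error $y_t := L_t - L(\theta_t)$. Rewriting the update $L_{t+1} = L_t + \gamma_t(r_t - L_t)$ and inserting the change in $L(\theta_t)$ caused by the actor step, one obtains the one-step recursion
\begin{equation*}
y_{t+1} \;=\; (1-\gamma_t)\, y_t + \gamma_t\bigl(r_t - L(\theta_t)\bigr) - \bigl(L(\theta_{t+1})-L(\theta_t)\bigr).
\end{equation*}
Squaring and taking conditional expectations produces a contractive term of order $(1-c\gamma_t)\mathbb{E}[y_t^2]$ together with a Markov-noise cross term $\gamma_t\mathbb{E}[y_t(r_t-L(\theta_t))]$, an actor-drift cross term $-2(1-\gamma_t)\mathbb{E}[y_t(L(\theta_{t+1})-L(\theta_t))]$, and squared residuals. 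The whole plan is to absorb the two cross terms into the contraction, modulo error terms of the orders appearing in the statement.

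For the Markov-sampling bias, since $\mathbb{E}_{s\sim\mu_\theta,a\sim\pi_\theta}[r(s,a)-L(\theta)]=0$, I would control the residual by conditioning on $\mathcal{F}_{t-\tau_t}$, invoking Assumption~\ref{assum:ergodicity}, and using Assumption~\ref{assum:policy-lipschitz-bounded} together with Lipschitzness of $L(\theta)$ in $\theta$ (derivable from Assumptions~\ref{assum:policy-lipschitz-bounded} and \ref{V_lipschitz}) to bound the $\tau_t$-step drifts of $\theta$, $L$ and $y$. This gives a per-step bias of order $\tau_t(\alpha_t+\gamma_t)$. After the $\gamma_t^{-1}$-reweighting needed to convert the Lyapunov decrement into a bound on $\sum \mathbb{E}[y_t^2]$, and using $\tau_t=\mathcal{O}(\log t)$ from \eqref{eq:def_mixing_time}, these contributions sum to the stated $\mathcal{O}(\log^2 t\cdot t^{1-\nu})$ and $\mathcal{O}(t^\nu)$ terms, the latter arising from edge/initial terms multiplied by $\gamma_t^{-1}\sim t^\nu$.

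For the actor-induced drift, Lipschitzness of $L$ yields $|L(\theta_{t+1})-L(\theta_t)|\le C_L\alpha_t\|\delta_t\nabla\log\pi_{\theta_t}(a_t|s_t)\|$. I would then decompose $\delta_t\nabla\log\pi_{\theta_t}(a_t|s_t) = M(\theta_t,v_t)+\xi_t$, where $\xi_t$ is a Markov-noise term handled as above, and apply the weighted Young's inequality
\begin{equation*}
2|y_t|\cdot|L(\theta_{t+1})-L(\theta_t)| \;\le\; \gamma_t\, y_t^2 + \gamma_t^{-1}\bigl(L(\theta_{t+1})-L(\theta_t)\bigr)^2.
\end{equation*}
Part of this drift is absorbed back into the $-\gamma_t y_t^2$ contraction; matching the constants $G,U_w,B$ from the bounds on $\delta_t$ and $\nabla\log\pi_{\theta_t}$ is precisely what produces the factor $1-(c_\alpha/c_\gamma)U_wB$ (squared) in the denominator and forces the smallness condition on $c_\alpha/c_\gamma$. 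The remaining $\gamma_t^{-1}(L(\theta_{t+1})-L(\theta_t))^2$ then splits, via the $(M,\xi)$-decomposition, into a stationary piece proportional to $\alpha_t^2\gamma_t^{-1}\|M(\theta_t,v_t)\|^2 = (c_\alpha/c_\gamma)^2\gamma_t\|M(\theta_t,v_t)\|^2$ and a noise piece absorbed into the $\mathcal{O}(\log^2 t\cdot t^{1-\nu})$ budget. Summing the per-step bound and dividing by $\gamma_t$ yields the final $\tfrac{c_\alpha^2}{c_\gamma^2}\sum_{k=\tau_t}^t\mathbb{E}\|M(\theta_k,v_k)\|^2$ contribution.

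The principal obstacle is the tight bookkeeping required by the coupled three-timescale dynamics under Markovian sampling: within any mixing window of length $\tau_t$, each of $L_t,\theta_t,v_t$ evolves, and every comparison to the stationary distribution must simultaneously account for all three drifts without inflating constants. The smallness condition $c_\alpha/c_\gamma<[2B(G+U_w)+U_wB]^{-1}$ is precisely the threshold that keeps the absorption step contractive, and the technical heart of the proof is matching per-step bias orders so that neither the Markov bias nor the actor-induced drift overwhelms the $-\gamma_t y_t^2$ contraction.
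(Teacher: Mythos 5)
Your overall skeleton — the recursion for $y_t = L_t - L(\theta_t)$, the $\gamma_t^{-1}$-reweighting of the Lyapunov decrement, the $\mathcal{F}_{t-\tau_t}$-conditioning with Assumption~\ref{assum:ergodicity} for the sampling bias, and the identification of the $y_t$-dependence inside $\delta_t$ as the source of the $1-\tfrac{c_\alpha}{c_\gamma}U_wB$ factor — matches the paper's proof in Appendix~\ref{average_reward_convergence}. However, there is a genuine gap in your treatment of the actor-drift cross term. You replace $2|y_t|\,|L(\theta_{t+1})-L(\theta_t)|$ by $\gamma_t y_t^2 + \gamma_t^{-1}\bigl(L(\theta_{t+1})-L(\theta_t)\bigr)^2$ via Young's inequality and then split the square using $\delta_t\nabla\log\pi_{\theta_t}(a_t|s_t)=M(\theta_t,v_t)+\xi_t$. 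After the $\gamma_t^{-1}$-reweighting, the noise piece contributes $\gamma_t^{-2}\alpha_t^2\|\xi_t\|^2=(c_\alpha/c_\gamma)^2\|\xi_t\|^2$ per step. But $\xi_t$ is a single-sample fluctuation whose conditional \emph{mean} is small under mixing while its magnitude is $\Theta(1)$; squaring it destroys the cancellation, so this piece sums to $\Theta(t)$, not $\mathcal{O}(\log^2 t\cdot t^{1-\nu})$ as you claim. Since the theorem's right-hand side is $o(t)$ (note $\sum_k\mathbb{E}\|M(\theta_k,v_k)\|^2=\mathcal{O}(t^\nu)+\mathcal{O}(\log^2 t\cdot t^{1-\nu})$ by Theorem~\ref{actor_convergence}), your bound would be vacuous.

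The paper avoids this by never squaring the cross term. It writes $I_3=\sum_k\gamma_k^{-1}\mathbb{E}[y_k(L(\theta_k)-L(\theta_{k+1}))]$, applies the smoothness bound of Lemma~\ref{L_smoothness} to peel off a $\langle\nabla L(\theta_k),\theta_k-\theta_{k+1}\rangle$ term, and then splits $\delta_k\nabla\log\pi_{\theta_k}$ three ways: (i) the $M(\theta_k,v_k)$ piece is kept as a \emph{cross} term and bounded by Cauchy--Schwarz over the whole sum, $\tfrac{c_\alpha}{c_\gamma}(G+U_w)(\sum_k\mathbb{E}y_k^2)^{1/2}(\sum_k\mathbb{E}\|M(\theta_k,v_k)\|^2)^{1/2}$, which after moving the $y^2$-sum to the left and applying the squaring technique yields exactly the constant $2\tfrac{(G+U_w)^2}{(1-\frac{c_\alpha}{c_\gamma}U_wB)^2}\tfrac{c_\alpha^2}{c_\gamma^2}$; (ii) the zero-mean fluctuation is kept as the cross term $\mathbb{E}[y_k\langle W(v_k,\theta_k),\,\cdot\,\rangle]$ and bounded via the auxiliary-chain decomposition of Lemma~\ref{omega}, which is where the mixing cancellation is actually used; and (iii) the systematic piece $-y_kE_{\theta_k}[\nabla\log\pi_{\theta_k}]$ (arising from $L_k$ versus $L(\theta_k)$ inside $\delta_k$) produces the $\tfrac{c_\alpha}{c_\gamma}U_wB\sum_k\mathbb{E}[y_k^2]$ term that is moved to the left-hand side. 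In your write-up this third piece is lumped into $\xi_t$ and labelled ``Markov noise,'' which it is not; and the Cauchy--Schwarz-plus-squaring step, which is what actually produces the stated constant and the $\sum\mathbb{E}\|M\|^2$ term, is absent. To repair the proposal you would need to abandon the Young's-inequality split of the cross term and handle the three components separately as above.
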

\begin{proof}
See Appendix~\ref{appendix}.
\end{proof}


\begin{theorem}[Convergence of actor]\label{actor_convergence}Under assumptions \ref{assum:bounded_feature_norm}, \ref{assum:ergodicity}, \ref{assum:policy-lipschitz-bounded}, \ref{V_lipschitz},
{\small
    \begin{align*}
     \frac{1}{(1 + t - \tau_t)}\sum\limits_{k=\tau_t}^{t}E\Vert  M(\theta_k,v_k)\Vert^2
    = \mathcal{O}(t^{\nu - 1}) + \mathcal{O}(\log^2 t \cdot t^{-\nu}).
 \end{align*}
}
\end{theorem}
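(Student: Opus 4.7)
My plan is to carry out a smooth-ascent analysis of $L(\theta)$ along the actor trajectory and then rearrange and sum to isolate $\sum\|M(\theta_k,v_k)\|^2$. As a preliminary I would establish $L_L$-smoothness of $L$ using Assumption~\ref{assum:policy-lipschitz-bounded}(a,b), Assumption~\ref{smoothness_mu}, and $|r|\le U_r$, by differentiating the identity $L(\theta)=\sum_{s,a}\mu_\theta(s)\pi_\theta(a|s)r(s,a)$ and bounding the Lipschitz constant of $\nabla L(\theta)$. Let $g_k=\delta_k\nabla_\theta\log\pi_{\theta_k}(a_k|s_k)$ denote the stochastic actor update direction, so $\theta_{k+1}-\theta_k=\alpha_k g_k$; the bound $|\delta_k|\le 2(U_r+U_v)$ together with $\|\nabla\log\pi\|\le B$ yields $\|g_k\|\le G$ uniformly.

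The ascent form of the descent lemma gives
\begin{align*}
L(\theta_{k+1}) \ge L(\theta_k) + \alpha_k\langle \nabla L(\theta_k),\, g_k\rangle - \tfrac{L_L \alpha_k^2}{2}\|g_k\|^2.
\end{align*}
Next I would take conditional expectations relative to the natural filtration $\tau_t$ steps in the past so that Assumption~\ref{assum:ergodicity} can be invoked. Using the mixing bound $bk^{\tau_t-1}\le\alpha_t$ together with the drift estimate $\|\theta_k-\theta_{k-\tau_t}\|=\mathcal{O}(\tau_t\alpha_{k-\tau_t})$, the even smaller drift of $v_k$ and $L_k$ on their respective timescales, and Assumption~\ref{V_lipschitz}, the conditional mean of $g_k$ equals $M(\theta_k,v_k)$ up to a Markovian bias of order $\tau_t\alpha_{k-\tau_t}$. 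Decomposing
\begin{align*}
\langle \nabla L(\theta_k),\, M(\theta_k,v_k)\rangle = \|M(\theta_k,v_k)\|^2 + \langle \nabla L(\theta_k) - M(\theta_k,v_k),\, M(\theta_k,v_k)\rangle
\end{align*}
and applying Young's inequality yields
\begin{align*}
\langle \nabla L(\theta_k),\, M(\theta_k,v_k)\rangle \ge \tfrac{1}{2}\|M(\theta_k,v_k)\|^2 - \tfrac{1}{2}\|\nabla L(\theta_k) - M(\theta_k,v_k)\|^2.
\end{align*}
Since $\nabla L(\theta_k)=M(\theta_k,v^{*}(\theta_k))$ up to a fixed TD linear approximation error, the residual $\|\nabla L(\theta_k)-M(\theta_k,v_k)\|$ is controlled by a constant multiple of $\|v_k-v^{*}(\theta_k)\|$ (using $\|\phi\|\le 1$ and $\|\nabla\log\pi\|\le B$), and similarly the appearance of $L_k$ in place of $L(\theta_k)$ inside $\delta_k$ produces a term proportional to $|L_k-L(\theta_k)|$.

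Dividing by $\alpha_k$, summing $k=\tau_t,\ldots,t$, and applying Abel summation to the telescoping piece $\sum_k(1/\alpha_k)(L(\theta_{k+1})-L(\theta_k))$: because $|L(\theta_k)|\le U_r$ and $1/\alpha_{k-1}-1/\alpha_k=\mathcal{O}(k^{\nu-1})$, this term is $\mathcal{O}(t^\nu)$, which after dividing by $1+t-\tau_t\sim t$ yields the $\mathcal{O}(t^{\nu-1})$ term in the statement. The quadratic-in-step residual contributes $\mathcal{O}(t^{-\nu})$; the Markovian-bias term contributes $\mathcal{O}(\log^2 t\cdot t^{-\nu})$ (one $\log$ from $\tau_t=\mathcal{O}(\log t)$ and another from a related Abel step on the associated sum); and the critic error $\sum\|v_k-v^{*}(\theta_k)\|^2$, taken from the slower-timescale critic finite-time bound, is absorbed at order $\mathcal{O}(\log^2 t\cdot t^{-\nu})$.

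The principal obstacle is closing the coupled system: the bound on $\sum\|M(\theta_k,v_k)\|^2$ depends through $\delta_k$ on $\sum(L_k-L(\theta_k))^2$, and Theorem~\ref{average_reward_convergencee} in turn controls the latter using $\sum\|M(\theta_k,v_k)\|^2$ as a feedback term with coefficient proportional to $c_\alpha^2/c_\gamma^2$. The step-size condition $c_\alpha/c_\gamma<1/(2B(G+U_w)+U_w B)$ stated just before the theorem makes the feedback coefficient strictly less than $1$, so the two inequalities can be solved simultaneously for $\sum\|M(\theta_k,v_k)\|^2$ and the claimed rate emerges.
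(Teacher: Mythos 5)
Your overall skeleton (smoothness of $L$, the ascent lemma along the actor iterates, Markov-noise control via the mixing time $\tau_t$, Abel summation on the telescoping $\sum_k(L(\theta_{k+1})-L(\theta_k))/\alpha_k$ piece, and closing the loop with Theorem~\ref{average_reward_convergencee} through the step-size condition on $c_\alpha/c_\gamma$) matches the paper's proof. The genuine gap is in your treatment of the cross term. You write $\langle\nabla L(\theta_k),M(\theta_k,v_k)\rangle\ge\tfrac12\Vert M(\theta_k,v_k)\Vert^2-\tfrac12\Vert\nabla L(\theta_k)-M(\theta_k,v_k)\Vert^2$ and then claim the residual is controlled by $\Vert v_k-v^{*}(\theta_k)\Vert$, to be absorbed using ``the slower-timescale critic finite-time bound.'' This fails for two reasons. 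First, it is circular: in this algorithm the critic is the \emph{slower} recursion, and Theorem~\ref{critic_convergence} is itself proved \emph{using} Theorems~\ref{actor_convergence} and~\ref{average_reward_convergencee} (see Figure~1 and the discussion following Theorem~\ref{critic_convergence}); no bound on $\sum_k E\Vert v_k-v^{*}(\theta_k)\Vert^2$ is available at this stage of the argument. Second, even granting such a bound, the residual $\nabla L(\theta_k)-M(\theta_k,v_k)=W(v_k,\theta_k)=E_{\theta_k}[(V^{\theta_k}(s')-v_k^{\top}\phi(s')-V^{\theta_k}(s)+v_k^{\top}\phi(s))\nabla\log\pi_{\theta_k}(a|s)]$ contains the irreducible linear function-approximation error $V^{\theta}-\Phi v^{*}(\theta)$, which does not vanish; hence $\sum_{k=\tau_t}^{t}\Vert\nabla L(\theta_k)-M(\theta_k,v_k)\Vert^2=\Omega(t)$ in general, and after dividing by $1+t-\tau_t$ your bound degenerates to a constant rather than the claimed $\mathcal{O}(t^{\nu-1})+\mathcal{O}(\log^2 t\cdot t^{-\nu})$.

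The paper avoids both problems by never taking the norm of this residual. It splits $\langle W(v_k,\theta_k),M(\theta_k,v_k)\rangle$ into a Markov-noise part $\Xi(O_k,\theta_k,v_k)$ (handled by the mixing argument of Lemma~\ref{Xi_actor}) and a sample-path term $\langle(V^{\theta_k}(s_{k+1})-\phi(s_{k+1})^{\top}v_k-V^{\theta_k}(s_k)+\phi(s_k)^{\top}v_k)\nabla\log\pi_{\theta_k}(a_k|s_k),M(\theta_k,v_k)\rangle$, which is made to telescope through the auxiliary sequence $Q_k=\alpha_k\langle(V^{\theta_k}(s_k)-\phi(s_k)^{\top}v_k)\nabla\log\pi_{\theta_k}(a_k|s_k),M(\theta_k,v_k)\rangle$; the leftover increments are of order $\Vert\theta_{k+1}-\theta_k\Vert=\mathcal{O}(\alpha_k)$ (via Assumption~\ref{V_lipschitz}) and $\Vert v_{k+1}-v_k\Vert=\mathcal{O}(\beta_k)$, so only the smallness of the \emph{increments} of the critic is needed, never the smallness of the critic error itself. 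This is also why, as the paper remarks, no function-approximation-error term appears in the final bounds. You would need to replace your Young's-inequality step with this telescoping device (or an equivalent one) for the proof to go through; your final paragraph on resolving the mutual coupling with the average-reward error via $c_\alpha/c_\gamma<1/(2B(G+U_w)+U_wB)$ and the squaring technique is consistent with the paper.
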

\begin{proof}
See Appendix~\ref{appendix}.
\end{proof}
{From Lemma 4 of \cite{BHATNAGAR20092471}, $M(\theta_k,v_k)$ equals the sum of the gradient of average reward and an error term that depends on the function approximator of the critic. From Theorem \ref{actor_convergence}, convergence is to the stationary points of a function whose gradient is this sum.} 




\begin{figure}
   \vskip 0.2in
\begin{center}
\centerline{\includegraphics[width=0.5\columnwidth]{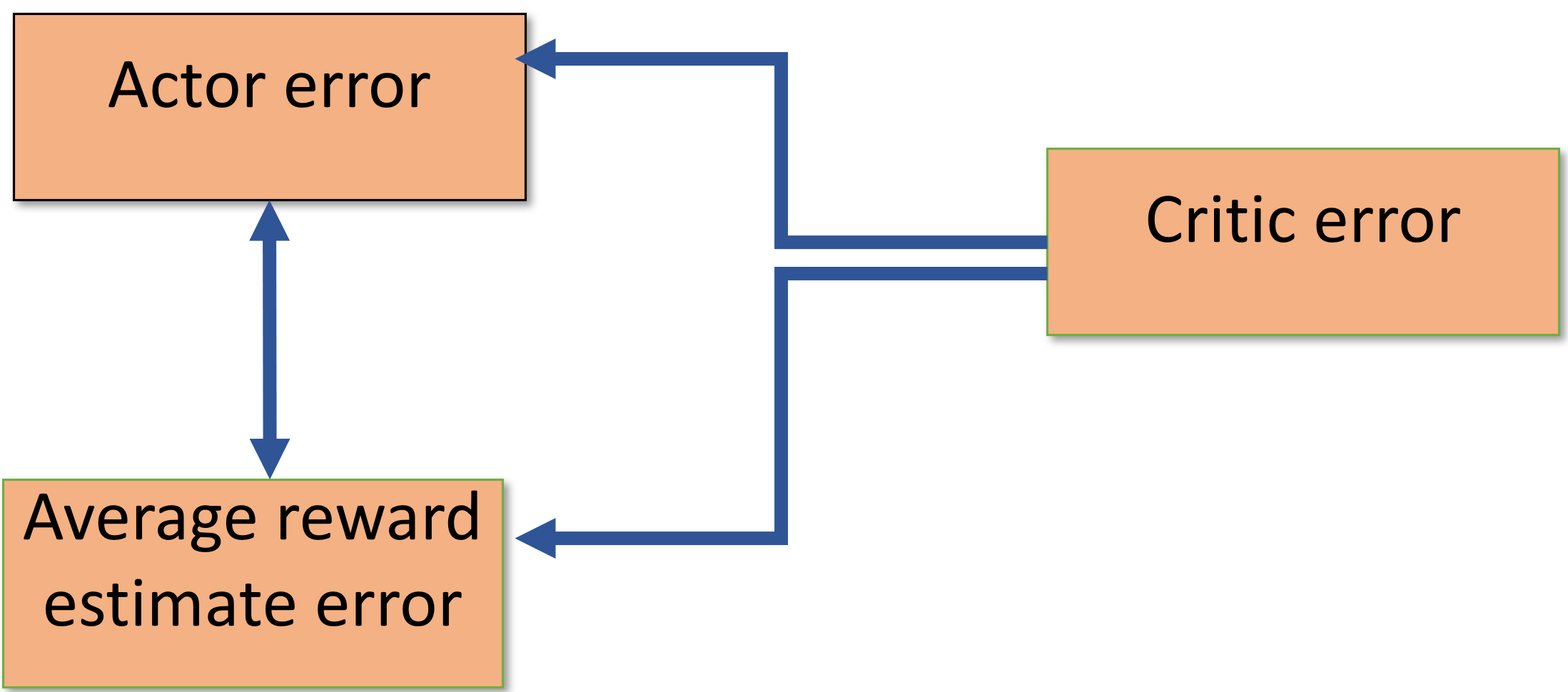}}
    \caption{Dependency of errors among the actor, critic and average reward estimate.}
    \label{fig:enter-label}
\end{center}
\vskip -0.2in
\end{figure}

\begin{theorem}[Convergence of critic] \label{critic_convergence}Under assumptions \ref{assum:bounded_feature_norm}, \ref{assum:negative-definite}, \ref{assum:ergodicity}, \ref{assum:policy-lipschitz-bounded}, \ref{smoothness_mu}, \ref{V_lipschitz},
    \begin{align*}
    &\frac{1}{1+t-\tau_t}\sum_{k=\tau_t}^{t}E\Vert v_k - v^{*}(\theta_k) \Vert^2\\
    &= \mathcal{O}(\log^2 t \cdot t^{ \sigma - 2\nu})
    + \mathcal{O}(t^{2\sigma - \nu - 1}) +\mathcal{O}(\log^2 t \cdot t^{-3\nu + 2\sigma}),
\end{align*}
where $v^{*}(\theta_k)$ is as defined in \cref{critic_conv_point}.
\end{theorem}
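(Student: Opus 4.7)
The plan is to analyze $\mathbb{E}\|z_k\|^2$ where $z_k := v_k - v^*(\theta_k)$, treating the critic as a stochastic approximation scheme with a moving target $v^*(\theta_k)$ that drifts under the faster actor iterates. A preliminary step is to establish Lipschitz continuity of $\theta \mapsto v^*(\theta)$: since $v^*(\theta) = -\Ab(\theta)^{-1}\bbb(\theta)$ is well defined thanks to Assumption~\ref{assum:negative-definite}, and since both $\Ab(\theta)$ and $\bbb(\theta)$ depend smoothly on $\theta$ through $\mu_{\theta}$ and $\pi_{\theta}$ (via Assumptions~\ref{assum:policy-lipschitz-bounded} and~\ref{smoothness_mu}), one obtains $\|v^*(\theta)-v^*(\theta')\| \le L_{v^*}\|\theta-\theta'\|$. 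Combined with the actor update, this yields $\|\Delta_k\| := \|v^*(\theta_{k+1})-v^*(\theta_k)\| = \mathcal{O}(\alpha_k)$.

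For the main recursion, non-expansiveness of $\Gamma$ (under the standard hypothesis $v^*(\theta)\in C$) gives $\|z_{k+1}\|^2 \le \|z_k + \beta_k \delta_k \phi(s_k) - \Delta_k\|^2$. Expanding the square and using $\Ab(\theta_k) v^*(\theta_k)+\bbb(\theta_k)=0$, the stationary conditional expectation of $\delta_k \phi(s_k)$ equals $\Ab(\theta_k)z_k$ plus a Markov-sampling bias and a bias $(L(\theta_k)-L_k)\phi(s_k)$ arising from the average-reward estimator. Assumption~\ref{assum:negative-definite} produces the contractive drift $z_k^{\top}\Ab(\theta_k) z_k \le -\lambda\|z_k\|^2$; Young's inequality absorbs the cross term with $\Delta_k$; and the standard shift-of-index trick (conditioning $\tau_k$ steps earlier and invoking Assumption~\ref{assum:ergodicity}) controls the Markov noise. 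Putting these together yields a one-step recursion of the form
\begin{align*}
\mathbb{E}\|z_{k+1}\|^2 &\le (1-\lambda\beta_k)\mathbb{E}\|z_k\|^2 + C_1\beta_k^2 + C_2\,\alpha_k^2/\beta_k \\
&\quad + C_3\,\beta_k\tau_k(\alpha_{k-\tau_k}+\beta_{k-\tau_k}) + C_4\,\beta_k\,\mathbb{E}(L_k-L(\theta_k))^2,
\end{align*}
where the last term ties the critic analysis back to Theorem~\ref{average_reward_convergencee} and $\tau_k = \mathcal{O}(\log(1/\beta_k))$ produces the $\log^2 t$ factors.

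Telescoping from $\tau_t$ to $t$ gives $\sum_{k=\tau_t}^{t}\beta_k\,\mathbb{E}\|z_k\|^2 \le \frac{1}{\lambda}\bigl(\mathbb{E}\|z_{\tau_t}\|^2 + \sum \epsilon_k\bigr)$; removing the $\beta_k$ weight (using $\beta_k\ge\beta_t$) and dividing by $1+t-\tau_t$ converts this into the desired average. The three error terms in the final bound then arise, respectively, from the moving-target contribution $\sum \alpha_k^2/\beta_k \asymp t^{1-2\nu+\sigma}$, the step-size contribution $\sum \beta_k^2$ (vanishing under the condition $2\sigma-\nu<1$), and the Markov-noise cross term $\sum \beta_k\tau_k\alpha_{k-\tau_k}$ (vanishing under $2\sigma<3\nu$), while the $L_k$-bias term is absorbed using the rate from Theorem~\ref{average_reward_convergencee}. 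The hard part will be the joint treatment of the Markov bias and the moving target: the shift-of-index comparison replaces $(v_k,\theta_k)$ with $(v_{k-\tau_k},\theta_{k-\tau_k})$, so one must bound the drifts $\|v_k-v_{k-\tau_k}\|$ and $\|\theta_k-\theta_{k-\tau_k}\|$ without reintroducing $\|z_k\|^2$ with a coefficient that destroys the $(1-\lambda\beta_k)$ contraction, and it is precisely to balance these drift orders against the contraction rate that the step-size restrictions $0<\nu<\sigma<1$, $2\sigma<3\nu$, and $2\sigma-\nu<1$ are imposed.
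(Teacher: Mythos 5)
Your skeleton matches the paper's proof: the same $z_k=v_k-v^{*}(\theta_k)$, the contraction supplied by Assumption~\ref{assum:negative-definite}, Lipschitz continuity and smoothness of $\theta\mapsto v^{*}(\theta)$ (the paper's Lemmas~\ref{ae3} and~\ref{ae4}), non-expansiveness of the projection, a shift-of-index mixing argument for the Markov noise, and a coupling back to the average-reward error. The gap is in the one step that makes the critic-slower analysis genuinely hard: the first-order drift of the target. You absorb $2\langle z_k,\,v^{*}(\theta_k)-v^{*}(\theta_{k+1})\rangle$ by Young's inequality using only $\|v^{*}(\theta_{k+1})-v^{*}(\theta_k)\|=\mathcal{O}(\alpha_k)$, which places a term of order $\alpha_k^2/\beta_k$ in the one-step recursion. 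Your telescoping leaves $\sum_k\beta_k\mathbb{E}\|z_k\|^2$ on the left, so the error sum must still be divided by $\beta_t$ before averaging, and this term contributes
\[
\frac{1}{\beta_t(1+t-\tau_t)}\sum_{k=\tau_t}^{t}\frac{\alpha_k^2}{\beta_k}\;\asymp\;t^{\sigma}\cdot t^{-1}\cdot t^{1+\sigma-2\nu}\;=\;t^{2\sigma-2\nu},
\]
which never vanishes because $\sigma>\nu$ is forced by the timescale separation; in particular it cannot produce the claimed $t^{\sigma-2\nu}$ rate. (Your closing accounting, ``$\sum\alpha_k^2/\beta_k\asymp t^{1-2\nu+\sigma}$, divide by $t$,'' silently drops the $1/\beta_t$ that your own telescoping requires; only the second-order term $\|v^{*}(\theta_k)-v^{*}(\theta_{k+1})\|^2$, entering without the extra $1/\beta_k$ from Young, yields $t^{\sigma-2\nu}$.)

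The paper never applies Young's inequality to this cross term. It Taylor-expands $v^{*}(\theta_k)-v^{*}(\theta_{k+1})=-(\nabla v_k^{*})^{T}(\theta_{k+1}-\theta_k)+\mathcal{O}(\|\theta_{k+1}-\theta_k\|^2)$, substitutes the actor update $\theta_{k+1}-\theta_k=\alpha_k\delta_k\nabla\log\pi_{\theta_k}(a_k|s_k)$, and splits the first-order piece into a Markov-noise part $\Psi$ (Lemma~\ref{psi_bound}) plus mean parts proportional to $M(\theta_k,v_k)$ and $L_k-L(\theta_k)$. These are bounded via Cauchy--Schwarz across the sum as $\sqrt{\sum\mathbb{E}\|z_k\|^2}\sqrt{\sum(\alpha_k^2/\beta_k^2)\mathbb{E}\|M(\theta_k,v_k)\|^2}$ (and similarly for the $L$-error), and then the squaring technique together with Theorem~\ref{actor_convergence} --- which your proposal never invokes --- supplies the decay $\sum_k\mathbb{E}\|M(\theta_k,v_k)\|^2=\mathcal{O}(t^{\nu})+\mathcal{O}(\log^2 t\cdot t^{1-\nu})$ that converts the otherwise non-vanishing $t^{2\sigma-2\nu}$ into $t^{2\sigma-\nu-1}+\log^2 t\cdot t^{2\sigma-3\nu}$. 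This is also the true source of the step-size restrictions: $2\sigma-\nu<1$ and $2\sigma<3\nu$ come from these two coupling terms, not from $\sum\beta_k^2$ or from the Markov cross term (both of which vanish after normalization with no condition at all), so your attribution of the three error terms and the two conditions does not survive the correct bookkeeping.
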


\textit{Proof sketch.}

We denote $z_t := v_t - v^{*}(\theta_t)$. After expanding $\Vert z_t \Vert^2$ and using Assumption \ref{assum:negative-definite}, we get an upper bound for $\Vert z_t \Vert^2$ as:
\begin{align*}
    &\Vert z_{t+1} \Vert^2 \\
    &\leq \Vert z_t \Vert^2 + 2\beta_t\langle z_t ,  \delta_t\phi(s_t) - E_{\theta_t}[\delta_t\phi(s_t)]\rangle
     -
    2\beta_t\lambda\Vert z_t\Vert^2\\
    &\qquad +
    2\langle z_t , v^{*}(\theta_t) - v^{*}(\theta_{t+1}) \rangle\\
    &\qquad+ 2\beta_t^2\delta_t^2\Vert \phi(s_t)\Vert^2 + 2\Vert v^{*}(\theta_t) - v^{*}(\theta_{t+1})  \Vert^2.
\end{align*}
We then rearrange the terms and take expectation of the summation from $\tau_t$ to $t$, to get 
\begin{align*}
    & \lambda\sum_{k=\tau_t}^{t}E\Vert z_k \Vert^2 \leq \underbrace{\sum_{k=\tau_t}^{t}\frac{1}{2\beta_k}E[\Vert z_k \Vert^2 - \Vert z_{k+1} \Vert^2] }_{I_1}\\
     & + \underbrace{\sum_{k=\tau_t}^{t}E[\langle z_k ,  \delta_t\phi(s_k) - E_{\theta_k}[\delta_k\phi(s_k)]\rangle]}_{I_2}\\
     &+ \underbrace{\sum_{k=\tau_t}^{t}\frac{1}{\beta_k}E\langle z_k , v^{*}(\theta_k) - v^{*}(\theta_{k+1}) + (\nabla v_k^*)^T(\theta_{k+1} - \theta_k)\rangle}_{I_3} \\
     &+\underbrace{\sum_{k=\tau_t}^{t} \frac{1}{\beta_k}E\langle z_k , (\nabla v_k^*)^T(\theta_{k} - \theta_{k+1}) \rangle}_{I_4}\\ &+\underbrace{\sum_{k=\tau_t}^{t}\beta_kE[\delta_k^2\Vert \phi(s_k)\Vert^2]}_{I_5}+ \underbrace{\sum_{k=\tau_t}^{t}\frac{1}{\beta_k}E\Vert v^{*}(\theta_k) - v^{*}(\theta_{k+1})  \Vert^2}_{I_6},
\end{align*}
where $-\lambda = \sup_\theta \lambda_{\theta}$, see \cref{assum:negative-definite}.
After analysing the terms $I_1, \ldots, I_6$, we get the desired result. Please refer to  \cref{critic_convergence_proof} for the details.

From Theorems \ref{average_reward_convergencee}, \ref{actor_convergence} and  \ref{critic_convergence}, it is clear that (as also shown in Figure 1), the critic error depends on actor error and the average reward estimate error. Moreover, actor error and average reward estimate error are dependent. Hence, Theorem \ref{critic_convergence} relies on the results of Theorems \ref{actor_convergence} and \ref{average_reward_convergencee}.
From Theorem \ref{actor_convergence}, we can observe that $E\Vert M(\theta_k,v_k)\Vert^2 \rightarrow 0$ as $k \rightarrow \infty$.
Now as actor recursions proceed  on the faster timescale as compared to the critic, the latter appears to be quasi-static to the actor, cf.~Chapter 6 of \citep{borkar-SA}. Hence, we can say that from the timescale of the actor recursion, $v_t = v , \forall t \geq 0$.
Therefore the point of convergence of the actor $\theta_t$ will be $\theta(v)$ such that
\begin{align*}
     &E_{\theta(v)}[( r(s,a)- L(\theta(v))  + \phi(s^{'})^{\top} v\\
     &\qquad - \phi(s)^{\top} v)\nabla \log\pi_{\theta(v)}(a|s)]
     = 0.
 \end{align*}
Now since the critic is on the slower timescale compared to the actor, $\theta_t$ tracks $\theta(v_t)$ at time instant $t$ when viewed from the timescale of the critic. Moreover from Theorem \ref{critic_convergence}, we have $\|v_k-v^{*}(\theta_k)\| \rightarrow 0$ as $k \rightarrow \infty$. Hence, we can conclude that $v_k$ converges to a point $\omega$ such that $\omega - v^{*}(\theta(\omega)) = 0$.

Optimizing over the values of $\nu$ and $\sigma$ in theorem \ref{critic_convergence}, we have $\nu = 0.5$ and $\sigma = 0.5 + \beta$, where $\beta >0$ can be made arbitrarily close to zero. Hence we have  the following: 
\begin{align*}
    \frac{1}{1+t-\tau_t}\sum_{k=\tau_t}^{t}E\Vert z_k \Vert^2 &= \mathcal{O}(\log^2 t \cdot t^{(2\beta - 0.5)})
\end{align*}
Therefore in order for the mean squared error of the critic to be upper bounded by $\epsilon$, namely,

\begin{align*}
     \frac{1}{1+t-\tau_t}\sum_{k=\tau_t}^{t}E\Vert z_k \Vert^2  =  \mathcal{O}(\log^2 T \cdot T^{(2\beta - 0.5)}) \leq \epsilon,
\end{align*}
we need to set { $T = \tilde{\mathcal{O}}(\epsilon^{-(2 +\delta)})$ where $\delta >0$ can be made arbitrarily close to zero}. For instance, $\nu=0.5$ and $\sigma=0.51$ gives a sample complexity $\tilde{\mathcal{O}}(\epsilon^{-2.08})$. We refer the reader to \cref{critic_convergence_proof} for the detailed analysis.

\begin{remark}
    The analysis of single time-scale AC cannot be easily applied here. While carrying out the finite time analysis, we are focusing on finding the upper bound on various terms that should approach zero as time tends to infinity. If we apply the analysis of single time-scale AC  to that of CA, some terms will not have such upper bounds due to the difference in timescales.
\end{remark}

\begin{figure*}
   \vskip 0.2in
\begin{center}
\centerline{\includegraphics[scale = 0.2]{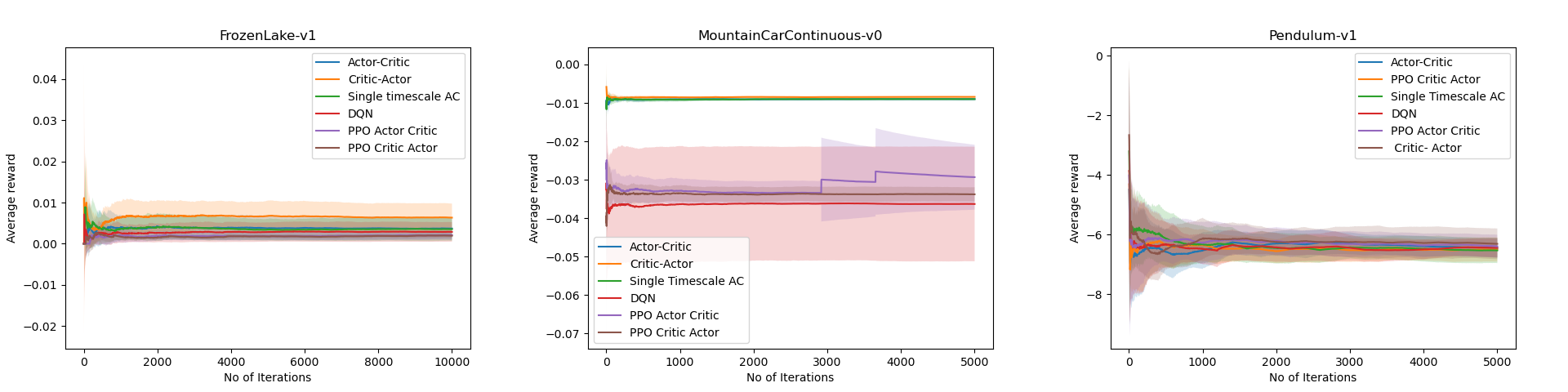}}
    \caption{Comparison of Critic-Actor with few other algorithms }
    \label{fig:critic_actor_comparision}
\end{center}
\vskip -0.2in
\end{figure*}

{\small 
\begin{table*}
  \centering
  \caption{Comparison of Critic-Actor with different algorithms in terms of average reward}
  \begin{tabular}{|p{2 cm}|p{2 cm}|p{2 cm}|p{2 cm}|p{2 cm}|p{2 cm}| p{2 cm}|}
    \hline
    \textbf{Environment} & \textbf{Critic-Actor} & \textbf{Actor-Critic} & \textbf{DQN} & \textbf{PPO AC} & \textbf{PPO CA} & \textbf{Single Timescale AC}\\
    \hline
    Frozen Lake & $0.00633 \pm 0.0034$  & $0.0036 \pm 0.0027$ & $0.0028 \pm 0.0023$  & $0.00207 \pm 0.0009$ & $0.00194 \pm 0.0005$ & $0.0035 \pm 0.0028$\\ \hline
    Pendulum & $-6.30 \pm 0.41$ & $-6.45 \pm 0.324$ & $-6.45 \pm 0.316$ & $-6.39 \pm 0.38$ & $-6.44 \pm 0.51$ & $-6.53 \pm 0.42$\\ \hline
    Mountain Car Continuous & $-0.0084 \pm 0.0001$  & $-0.009 \pm 0.0002$ & $-0.036 \pm 0.014$ & $-0.029 \pm 0.0084$ & $-0.0337 \pm 0.0018$ & $ -0.009 \pm 0.0002$\\
    \hline
  \end{tabular}
  \label{tab:experiment}
\end{table*}}

\section{Asymptotic Convergence Analysis}

We first note that some assumptions needed for the finite-time analysis are not required for the asymptotic convergence analysis.
In particular, we do not need Assumption~\ref{assum:ergodicity} on the exponential mixing of Markov noise. Our differential inclusions based asymptotic analysis that we present, unlike many other references that assume i.i.d sampling from the stationary distribution, is powerful enough to carry through under Assumptions~\ref{assum:negative-definite}, \ref{assum:policy-lipschitz-bounded} as well as Assumptions \ref{assum-ergodic-chain} and \ref{assum:ss} below. Let $\theta$ take values in a compact set $D\subset\mathbb{R}^{d_2}$.
\begin{assumption}
\label{assum-ergodic-chain}
The Markov chain $\{s_t\}$ under any policy $\pi^\theta$ is ergodic for any fixed $\theta\in D$.
\end{assumption}

This assumption is routinely made for analysis of RL algorithms with Markov noise \citep{tsitsiklisroy2, konda_onactorcritic, BHATNAGAR20092471} and guarantees existence of a unique stationary distribution $\mu_\theta$ for any fixed $\theta \in D$. We shall replace here the stronger requirement in Assumption~\ref{assum:ergodicity} by Assumption~\ref{assum-ergodic-chain}.

\begin{assumption}
\label{assum:ss}
The step-size sequences $\{\alpha_t\}$, $\{\beta_t\}$ and $\{\gamma_t\}$ satisfy the following conditions:
\begin{itemize}
\item[(i)] $\alpha_t,\beta_t,\gamma_t >0$ for all $t$ with $\gamma_t = K\alpha_t$ for some $K>0$.
\item[(ii)] ${\displaystyle \sum_t \alpha_t = \sum_t \beta_t =\infty}$.
\item[(iii)] ${\displaystyle \sum_t (\alpha_t^2+\beta_t^2) <\infty}$.
\end{itemize}
\end{assumption}

For asymptotic convergence, we first analyse in the appendix, CA for the average reward objective, with function approximation, by incorporating a projection on the actor in addition to the critic update. Subsequently, we remove the projection on the critic update and prove the asymptotic stability and convergence of the algorithm. This algorithm is then similar to the standard AC algorithms that have been well-studied in the literature, where also one projects the actor but not the critic, cf.~\cite{BHATNAGAR20092471}, except that now the time scales of the two recursions are reversed. We refer the reader to Appendix~\ref{appendix} for the detailed analysis.

We prove the stability and convergence of our two-timescale CA algorithm by proving that the critic recursion asymptotically tracks a compact connected internally chain transitive invariant set of an associated differential inclusion (DI) \citep{aubin, benaim-inclusions}. A DI-based analysis is a generalization of the ODE approach to stochastic approximation and is necessitated because we allow for multiple local maxima for the actor-recursion for any given critic update. In the context of AC or CA algorithms, ours is the first analysis that incorporates this level of sophistication and generality. 
 
The critic update then takes the following form (see Appendix~\ref{appendix} for details of the derivation):
\begin{equation}
\label{v1-re10_mainpaper}
v_{t+1} = v_t +\beta_t (y_t + Y_t +\kappa_t),
\end{equation}
where ${\displaystyle
y_t = \sum_{s} \mu_{\theta_t}(s) \sum_a\pi^{\theta_t}(s,a)(R(s,a) - L^{\theta_t} - v_t^T \phi(s)}$ 
${\displaystyle + v_t^T \sum_{s'} p(s,a,s')\phi(s'))\phi(s),
}$
 \[Y_{t} = -E[\delta_t\phi(s_t)|{\cal F}_2(t)]+\delta_t \phi(s_t),\] 
${\displaystyle \kappa_t = E[(R(s_t,a_t)-L^{\theta_t}+ v_t^T \sum_{s_{t+1}} p(s_t,a_t,s_{t+1})\phi(s_{t+1})}$
$- v_t^T \phi(s_t))\phi(s_t)|\mathcal{F}_2(t)] - y_t$. 
\begin{theorem}[Stability of the Critic Recursion]
\label{di-stability}
Under Assumptions~\ref{assum:negative-definite}, \ref{assum:policy-lipschitz-bounded}, \ref{assum-ergodic-chain} and \ref{assum:ss}, the recursion (\ref{v1-re10_mainpaper}) remains uniformly bounded almost surely, i.e., $\sup_{n\rightarrow\infty} \|v_n\| <\infty$, w.p.1
\end{theorem}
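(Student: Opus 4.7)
The plan is to invoke a Borkar--Meyn style stability criterion adapted to two-timescale stochastic approximation with Markov noise and differential inclusions (cf.~the DI framework of \citep{benaim-inclusions}). Writing the deterministic drift as $y_t = \Ab(\theta_t) v_t + \bbb(\theta_t)$ with $\Ab(\theta)$ as in Assumption~\ref{assum:negative-definite}, we recognize (\ref{v1-re10_mainpaper}) as a linear-in-$v$ stochastic approximation whose parameter $\theta_t$ evolves on the faster timescale. Since $\beta_t = o(\alpha_t)$, the critic sees the actor as quasi-equilibrated to the (possibly set-valued) attractor $\Theta(v)$ of local maxima of the perturbed average-reward objective for each fixed $v$, so the asymptotic mean field for the critic is the differential inclusion
\[
\dot{v} \in H(v) := \overline{\mathrm{co}}\{\Ab(\theta) v + \bbb(\theta) : \theta \in \Theta(v)\}.
\]

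First, I would verify the noise conditions. The martingale-difference term $Y_t$ has bounded conditional second moments provided $v_t$, $L_t$ and $\theta_t$ remain bounded along the trajectory: boundedness of $L_t$ follows from $|r_t|\le U_r$ and the averaging structure of the $L$-recursion, boundedness of $\theta_t$ is enforced by the actor projection used in the preliminary analysis of Appendix~\ref{asympanalysis}, and boundedness of $v_t$ itself is precisely what must be proved (handled below by scaling). The Markov-noise term $\kappa_t$ is shown to be asymptotically negligible by a standard Poisson-equation argument on the fast-mixing chain under Assumption~\ref{assum-ergodic-chain}, using that $\theta_t$ and $L_t$ change by $o(1)$ between consecutive critic updates.

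Second, I would set up the Borkar--Meyn scaling. Define $r(n) = \max(1, \|v_n\|)$ and $\hat v_n = v_n/r(n)$, and study the piecewise-constant interpolate of the scaled iterates on the $\beta_t$-timescale. The rescaled drift $h_c(v) := (\Ab(\theta) c v + \bbb(\theta))/c$ converges as $c\to\infty$ to $h_\infty(v) = \Ab(\theta) v$, yielding the scaled DI $\dot v \in H_\infty(v) := \overline{\mathrm{co}}\{\Ab(\theta) v : \theta \in \Theta(v)\}$. By Assumption~\ref{assum:negative-definite} one has $v^\top \Ab(\theta) v \le -\lambda \|v\|^2$ uniformly in $\theta \in D$, so the Lyapunov function $V(v)=\|v\|^2$ satisfies $\dot V \le -2\lambda\|v\|^2$ along every selection of $H_\infty$, i.e.\ the origin is globally exponentially stable for the scaled DI. The DI version of the Borkar--Meyn theorem (Benaim--Hofbauer--Sorin) then yields $\sup_n \|v_n\| < \infty$ almost surely.

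The main obstacle will be establishing the properties of the set-valued map $v \mapsto \Theta(v)$ needed to make $H$ a Marchaud map (nonempty compact convex values, upper semi-continuity, linear growth). Upper semi-continuity follows from the continuous dependence of the actor's ODE right-hand side on $v$ combined with the standard fact that the attractor set of a parameterized family of ODEs varies upper semi-continuously in the parameter, but in our setting this requires a careful passage from the faster-timescale actor recursion (with its own Markov noise) to the critic-conditioned actor ODE -- work that is partially carried out in the preliminary projected version analysed in Appendix~\ref{asympanalysis}. Stitching this DI bookkeeping together with the Markov-noise estimates on the fast timescale is the delicate part; once in place, the scaling and Lyapunov argument above closes the proof.
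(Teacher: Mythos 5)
Your proposal follows essentially the same route as the paper's proof: both form the scaled drift $h_\infty(v)=\lim_{c\to\infty}h(cv)/c$, observe that Assumption~\ref{assum:negative-definite} makes the origin a globally asymptotically stable attractor of the scaled differential inclusion, and then invoke the Borkar--Meyn-type stability theorem for stochastic recursive inclusions (the paper cites Theorem~1 of \citep{ramaswamy-bhatnagar} for this step). Your added Lyapunov computation with $V(v)=\|v\|^2$ and your remarks on the Marchaud properties of the set-valued map are consistent elaborations of what the paper establishes in Lemma~\ref{lem3} and the surrounding discussion.
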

\begin{proof}
See Appendix~\ref{appendix}.
\end{proof}

Consider the following ODE associated with the slower recursion:
\begin{equation}
\label{ode2_mainpaper}
\dot{\theta} = \hat{\Gamma}_2\left(\nabla L^{\theta} + e^{\pi^{\theta}} \right),
\end{equation}
where  
${\displaystyle 
\hat{\Gamma}_2(v(y)) = \lim\limits_{0 < \eta \rightarrow \infty}\bigg( \frac{\Gamma_2(y + \eta v(y)) - y}{\eta}\bigg)}$ and $e^{\pi^{\theta}}$ is an error term, see the appendix.
Consider also the DI associated with the faster recursion:
\begin{equation}
\label{di-c2}
\dot{v} \in h(v),
\end{equation}
where $h(v)$ denotes the following set-valued function of $v$:
\[
h(v) = \{ \sum_s \mu_{\theta} (s)\sum_a \pi^{\theta} (s,a) (R(s,a) - L^{\theta}\]\[ + v^T \sum_{s'} p(s,a,s')\phi(s')
-v^T \phi(s))\phi(s) | \theta \in \bar{\theta^*}(v) \}.
\]

\begin{theorem}
\label{main-thm2}
Suppose the ODE (\ref{ode2_mainpaper}) has isolated local maxima $\theta^*$. Correspondingly suppose $v^*\in \mathbb{R}^{d_1}$ is a limit point of the solution to the DI (\ref{di-c2}). Then under Assumptions~\ref{assum:negative-definite}, \ref{assum:policy-lipschitz-bounded}, \ref{assum-ergodic-chain} and \ref{assum:ss}, 
$\sup_t \|v_t\| <\infty$ and $\sup_t \|\theta_t\| < \infty$ w.p.1 respectively. In addition, 
$(v_t,\theta_t) \rightarrow (v^*,\theta^*)$ almost surely, where $\theta^*$ is a local maximum of (\ref{ode2_mainpaper}) and $v^*$ is the unique solution to the projected Bellman equation corresponding to the policy $\pi^{\theta^*}$, i.e., the two together satisfy
\begin{equation}
\label{pbe2}
\Phi^T D^{\theta^*}\Phi v^* = 
\Phi^T D^{\theta^*} T_{\theta^*}(\Phi v^*).
\end{equation}
\end{theorem}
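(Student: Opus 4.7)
\textbf{Proof plan for Theorem \ref{main-thm2}.}

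The plan is to combine the standard two-timescale stochastic approximation argument (Borkar, Chapter 6) with the differential-inclusion version of it (Benaim--Hofbauer--Sorin) so as to handle the set-valuedness introduced by multiple local maxima of the actor ODE. Stability of the two iterates is the first step: $\sup_t \|\theta_t\| < \infty$ a.s.\ is immediate because $\theta$ lives in the compact set $D$ through the projection $\hat{\Gamma}_2$, while $\sup_t \|v_t\| < \infty$ a.s.\ is exactly the content of Theorem~\ref{di-stability}. So the trajectories stay in a compact region of $\mathbb{R}^{d_1}\times D$, which is what the DI machinery below needs.

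Next I would analyze the faster actor (and average-reward) recursion. Because $\beta_t = o(\alpha_t)$ and $\gamma_t = K\alpha_t$, from the $\alpha_t$-timescale the critic $v_t$ is effectively frozen. A standard Markov-noise decomposition of the TD error into its mean field, a martingale difference, and an $\mathcal{F}_2(t)$-measurable bias that vanishes under Assumption~\ref{assum-ergodic-chain} (together with Assumption~\ref{assum:policy-lipschitz-bounded}) shows that, for each fixed $v$, the actor iterate asymptotically tracks the ODE (\ref{ode2_mainpaper}). Invoking the Hirsch-lemma/Kushner--Clark type argument for projected ODEs, $\theta_t$ converges to the (possibly non-singleton) set $\bar{\theta^*}(v_t)$ of stable equilibria of (\ref{ode2_mainpaper}); in particular $\mathrm{dist}(\theta_t,\bar{\theta^*}(v_t))\to 0$ almost surely. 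The average-reward iterate $L_t$, running on the same timescale with the same step-size (up to the constant $K$), simultaneously tracks $L^{\theta_t}$ by the standard ergodic-theorem argument.

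For the slower critic recursion I use the decomposition (\ref{v1-re10_mainpaper}). The term $Y_t$ is a martingale difference with bounded variance (using Assumption~\ref{assum:bounded_feature_norm} and the projection onto $C$), so $\sum_k \beta_k Y_k$ converges a.s.; the Markov-noise term $\kappa_t$ tends to zero a.s.\ along the algorithm's trajectory by a Poisson-equation argument under Assumption~\ref{assum-ergodic-chain}. Therefore (\ref{v1-re10_mainpaper}) is a stochastic-approximation recursion whose mean field, after substituting the limit $\theta \in \bar{\theta^*}(v_t)$ obtained in the previous paragraph, is precisely the set-valued map $h(v)$ of (\ref{di-c2}). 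Since $\bar{\theta^*}(\cdot)$ inherits upper semicontinuity with nonempty compact values from the continuity of $\theta \mapsto L^\theta$ and $\theta \mapsto \mu_\theta$ (Assumption~\ref{assum:policy-lipschitz-bounded}), and because $h$ is affine in $v$ with a uniformly bounded linear part, $h$ is a Marchaud map. Applying the Benaim--Hofbauer--Sorin theorem (extended to Markov noise as in Yaji--Bhatnagar) gives that $\{v_t\}$ converges almost surely to a compact connected internally chain-transitive invariant set of the DI (\ref{di-c2}).

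Finally I identify the limit. If $\theta^*$ is an isolated local maximum of (\ref{ode2_mainpaper}) and $v^*$ is the corresponding limit point of the DI, then $\theta^* \in \bar{\theta^*}(v^*)$ and $0 \in h(v^*)$, which written out states
\[
\sum_s \mu_{\theta^*}(s)\sum_a \pi^{\theta^*}(s,a)\Bigl(R(s,a)-L^{\theta^*}+v^{*\top}\!\!\sum_{s'}p(s,a,s')\phi(s')-v^{*\top}\phi(s)\Bigr)\phi(s)=0.
\]
Rewritten in matrix form this is $\Phi^\top D^{\theta^*}(T_{\theta^*}(\Phi v^*)-\Phi v^*)=0$, i.e.\ exactly (\ref{pbe2}); uniqueness of $v^*$ follows because the linear operator $\Phi^\top D^{\theta^*}\Phi$ is negative definite by Assumption~\ref{assum:negative-definite}. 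Isolation of $\theta^*$ combined with the asymptotic tracking $\mathrm{dist}(\theta_t,\bar{\theta^*}(v_t))\to 0$ and the continuity of $\bar{\theta^*}$ near $v^*$ then yields $\theta_t \to \theta^*$ a.s. The main technical obstacle is the third paragraph: extending the Benaim--Hofbauer--Sorin DI limit theorem to the Markov-noise setting and verifying that $\kappa_t$, which involves $\mu_{\theta_t}$ with $\theta_t$ itself varying on the faster timescale, actually vanishes almost surely along the trajectory. Handling this requires exploiting that $\theta_t$ has already collapsed to the quasi-static set $\bar{\theta^*}(v_t)$ when viewed from the critic timescale, so that a Poisson-equation solution for the frozen chain provides the desired averaging.
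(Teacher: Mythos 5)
Your plan follows essentially the same route as the paper: stability of the unprojected critic via Theorem~\ref{di-stability}, faster-timescale tracking of the actor/average-reward pair to the set $\bar{\theta^*}(v)$ (the paper's Proposition~\ref{lem2}), convergence of the slow critic to a chain-transitive invariant set of the Marchaud DI (\ref{di-c2}) via the Benaim--Hofbauer--Sorin framework, and identification of the limit through $0\in h(v^*)$, which the paper packages as an appeal to Theorems 1 and 2 of Ramaswamy--Bhatnagar. The only nit is that uniqueness of $v^*$ comes from the negative definiteness of $\Ab=\Phi^\top D^{\theta^*}(P_{\theta^*}-I)\Phi$ in Assumption~\ref{assum:negative-definite}, not of $\Phi^\top D^{\theta^*}\Phi$ as you wrote, but this does not affect the argument.
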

\begin{remark}
\label{isolated-max}
We assume isolated local maxima for (\ref{ode2_mainpaper}) in Theorem~\ref{main-thm2} as it helps uniquely identify the converged policy. In the absence of this assumption, one will again obtain a DI (instead of the ODE), whose limit points the algorithm will asymptotically converge to almost surely. 
\end{remark}




\section{Experimental Results\footnote{Please fine the code at https://github.com/prashu1306/Critic-Actor.}}
\label{exp_results}

We present here the results of experiments on three different (open source) OpenAI Gym environments 
namely  Frozen Lake, Pendulum  and 
Mountain Car Continuous,
respectively, 
over which we compare the performance of CA with AC as well as the Deep Q-Network (DQN) \citep{DQN} in the average reward setting, and PPO \citep{PPO}. While \citep{bhatnagar2023actorcritic} analyzes the asymptotic convergence of the full-state CA (FS-CA) in the discounted cost setting, for experiments, they also incorporate a setting with function approximation. For the CA and AC implementations, we have thus used their code\footnote{ https://github.com/gsoumyajit/Actor-Critic-Critic-Actor}  but made changes to incorporate the average reward setting. For DQN, we have used the original code from the paper 
and made changes to incorporate the average reward setting. For PPO, we implement two variants, namely, PPO-AC and PPO-CA, 
wherein both algorithms, clipping has been used in the actor updates and the advantage function is estimated using the critic parameter and we have used two separate losses (the actor-loss and the critic-loss), to train the actor and the critic networks respectively. We have used the average reward setting for implementing PPO (actor and critic) unlike the base implementation 
that considers discounted reward. 

The plots of our experiments are averaged over 10 different initial seeds after training the agent for 10,000 steps. 
Table~\ref{tab:experiment} presents the average reward along with standard error (obtained upon convergence) for all the five algorithms in the three environments while Table~\ref{sample-table2} in the Appendix presents their training time (in seconds). It can be seen from Table~\ref{tab:experiment} that CA shows the best results in all environments, though by small margins. Please note that performance may vary depending on the initial seeds. In our experiments, we used a randomly generated set of seeds, and the reported results correspond to that particular selection 
In terms of training time performance (Table~\ref{sample-table2}), CA is better than AC and single-timescale AC on all three environments and in fact, it takes about half the run-time on two of the environments and is also better than the other algorithms as well except DQN. The latter has the best training time performance though it loses out on accuracy. 

\section{Future Work}

We used a projected critic like \citep{wu2022finite, olshevsky, chen2023finitetime}, for our non-asymptotic analysis.  
It would thus be of theoretical interest to derive similar bounds on the critic as we did but when projection is not used. 
It would also be of interest to develop potentially more efficient algorithms of the CA type, such as Natural CA, Soft CA etc., and study their theoretical convergence properties as well as empirical performance.


\section*{Acknowledgments}
The authors were supported by
the Walmart Centre for Tech Excellence, Indian Institute of Science. S.~Bhatnagar was supported additionally by a J.C. Bose Fellowship, the Kotak-IISc AI/ML Centre, Indian Institute of Science, Project No.~DFTM/02/3125/M/04/AIR-04 from DRDO under DIA-RCOE, and the Robert Bosch Centre for Cyber Physical Systems, Indian Institute of Science.

\bibliography{aaai25}

\newpage

\onecolumn
\appendix

\section{Appendix}
\label{appendix}

The appendix comprises of three parts. First, we present the complete finite-time (non-asymptotic) analysis of the CA algorithm where we show that it achieves a sample complexity of $\mathcal{\tilde{O}}(\epsilon^{-(2+\delta)})$ for the mean-squared error of the critic to be upper bounded by $\epsilon$. 

Subsequently, we present the complete analysis of asymptotic convergence of the two-timescale CA algorithm. In particular, we show that the iterates of the algorithm remain uniformly bounded almost surely, i.e., the iterates are stable, and in addition the algorithm is almost surely convergent. It is important to note that asymptotic convergence guarantees for many algorithms in the literature such as the single-timescale AC algorithms of \citep{olshevsky, chen2023finitetime} are not available. In fact, it may not be possible to show such guarantees in the case of single-timescale AC algorithms because of the potential violation of the inherent nested loop structure required in policy iteration algorithms that gets mimicked via a difference in timescales. 

Finally, we provide details of the hyper-parameter settings used for the various algorithms as well as the performance comparisons of all the algorithms in terms of the training time required in seconds.

\subsection{Finite-Time Analysis}
\label{finitetimeanalysis}

We present here the details of the finite time analysis of our two-timescale CA algorithm. Recall that our algorithm comprises of three recursions, viz., the average reward recursion, the actor update and the critic update, respectively. The actor update in our algorithm proceeds faster than the critic update. Further, the average reward estimate is used in the temporal difference term $\delta_t$, that in turn is used in both the actor and the critic updates. Moreover, there is only a one-way coupling between the average reward estimate and the actor/critic estimates in the sense that the actor and critic estimates depend on the average reward but not vice versa. Hence, we use the actor's timescale to update the average reward recursion as well. Thus, in our algorithm, even though there are three recursions, the average reward and actor recursions together proceed on the same timescale that is faster than the timescale of the critic update (which is slower). In Sections \ref{average_reward_convergence}--\ref{critic_convergence_proof} below, we present the analysis for these three recursions and obtain the sample complexity estimate for the algorithm.
Notice the difference of our scheme with standard AC algorithms for average reward MDPs such as those in \citep{wu2022finite, BHATNAGAR20092471}, where the average reward recursion proceeds on the same (faster) timescale as the critic recursion while the actor recursion proceeds slower.

We start off with a basic result that provides some sufficient conditions that imply  Assumption~\ref{smoothness_mu}.

\begin{theorem}
\label{thm-stationary}
Under Assumptions~\ref{assum:policy-lipschitz-bounded} and \ref{assum-ergodic-chain}, 
the stationary distribution $\mu_\theta$ of the Markov chain $\{s_t\}$ is continuously differentiable in $\theta \in \mathbb{R}^d$, where $\theta$ is the policy parameter. Further, if $\nabla^2 \mu_\theta(s)$ exists for each $\theta\in \mathbb{R}^d$ and $s\in S$, and further,  $\sup_{\theta,s} \|\nabla^2 \mu_\theta(s)\| \leq L_\mu$ for some $L_\mu>0$, then $\mu_\theta$ is $L_\mu$-smooth.   
\end{theorem}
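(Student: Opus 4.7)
The plan is to prove the two claims in turn, treating the continuous differentiability of $\mu_\theta$ via the implicit function theorem applied to the stationarity equations, and then deducing the Lipschitz gradient property from the standard bounded-Hessian argument.

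For the first claim, I would begin by using Assumption~\ref{assum:policy-lipschitz-bounded} to show that the transition kernel induced by $\pi_\theta$ is continuously differentiable in $\theta$. Specifically, Assumption~\ref{assum:policy-lipschitz-bounded}(a) tells us that $\nabla_\theta \log \pi_\theta(a|s)$ exists and is bounded, and Assumption~\ref{assum:policy-lipschitz-bounded}(b) tells us it is Lipschitz, so the identity $\nabla_\theta \pi_\theta(a|s) = \pi_\theta(a|s)\,\nabla_\theta \log \pi_\theta(a|s)$ shows $\pi_\theta(a|s)$ is continuously differentiable. Consequently, the induced transition matrix $P^\theta(s'|s) := \sum_a \pi_\theta(a|s) P(s'|s,a)$ is continuously differentiable in $\theta$.

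Next, I would invoke Assumption~\ref{assum-ergodic-chain}: ergodicity of $\{s_t\}$ for every fixed $\theta \in D$ guarantees that $1$ is a simple eigenvalue of $P^\theta$ (equivalently, the null space of $I-(P^\theta)^\top$ is one-dimensional), and $\mu_\theta$ is the unique probability vector satisfying $\mu^\top P^\theta = \mu^\top$, $\mathbf{1}^\top\mu = 1$. To obtain $C^1$ dependence, I would define $G : \mathbb{R}^{|S|}\times\mathbb{R}^d \to \mathbb{R}^{|S|}$ by concatenating any $|S|-1$ of the linear equations in $\mu^\top(I-P^\theta)=0$ with the normalization $\mathbf{1}^\top\mu - 1 = 0$. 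Simpleness of the eigenvalue $1$ implies that the Jacobian of $G$ with respect to $\mu$ at $(\mu_\theta,\theta)$ is nonsingular; the implicit function theorem then yields a unique $C^1$ map $\theta \mapsto \mu_\theta$ in a neighborhood of each $\theta$, and hence globally. This establishes continuous differentiability of $\mu_\theta$.

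For the second claim, I would use the standard bounded-Hessian-implies-Lipschitz-gradient argument componentwise. Fix $s\in S$ and $\theta_1,\theta_2 \in \mathbb{R}^d$, and apply the fundamental theorem of calculus to $t\mapsto \nabla_\theta \mu_{\theta_2 + t(\theta_1-\theta_2)}(s)$ on $[0,1]$:
\begin{equation*}
\nabla \mu_{\theta_1}(s) - \nabla \mu_{\theta_2}(s) \;=\; \int_0^1 \nabla^2 \mu_{\theta_2 + t(\theta_1-\theta_2)}(s)\,(\theta_1-\theta_2)\,dt.
\end{equation*}
Taking norms and applying the uniform bound $\sup_{\theta,s}\|\nabla^2\mu_\theta(s)\| \le L_\mu$ gives $\|\nabla\mu_{\theta_1}(s) - \nabla\mu_{\theta_2}(s)\| \le L_\mu \|\theta_1-\theta_2\|$, which is precisely the $L_\mu$-smoothness asserted in Assumption~\ref{smoothness_mu}.

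The main technical obstacle is the first part, specifically verifying the hypotheses of the implicit function theorem. What must be checked carefully is that simplicity of the eigenvalue $1$ of $P^\theta$ (guaranteed by ergodicity on a finite state space) translates into nonsingularity of the Jacobian of the augmented system in which one of the redundant stationarity equations is replaced by the normalization constraint. Once this linear-algebraic step is done, the rest of the argument is either routine smooth-calculus (the chain and product rules for $\pi_\theta$ and $P^\theta$) or the textbook mean-value argument for the Lipschitz-gradient conclusion.
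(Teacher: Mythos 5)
Your proposal is correct, and the second half (the fundamental-theorem-of-calculus/bounded-Hessian argument for $L_\mu$-smoothness) is essentially identical to the paper's mean-value-theorem step. Where you genuinely diverge is in establishing continuous differentiability of $\mu_\theta$: the paper does not set up an implicit-function-theorem argument at all, but instead invokes Theorem~2 of \citep{schweitzer}, a perturbation result for finite ergodic chains built on the fundamental matrix $Z(\theta)=[I-P(\theta)+P^\infty(\theta)]^{-1}$, which delivers differentiability together with the explicit formula $\nabla\mu_\theta=\mu_\theta\,\nabla P(\theta)\,Z(\theta)$. Your route is more self-contained and elementary---it needs only the $C^1$ dependence of $P^\theta$ on $\theta$ (which you correctly extract from Assumption~\ref{assum:policy-lipschitz-bounded} via $\nabla\pi_\theta=\pi_\theta\nabla\log\pi_\theta$) plus a linear-algebra fact---at the cost of not producing a closed-form gradient. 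The one step you flag but do not carry out, nonsingularity of the augmented Jacobian, does go through: since $(I-P^\theta)\mathbf{1}=0$, the rows of $(I-P^\theta)^\top$ sum to zero, so deleting any one of them leaves a matrix of rank $|S|-1$ whose null space is still $\mathrm{span}\{\mu_\theta\}$; appending the normalization row $\mathbf{1}^\top$ then yields an invertible matrix because $\mathbf{1}^\top\mu_\theta=1\neq 0$. Uniqueness of the stationary distribution under Assumption~\ref{assum-ergodic-chain} lets the local $C^1$ branches from the implicit function theorem patch into the global map $\theta\mapsto\mu_\theta$, as you assert.
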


\begin{proof}
Let $P(\theta)$ denote the transition probability matrix with policy parameter $\theta$. Also, let   \[Z(\theta)= [I-P(\theta)+P^\infty(\theta)]^{-1},
\]
where ${\displaystyle P^\infty(\theta) = \frac{1}{m}\sum_{k=1}^{m} P^k(\theta)}$ is the time averaged transition probability matrix, where $P^k(\theta)$ is the $k$-step transition probability matrix. Since the state-valued process is ergodic Markov for any $\theta$, it follows that $P_{ij}^\infty(\theta) = \mu_j(\theta)$, $\forall i,j=1,\ldots,n$. From Assumption~\ref{assum:policy-lipschitz-bounded}, $\nabla \pi_\theta$ exists and is in fact uniformly bounded over all $\theta \in \mathbb{R}^d$. Thus, $\nabla P(\theta)$ exists as well (and is also uniformly bounded). It now follows from Theorem 2 of \citep{schweitzer}, that $\mu_\theta$ is continuously differentiable and in fact,
\[
\nabla \mu_\theta = \mu_\theta \nabla P(\theta) Z(\theta).
\] 
Now observe that from the mean-value theorem, for any $s\in S$,
\[
\|\nabla\mu_{\theta_1}(s)-\nabla\mu_{\theta_2}(s)\| \leq \|\nabla^2 \mu_\xi(s)\| \|\theta_1-\theta_2\|,
\]
where $\xi = \alpha \theta_1 + (1-\alpha)\theta_2$ for some $\alpha \in [0,1]$. The claim now follows from the fact that $\sup_{\xi,s} \|\nabla^2\mu_\xi(s)\| \leq L_\mu$. 
\end{proof}

\subsubsection{Convergence of the Average Reward Estimate}\label{average_reward_convergence}

Notations:-
\begin{align}
    \begin{split}
        O_t :&= (s_t , a_t , s_{t+1})\\
        y_t :&= (L_t - L(\theta_t))\\
        M(\theta_t,v_t) :& = E_{s_t \sim \mu_{\theta_t},a_t \sim \pi_{\theta_t},s_{t+1} \sim p}[( r(s_t,a_t)- L(\theta_t) + \phi(s_{t+1})^{\top} v_{t} \\
        &\qquad- \phi(s_t)^{\top} v_{t})\nabla \log\pi_{\theta_t}(a_t|s_t)]\\
        W(v,\theta) :&= E_{s \sim \mu_{\theta},a \sim \pi_{\theta},s^{'} \sim P}[(V^{\theta}(s^{'}) - v^T\phi(s^{'}) - V^{\theta}(s) + v^T\phi(s))\nabla \log\pi_{\theta}(a|s)]\\
        N(O_t ,\theta_t,v_t,L_t) :& = ( r(s_t,a_t)- L_t  + \phi(s_{t+1})^{\top} v_{t} - \phi(s_t)^{\top} v_{t})\nabla \log\pi_{\theta_t}(a_t|s_t)\\
        \Omega(O_t,\theta_t,v_t,L_t) :&= y_t\langle W(v_t,\theta_t) , -N(O_t,\theta_t,v_t,L_t) + E_{\theta_t}[N(O_t,\theta_t,v_t,L_t)] \rangle\\
        U_w :&=  2B(U_{v} +  \bar{U}_{v})\\
        G :&= 2B(U_r + U_v)
    \end{split}
\end{align}

We have , $\vert V^{\theta}(s) \vert \leq \bar{U}_{v} ,\forall \theta \in \RR^{d},\forall s \in S$.

The following lemmas will be useful in proving the convergence of the average reward estimate.

\begin{lemma}\label{L_smoothness}
For the performance function $L(\theta)$, there exists a constant $L_{J'}>0$ such that for all $\theta_1,\theta_2\in\mathbb{R}^d$, it holds that
\begin{align}\label{LJ'}
    \Vert \nabla L(\theta_1)-\nabla L(\theta_2)\Vert \leq L_{J'}\Vert \theta_1-\theta_2\Vert,
\end{align}
which further implies
\begin{align}
    L(\theta_2)\ge\  &L(\theta_1)+\langle \nabla L(\theta_1),\theta_2-\theta_1\rangle -\frac{L_{J'}}{2}\Vert \theta_1-\theta_2\Vert^2,\label{l-smooth1}\\
    L(\theta_2)\leq\  &L(\theta_1)+\langle \nabla L(\theta_1),\theta_2-\theta_1\rangle +\frac{L_{J'}}{2}\Vert \theta_1-\theta_2\Vert^2\label{l-smooth2}.
\end{align}

\end{lemma}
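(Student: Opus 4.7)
My plan is to first prove the Lipschitz gradient inequality \eqref{LJ'}; the upper and lower quadratic bounds \eqref{l-smooth1} and \eqref{l-smooth2} then follow by the standard descent--lemma argument, writing $L(\theta_2)-L(\theta_1)=\int_0^1\langle\nabla L(\theta_1+t(\theta_2-\theta_1)),\theta_2-\theta_1\rangle\,dt$, subtracting $\langle\nabla L(\theta_1),\theta_2-\theta_1\rangle$, and applying \eqref{LJ'} inside the integral to pick up the factor $\tfrac{L_{J'}}{2}\|\theta_1-\theta_2\|^2$ with matching sign on either side. The substance of the argument is therefore entirely in establishing the Lipschitz continuity of $\theta\mapsto\nabla L(\theta)$ on $\mathbb{R}^d$.

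Starting from the policy gradient identity
\[
\nabla L(\theta)=\sum_{s}\mu_\theta(s)\sum_a \pi_\theta(a\mid s)\,Q_\theta(s,a)\,\nabla\log\pi_\theta(a\mid s),
\]
I would bound $\nabla L(\theta_1)-\nabla L(\theta_2)$ via a four-term add-and-subtract decomposition across the $\theta$-dependent factors $\mu_\theta$, $\pi_\theta$, $Q_\theta$, and $\nabla\log\pi_\theta$. In each of the four resulting sums exactly one factor is a $\theta$-difference while the other three are uniformly bounded constants, so after showing each factor is Lipschitz in $\theta$ with a constant uniform in $(s,a)$, summation over $a\in A$ and over $s$ against $\mu_{\theta_2}(\cdot)$ (or against the signed measure $\mu_{\theta_1}-\mu_{\theta_2}$ in one term) produces the required bound of the form $L_{J'}\|\theta_1-\theta_2\|$.

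The Lipschitz and boundedness estimates I will assemble are: $\|\nabla\log\pi_\theta\|\le B$, the $K$-Lipschitz continuity of $\nabla\log\pi_\theta$, and the $L$-Lipschitz continuity of $\pi_\theta(a\mid s)$ from Assumption~\ref{assum:policy-lipschitz-bounded}; uniform boundedness of $\nabla\mu_\theta$ (and hence Lipschitz continuity of $\mu_\theta$ in $\theta$) furnished by Theorem~\ref{thm-stationary} together with Assumption~\ref{smoothness_mu}; the uniform reward bound $|r(s,a)|\le U_r$; and Assumption~\ref{V_lipschitz} on the Lipschitz continuity of $V^\theta(s)$ in $\theta$. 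Handling $Q_\theta$ uses the identity $Q_\theta(s,a)=r(s,a)-L(\theta)+\mathbb{E}_{s'\sim P(\cdot\mid s,a)}[V^\theta(s')]$, so Lipschitzness of $Q_\theta$ in $\theta$ reduces to Lipschitzness of the scalar $L(\theta)$ itself plus Assumption~\ref{V_lipschitz}.

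The step I expect to be the main obstacle is this propagation of Lipschitz constants through $Q_\theta$: one must take care not to conflate the constant $L_{J'}$ being constructed with the preliminary Lipschitz constant of $L(\theta)$ used as an auxiliary bound inside $Q_\theta$. The cleanest route, which I would adopt, is to first derive the (much easier) Lipschitz continuity of $L(\theta)$ itself as a stand-alone scalar bound from the boundedness of $r$ and the Lipschitz continuity of $\mu_\theta$ and $\pi_\theta$, then feed that constant into the bound on $Q_{\theta_1}(s,a)-Q_{\theta_2}(s,a)$, and only then combine the four pieces of the decomposition to obtain $L_{J'}$ as a closed-form expression in $U_r$, $U_v$, $\bar U_v$, $B$, $K$, $L$, $L_\mu$, $L_v$, and $|A|$.
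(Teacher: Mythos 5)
Your proposal is correct and follows essentially the same route as the paper, whose proof of this lemma is simply a pointer to Lemma C.1 of \citep{wu2022finite}: that cited proof is exactly the policy-gradient decomposition you describe, with the four add-and-subtract terms controlled by the boundedness and Lipschitz properties of $\nabla\log\pi_\theta$, $\pi_\theta$, $Q_\theta$ and $\mu_\theta$, and with the Lipschitzness of $Q_\theta$ obtained (as you note) by first establishing plain Lipschitz continuity of the scalar $L(\theta)$. The one point to tighten is your sourcing of the Lipschitz continuity of $\mu_\theta$: Theorem~\ref{thm-stationary} and Assumption~\ref{smoothness_mu} give differentiability and smoothness of $\nabla\mu_\theta$ rather than directly a uniform bound on it, and the standard (and cited) route is instead to bound $\sum_s|\mu_{\theta_1}(s)-\mu_{\theta_2}(s)| = 2\,d_{TV}(\mu_{\theta_1},\mu_{\theta_2})$ by a constant times $\|\theta_1-\theta_2\|$ using the uniform ergodicity of Assumption~\ref{assum:ergodicity} together with Assumption~\ref{assum:policy-lipschitz-bounded}(c), i.e., Lemma B.1 of \citep{wu2022finite}.
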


\begin{proof}
    Please refer proof of Lemma C.1 in \citep{wu2022finite}.
\end{proof}

\begin{lemma}\label{omega}

Under assumptions \ref{assum:bounded_feature_norm}, \ref{assum:ergodicity}, \ref{assum:policy-lipschitz-bounded}, \ref{V_lipschitz}, for any $t \geq \tau > 0$ , we have
\begin{align*}
    E[\Omega(O_t,\theta_t,v_t,L_t)] &\leq (2U_{w}G + 4U_{r}U_{w}B)\vert L_t - L_{t-\tau} \vert + 8BU_{r}(G + U_{w})\Vert v_t - v_{t-\tau} \Vert\\
    &\qquad + M_1\Vert \theta_t - \theta_{t-\tau} \Vert
    + M_2\sum\limits_{i = t-\tau}^{t}E\Vert \theta_{i} - \theta_{t-\tau} \Vert +  M_3bk^{\tau-1},
\end{align*}
for some $M_1 > 0$, $M_2 > 0$ and $M_3 > 0$.
\end{lemma}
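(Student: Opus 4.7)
The plan is to control $E[\Omega(O_t,\theta_t,v_t,L_t)]$ by a standard Markov-noise argument: replace the time-$t$ iterates $(\theta_t,v_t,L_t)$ by the time-$(t-\tau)$ iterates $(\theta_{t-\tau},v_{t-\tau},L_{t-\tau})$, which are $\mathcal{F}_{t-\tau}$-measurable and hence effectively independent of the Markov-chain evolution over $[t-\tau,t]$, and then invoke the geometric mixing of Assumption~\ref{assum:ergodicity}. Concretely, I write
\[
\Omega(O_t,\theta_t,v_t,L_t) = \underbrace{[\Omega(O_t,\theta_t,v_t,L_t) - \Omega(O_t,\theta_{t-\tau},v_{t-\tau},L_{t-\tau})]}_{T_1} + \underbrace{\Omega(O_t,\theta_{t-\tau},v_{t-\tau},L_{t-\tau})}_{T_2}.
\]
$T_1$ captures parameter drift and is handled by Lipschitz continuity of $\Omega$; $T_2$ captures Markov noise and is handled by coupling and mixing.

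For $T_1$, I would interpolate one parameter at a time. The uniform bounds $|y_t|\le 2U_r$, $\|W(v,\theta)\|\le U_w$, $\|N\|\le G$ (so $\|{-}N+E_\theta[N]\|\le 2G$) reduce the task to computing the Lipschitz constant of $\Omega$ in each argument. Varying only $L$: $\partial y/\partial L = 1$ and $\|\partial N/\partial L\|\le B$ (since $N$ depends on $L$ only through $-L\,\nabla\log\pi_\theta$), yielding the coefficient $2U_wG+4U_rU_wB$ of $|L_t - L_{t-\tau}|$. Varying only $v$: $\|\partial W/\partial v\|\le 2B$ and $\|\partial N/\partial v\|\le 2B$, yielding $8BU_r(G+U_w)$ of $\|v_t-v_{t-\tau}\|$. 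Varying only $\theta$ involves (i) Lipschitzness of $L(\theta)$ (Lemma~\ref{L_smoothness}), (ii) Lipschitzness of $\nabla\log\pi_\theta$ (Assumption~\ref{assum:policy-lipschitz-bounded}(b)), (iii) Lipschitzness of $V^\theta$ (Assumption~\ref{V_lipschitz}), and (iv) Lipschitzness of the stationary expectation $E_\theta[N]$ via Assumption~\ref{smoothness_mu} and Assumption~\ref{assum:policy-lipschitz-bounded}(c); I absorb all four contributions into the single constant $M_1$ multiplying $\|\theta_t-\theta_{t-\tau}\|$.

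For $T_2$, I would couple the actual chain to an auxiliary Markov chain $\{\tilde s_i\}_{i\ge t-\tau}$ with $\tilde s_{t-\tau} = s_{t-\tau}$ but transitions driven by the frozen policy $\pi_{\theta_{t-\tau}}$; set $\tilde O_t = (\tilde s_t, \tilde a_t, \tilde s_{t+1})$. Split $E[T_2] = E[\Omega(\tilde O_t, \theta_{t-\tau},v_{t-\tau},L_{t-\tau})] + E[\Omega(O_t,\cdot) - \Omega(\tilde O_t,\cdot)]$. A standard telescoping coupling argument using Assumption~\ref{assum:policy-lipschitz-bounded}(c) gives $d_{TV}(\mathcal{L}(O_t\mid \mathcal{F}_{t-\tau}),\, \mathcal{L}(\tilde O_t\mid \mathcal{F}_{t-\tau}))\le L\sum_{i=t-\tau}^{t-1}\|\theta_i-\theta_{t-\tau}\|$, and multiplying by the uniform bound $|\Omega|\le 4U_rU_wG$ produces the $M_2\sum_{i=t-\tau}^{t}E\|\theta_i-\theta_{t-\tau}\|$ term.

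Finally, for $E[\Omega(\tilde O_t,\theta_{t-\tau},v_{t-\tau},L_{t-\tau})]$ I condition on $\mathcal{F}_{t-\tau}$. Since $y_{t-\tau}$ and $W(v_{t-\tau},\theta_{t-\tau})$ are $\mathcal{F}_{t-\tau}$-measurable, it remains to bound the centred inner factor $E[N(\tilde O_t,\theta_{t-\tau},v_{t-\tau},L_{t-\tau})\mid \mathcal{F}_{t-\tau}] - E_{\theta_{t-\tau}}[N]$. Assumption~\ref{assum:ergodicity} gives that the conditional law of $\tilde s_t$ is within TV distance $bk^{\tau-1}$ of $\mu_{\theta_{t-\tau}}$, so this centred quantity has norm at most $2Gbk^{\tau-1}$, yielding the $M_3bk^{\tau-1}$ term after pulling out $|y_{t-\tau}|\le 2U_r$ and $\|W\|\le U_w$. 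The main obstacle is the careful bookkeeping in the $\theta$-Lipschitz step, which must combine four different Lipschitz assumptions cleanly into $M_1$, together with a tight inductive verification of the TV-coupling estimate $d_{TV}(\mathcal{L}(O_t),\mathcal{L}(\tilde O_t)) \le L\sum_{i}\|\theta_i-\theta_{t-\tau}\|$; everything else follows from the elementary uniform bounds already supplied by Assumptions~\ref{assum:bounded_feature_norm} and \ref{assum:policy-lipschitz-bounded}.
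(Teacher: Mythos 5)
Your proposal is correct and follows essentially the same route as the paper: a telescoping replacement of $(L_t,v_t,\theta_t)$ by $(L_{t-\tau},v_{t-\tau},\theta_{t-\tau})$ with per-argument Lipschitz bounds (your coefficients $2U_wG+4U_rU_wB$ and $8BU_r(G+U_w)$ match the paper's sublemmas exactly), followed by the frozen-policy auxiliary chain, a TV-coupling bound giving the $\sum_i\|\theta_i-\theta_{t-\tau}\|$ term, and uniform ergodicity for the $bk^{\tau-1}$ term; your final conditioning step merely merges the paper's last two steps (comparison with a stationary-sampled tuple $O_t'$ for which $E[\Omega(O_t',\cdot)]=0$) into one. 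The only slip is attributing the Lipschitz continuity of $\theta\mapsto E_\theta[N]$ to Assumption~\ref{smoothness_mu}: the lemma does not assume it, and the paper instead obtains $d_{TV}(\mu_{\theta_1}\otimes\pi_{\theta_1},\mu_{\theta_2}\otimes\pi_{\theta_2})=\mathcal{O}(\|\theta_1-\theta_2\|)$ from Assumption~\ref{assum:policy-lipschitz-bounded}(c) together with uniform ergodicity (Lemma B.1 of \citep{wu2022finite}), which is the tool you should cite there.
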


\begin{proof}
    We can write $ E[\Omega(O_t,\theta_t,v_t,L_t)]$ as 
    \begin{align*}
        &E[\Omega(O_t,\theta_t,v_t,L_t)] \\
        =& E[\Omega(O_t,\theta_t,v_t,L_t)] - E[\Omega(O_t,\theta_t,v_t,L_{t-\tau})]   + E[\Omega(O_t,\theta_t,v_t,L_{t-\tau})]\\
        &\qquad - E[\Omega(O_t,\theta_t,v_{t-\tau},L_{t-\tau})]
       + E[\Omega(O_t,\theta_t,v_{t-\tau},L_{t-\tau})] - E[\Omega(O_t,\theta_{t-\tau},v_{t-\tau},L_{t-\tau})]\\
       &\qquad + E[\Omega(O_t,\theta_{t-\tau},v_{t-\tau},L_{t-\tau})]
        - E[\Omega(\tilde{O_t},\theta_{t-\tau},v_{t-\tau},L_{t-\tau})] + E[\Omega(\tilde{O_t},\theta_{t-\tau},v_{t-\tau},L_{t-\tau})] \\
        &\qquad- E[\Omega(O_t^{'},\theta_{t-\tau},v_{t-\tau},L_{t-\tau})] 
         + E[\Omega(O_t^{'},\theta_{t-\tau},v_{t-\tau},L_{t-\tau})]
    \end{align*}

In the above equality, $\tilde{O}_{t}$ represents the tuple $(\widetilde{s}_t,\widetilde{a}_t,\widetilde{s}_{t+1})$, which is generated in the following manner :    
\begin{align}\label{chain-au}
    s_{t-\tau}\xrightarrow{\theta_{t-\tau}}a_{t-\tau}\xrightarrow{P}s_{t-\tau+1}\xrightarrow{\theta_{t-\tau}}\widetilde{a}_{t-\tau+1}\xrightarrow{P}\widetilde{s}_{t-\tau+2}\xrightarrow{\theta_{t-\tau}}\widetilde{a}_{t-\tau+2}\cdots \xrightarrow{P}\widetilde{s}_t\xrightarrow{\theta_{t-\tau}}\widetilde{a}_t\xrightarrow{P}\widetilde{s}_{t+1}.
\end{align}
Thus, the policy parameter $\theta_{t-\tau}$ is held fixed for $\tau$ instants starting from the state $s_{t-\tau}$ in the original process. 
Here, for any time instant $k > t-\tau$, $\widetilde{a}_{k}$ denotes the action taken under $\theta_{t-\tau}$. Similarly, for any time instant $l > t-\tau + 1$, $\widetilde{s}_{l}$ denotes the state with actions chosen under the policy parameter $\theta_{t-\tau}$ held fixed. In this auxiliary chain, policy $\pi_{\theta_{t-\tau}}$ is repeatedly applied starting from state $s_{t-\tau }$.

Note that the original Markov chain has the following transitions:
\begin{align}\label{chain-or}
    s_{t-\tau}\xrightarrow{\theta_{t-\tau}}a_{t-\tau}\xrightarrow{\mathcal{P}}s_{t-\tau+1}\xrightarrow{\theta_{t-\tau+1}}a_{t-\tau+1}\xrightarrow{\mathcal{P}}s_{t-\tau+2}\xrightarrow{\theta_{t-\tau+2}}a_{t-\tau+2}\cdots \xrightarrow{\mathcal{P}}s_t\xrightarrow{\theta_{t}}a_t\xrightarrow{\mathcal{P}}s_{t+1}.
\end{align}

Moreover, $O_t^{'} = (s^{'}_t , a^{'}_t ,s^{'}_{t+1} )$, where $ s^{'}_t \sim \mu_{\theta_{t-\tau}} , a^{'}_t \sim \pi_{\theta_{t-\tau}}$ and $s^{'}_{t+1} \sim P(\cdot|s^{'}_t, a^{'}_t)$.
The remainder of the proof of Lemma \ref{omega} is based on the results of the auxiliary lemmas \ref{omega1}- \ref{omega5} below (that we now show).

We let $t \geq \tau > 0$ in the following auxiliary lemmas.
\begin{sublemma}\label{omega1}
    \begin{align*}
        E[\Omega(O_t,\theta_t,v_t,L_t)] - E[\Omega(O_t,\theta_t,v_t,L_{t-\tau})] \leq (2U_{w}G + 4U_{r}U_{w}B)\vert L_t - L_{t-\tau} \vert
    \end{align*}
\end{sublemma}

\begin{proof}
\begin{align*}
    &E[\Omega(O_t,\theta_t,v_t,L_t)] - E[\Omega(O_t,\theta_t,v_t,L_{t-\tau})]\\
    &= E[y_t\langle W(v_t,\theta_t) , -N(O_t,\theta_t,v_t,L_t) + E_{\theta_t}[N(O_t,\theta_t,v_t,L_t)] \rangle\\
    &\qquad - (L_{t-\tau} - L(\theta_t))\langle W(v_t,\theta_t) , -N(O_t,\theta_t,v_t,L_{t-\tau}) + E_{\theta_t}[N(O_t,\theta_t,v_t,L_{t-\tau})] \rangle]\\
    &= E[y_t\langle W(v_t,\theta_t) , -N(O_t,\theta_t,v_t,L_t) + E_{\theta_t}[N(O_t,\theta_t,v_t,L_t)] \rangle\\
    &\qquad - (L_{t-\tau} - L(\theta_t))\langle W(v_t,\theta_t) , -N(O_t,\theta_t,v_t,L_{t}) + E_{\theta_t}[N(O_t,\theta_t,v_t,L_{t})] \rangle]\\
    &\qquad + (L_{t-\tau} - L(\theta_t))\langle W(v_t,\theta_t) , -N(O_t,\theta_t,v_t,L_{t}) + E_{\theta_t}[N(O_t,\theta_t,v_t,L_{t})] \rangle]\\
    &\qquad - (L_{t-\tau} - L(\theta_t))\langle W(v_t,\theta_t) , -N(O_t,\theta_t,v_t,L_{t-\tau}) + E_{\theta_t}[N(O_t,\theta_t,v_t,L_{t-\tau})] \rangle]\\
    & = E[(L_t - L_{t-\tau})\langle W(v_t,\theta_t) , -N(O_t,\theta_t,v_t,L_t) + E_{\theta_t}[N(O_t,\theta_t,v_t,L_t)] \rangle]\\
    &\qquad + E[(L_{t-\tau} - L(\theta_t))\langle W(v_t,\theta_t) , N(O_t,\theta_t,v_t,L_{t-\tau}) -N(O_t,\theta_t,v_t,L_{t}) \\
    &\qquad + E_{\theta_t}[N(O_t,\theta_t,v_t,L_{t}) - N(O_t,\theta_t,v_t,L_{t-\tau})] \rangle]\\
    &\leq 2U_{w}G\vert L_t - L_{t-\tau} \vert + 4U_{r}U_{w}B\vert L_t - L_{t-\tau} \vert\\
    &=2U_{w}(G + 2U_{r}B)\vert L_t - L_{t-\tau} \vert.
\end{align*}
The claim follows. 
\end{proof}

\begin{sublemma}\label{omega2}
    \begin{align*}
        E[\Omega(O_t,\theta_t,v_t,L_{t-\tau})] - E[\Omega(O_t,\theta_t,v_{t-\tau},L_{t-\tau})] \leq 8BU_{r}(G + U_{w})\Vert v_t - v_{t-\tau} \Vert.
    \end{align*}
\end{sublemma}

\begin{proof}
    \begin{align*}
        &E[\Omega(O_t,\theta_t,v_t,L_{t-\tau})] - E[\Omega(O_t,\theta_t,v_{t-\tau},L_{t-\tau})]\\
        &= E[(L_{t-\tau} - L(\theta_t))\langle W(v_t,\theta_t) , -N(O_t,\theta_t,v_t,L_{t-\tau}) + E_{\theta_t}[N(O_t,\theta_t,v_t,L_{t-\tau})] \rangle]\\
        &\qquad- E[(L_{t-\tau} - L(\theta_t))\langle W(v_{t-\tau},\theta_t) , -N(O_t,\theta_t,v_{t-\tau},L_{t-\tau})\\
        &\qquad+ E_{\theta_t}[N(O_t,\theta_t,v_{t-\tau},L_{t-\tau})] \rangle]\\
        &= E[(L_{t-\tau} - L(\theta_t))\langle W(v_t,\theta_t) , -N(O_t,\theta_t,v_t,L_{t-\tau}) + E_{\theta_t}[N(O_t,\theta_t,v_t,L_{t-\tau})] \rangle]\\
        &\qquad - E[(L_{t-\tau} - L(\theta_t))\langle W(v_{t-\tau},\theta_t) , -N(O_t,\theta_t,v_t,L_{t-\tau})\\
        &\qquad+ E_{\theta_t}[N(O_t,\theta_t,v_t,L_{t-\tau})] \rangle]\\
        &\qquad + E[(L_{t-\tau} - L(\theta_t))\langle W(v_{t-\tau},\theta_t) , -N(O_t,\theta_t,v_t,L_{t-\tau}) + E_{\theta_t}[N(O_t,\theta_t,v_t,L_{t-\tau})] \rangle]\\
        &\qquad- E[(L_{t-\tau} - L(\theta_t))\langle W(v_{t-\tau},\theta_t) , -N(O_t,\theta_t,v_{t-\tau},L_{t-\tau}) + E_{\theta_t}[N(O_t,\theta_t,v_{t-\tau},L_{t-\tau})] \rangle]\\
        &=E[(L_{t-\tau} - L(\theta_t))\langle W(v_t,\theta_t) - W(v_{t-\tau},\theta_t)  , -N(O_t,\theta_t,v_t,L_{t-\tau})\\
        &\qquad + E_{\theta_t}[N(O_t,\theta_t,v_t,L_{t-\tau})] \rangle] + E[(L_{t-\tau} - L(\theta_t))\langle W(v_{t-\tau},\theta_t) , N(O_t,\theta_t,v_{t-\tau},L_{t-\tau})\\
        &\qquad -N(O_t,\theta_t,v_t,L_{t-\tau})
       + E_{\theta_t}[N(O_t,\theta_t,v_t,L_{t-\tau}) - N(O_t,\theta_t,v_{t-\tau},L_{t-\tau})] \rangle]\\
        & \leq 4U_{r}G\Vert W(v_t,\theta_t) - W(v_{t-\tau},\theta_t)  \Vert + 4U_{r}U_{w}\Vert N(O_t,\theta_t,v_t,L_{t-\tau}) - N(O_t,\theta_t,v_{t-\tau},L_{t-\tau})\Vert\\
        &\leq 8BU_{r}G\Vert v_t - v_{t-\tau} \Vert + 8U_{r}U_{w}B\Vert v_t - v_{t-\tau}\Vert.
    \end{align*}
\end{proof}

\begin{sublemma}\label{omega3}
    \begin{align*}
       E[\Omega(O_t,\theta_t,v_{t-\tau},L_{t-\tau})] - E[\Omega(O_t,\theta_{t-\tau},v_{t-\tau},L_{t-\tau})] \leq M_1\Vert \theta_t - \theta_{t-\tau} \Vert.
    \end{align*}

for some $M_1 > 0$.
\end{sublemma}

\begin{proof}
\begin{align*}
     &E[\Omega(O_t,\theta_t,v_{t-\tau},L_{t-\tau})] - E[\Omega(O_t,\theta_{t-\tau},v_{t-\tau},L_{t-\tau})] \\
     & = E[(L_{t-\tau} - L(\theta_t))\langle W(v_{t-\tau},\theta_t) , -N(O_t,\theta_t,v_{t-\tau},L_{t-\tau}) + E_{\theta_t}[N(O_t,\theta_t,v_{t-\tau},L_{t-\tau})] \rangle]\\
     &\qquad - E[(L_{t-\tau} - L(\theta_{t-\tau}))\langle W(v_{t-\tau},\theta_{t-\tau}) , -N(O_t,\theta_{t-\tau},v_{t-\tau},L_{t-\tau})\\
     &\qquad+ E_{\theta_{t-\tau}}[N(O_t,\theta_{t-\tau},v_{t-\tau},L_{t-\tau})] \rangle]\\
     &=  E[(L_{t-\tau} - L(\theta_t))\langle W(v_{t-\tau},\theta_t) , -N(O_t,\theta_t,v_{t-\tau},L_{t-\tau}) + E_{\theta_t}[N(O_t,\theta_t,v_{t-\tau},L_{t-\tau})] \rangle]\\
     &\qquad - E[(L_{t-\tau} - L(\theta_{t-\tau}))\langle W(v_{t-\tau},\theta_t) , -N(O_t,\theta_t,v_{t-\tau},L_{t-\tau}) + E_{\theta_t}[N(O_t,\theta_t,v_{t-\tau},L_{t-\tau})] \rangle]\\
     &\qquad + E[(L_{t-\tau} - L(\theta_{t-\tau}))\langle W(v_{t-\tau},\theta_t) , -N(O_t,\theta_t,v_{t-\tau},L_{t-\tau}) + E_{\theta_t}[N(O_t,\theta_t,v_{t-\tau},L_{t-\tau})] \rangle]\\
     &\qquad - E[(L_{t-\tau} - L(\theta_{t-\tau}))\langle W(v_{t-\tau},\theta_{t-\tau}) , -N(O_t,\theta_{t-\tau},v_{t-\tau},L_{t-\tau})\\
     &\qquad + E_{\theta_{t-\tau}}[N(O_t,\theta_{t-\tau},v_{t-\tau},L_{t-\tau})] \rangle]\\
     &= E[(L(\theta_{t-\tau}) - L(\theta_t))\langle W(v_{t-\tau},\theta_t) , -N(O_t,\theta_t,v_{t-\tau},L_{t-\tau}) + E_{\theta_t}[N(O_t,\theta_t,v_{t-\tau},L_{t-\tau})] \rangle]\\
     &\qquad + E[(L_{t-\tau} - L(\theta_{t-\tau}))\langle W(v_{t-\tau},\theta_t) , -N(O_t,\theta_t,v_{t-\tau},L_{t-\tau}) + E_{\theta_t}[N(O_t,\theta_t,v_{t-\tau},L_{t-\tau})] \rangle]\\
     &\qquad - E[(L_{t-\tau} - L(\theta_{t-\tau}))\langle W(v_{t-\tau},\theta_{t-\tau}) , -N(O_t,\theta_t,v_{t-\tau},L_{t-\tau})\\
     &\qquad \qquad+ E_{\theta_t}[N(O_t,\theta_t,v_{t-\tau},L_{t-\tau})] \rangle]\\
     &\qquad + E[(L_{t-\tau} - L(\theta_{t-\tau}))\langle W(v_{t-\tau},\theta_{t-\tau}) , -N(O_t,\theta_t,v_{t-\tau},L_{t-\tau}) \\
     &\qquad \qquad + E_{\theta_t}[N(O_t,\theta_t,v_{t-\tau},L_{t-\tau})] \rangle]\\
     &\qquad - E[(L_{t-\tau} - L(\theta_{t-\tau}))\langle W(v_{t-\tau},\theta_{t-\tau}) , -N(O_t,\theta_{t-\tau},v_{t-\tau},L_{t-\tau})\\
     &\qquad + E_{\theta_{t-\tau}}[N(O_t,\theta_{t-\tau},v_{t-\tau},L_{t-\tau})] \rangle]\\
     &\leq 2GU_{w}\Vert L_{t-\tau} - L(\theta_{t-\tau}) \Vert + 4U_{r}U_{G}\Vert W(v_{t-\tau},\theta_{t}) - W(v_{t-\tau},\theta_{t-\tau}) \Vert\\
     &\qquad + 4U_{r}U_{w}\Vert N(O_t,\theta_{t},v_{t-\tau},L_{t-\tau}) - N(O_t,\theta_{t-\tau},v_{t-\tau},L_{t-\tau}) \Vert\\
     &= \mathcal{O}(\Vert \theta_t - \theta_{t-\tau} \Vert).
\end{align*}
The claim follows. 
\end{proof}

\begin{sublemma}\label{omega4}
    \begin{align*}
        E[\Omega(O_t,\theta_{t-\tau},v_{t-\tau},L_{t-\tau})]  -  E[\Omega(\tilde{O_t},\theta_{t-\tau},v_{t-\tau},L_{t-\tau})] \leq M_2\sum\limits_{i = t-\tau}^{t}E\Vert \theta_{i} - \theta_{t-\tau} \Vert
    \end{align*}
for some  $M_2 > 0$.
\end{sublemma}

\begin{proof}
    The proof is similar to that of Lemma D.10 in \citep{wu2022finite}, hence we omit the same here.
\end{proof}

\begin{sublemma}\label{omega5}
    \begin{align*}
        E[\Omega(\tilde{O_t},\theta_{t-\tau},v_{t-\tau},L_{t-\tau})] - E[\Omega(O_t^{'},\theta_{t-\tau},v_{t-\tau},L_{t-\tau})] \leq M_3bk^{\tau-1},
    \end{align*}
for some $M_3 > 0$.
\end{sublemma}

\begin{proof}
    The proof is similar to that of Lemma D.11 in \citep{wu2022finite}, hence we omit the same here.
\end{proof}

It is clear from the definition in Section \ref{average_reward_convergence} that  $ E[\Omega(O_t^{'},\theta_{t-\tau},v_{t-\tau},L_{t-\tau})] = 0$. Now collecting the results from lemmas \ref{omega1} - \ref{omega5}, we have,

\begin{align*}
    E[\Omega(O_t,\theta_t,v_t,L_t)] &\leq (2U_{w}G + 4U_{r}U_{w}B)\vert L_t - L_{t-\tau} \vert + 8BU_{r}(G + U_{w})\Vert v_t - v_{t-\tau} \Vert \\
    &\qquad + M_1\Vert \theta_t - \theta_{t-\tau} \Vert
   + M_2\sum\limits_{i = t-\tau}^{t}E\Vert \theta_{i} - \theta_{t-\tau} \Vert +  M_3bk^{\tau-1}.
\end{align*}
The claim in Lemma~\ref{omega} now follows. 
\end{proof}

  \subsubsection*{Proof of convergence of the average reward estimate.}

From the update rule of the reward estimation recursion in  Algorithm \ref{algo}, we have
\begin{align*}
    L_{t+1}-L(\theta_{t+1})=L_t-L(\theta_t)+L(\theta_t)-L(\theta_{t+1})+\gamma_t(r_t-L_t).
\end{align*}
We then have
\begin{align*}
    y_{t+1}^2=&\ (y_t+L(\theta_t)-L(\theta_{t+1})+\gamma_t(r_t-L_t))^2\\
    \leq &\ y_t^2+2y_t(L(\theta_t)-L(\theta_{t+1}))+2\gamma_t y_t(r_t-L_t)+2(L(\theta_t)-L(\theta_{t+1}))^2+2\gamma_t^2(r_t-L_t)^2\\
    = &\  (1-2\gamma_t)y_t^2+2\gamma_ty_t(r_t-L(\theta_t))+2y_t(L(\theta_t)-L(\theta_{t+1}))+2(L(\theta_t)-L(\theta_{t+1}))^2\\
    &\qquad+2\gamma_t^2(r_t-L_t)^2.
\end{align*}

Taking expectations, rearranging  and summing from $\tau_t$ to $t$ we obtain,

\begin{align*}
    \sum\limits_{k=\tau_t}^{t} \mathbb{E}[y_k^2]\leq &\  \underbrace{\sum\limits_{t=\tau_t}^t\frac{1}{2\gamma_k}\mathbb{E}(y_k^2-y^2_{k+1})}_{I_1}+\underbrace{\sum\limits_{k=\tau_t}^{t}\mathbb{E}[y_k(r_k-L(\theta_k))]}_{I_2}+\underbrace{\sum\limits_{k=\tau_t}^{t}\frac{1}{\gamma_k}\mathbb{E}[y_k(L(\theta_k)-L(\theta_{k+1})]}_{I_3}\\
    &\  +\underbrace{\sum\limits_{k=\tau_t}^{t}\frac{1}{\gamma_k}\mathbb{E}[(L(\theta_k)-L(\theta_{k+1}))^2]}_{I_4}+\underbrace{\sum\limits_{k=\tau_t}^{t}\gamma_k\mathbb{E}[(r_k-L_k)^2]}_{I_5}.
\end{align*}

For term $I_1$, from Abel summation by parts, we have
\begin{align*}
    I_1=& \sum\limits_{k=\tau_t}^{t}\frac{1}{2\gamma_k}(y_k^2-y_{k+1}^2)\\
    =&\  \sum\limits_{k=\tau_t+1}^{t}y_k^2(\frac{1}{2\gamma_k}-\frac{1}{2\gamma_{k-1}})+\frac{1}{2\gamma_{\tau_t}}y_{\tau_t}^2-\frac{1}{\gamma_{t}}y_{t+1}^2\\
    \leq &\ \frac{2U_r^2}{\gamma_{t}}\\
    = &\  2c_{\gamma}U_r^2(1+t)^{\nu}.
\end{align*}

For term $I_2$, we have
\begin{align*}
    \sum\limits_{k=\tau_t}^{t}\mathbb{E}[y_k(r_k-L(\theta_k))] = \mathcal{O}(\log^2 t \cdot t^{1-\nu}).
\end{align*}
The analysis of part $I_2$ will be similar to the one of part $I_2$ in section C.2 of \citep{wu2022finite}.

For $I_3$, if $y_t>0$, from \eqref{l-smooth1}, we have
\begin{align*}
    &y_t(L(\theta_t)-L(\theta_{t+1}))\\
    \leq &\ y_t(\frac{L_{J'}}{2}\Vert \theta_t-\theta_{t+1}\Vert^2+\langle \nabla L(\theta_t), \theta_t-\theta_{t+1}\rangle)\\
    \leq &\ L_{J'}U_r\Vert \theta_t-\theta_{t+1}\Vert^2 + y_t\langle M(\theta_t,v_t),\theta_t - \theta_{t+1}\rangle\\
    &\qquad+ y_t\langle E_{\theta_t}[(V^{\theta_{t}}(s_{t+1}) - v(t)^T\phi(s_{t+1}) - V^{\theta_{t}}(s_{t}) + v(t)^T\phi(s_{t}))\nabla \log\pi_{\theta_t}(a_t|s_t)] \\
    &\qquad, \theta_t - \theta_{t+1} \rangle 
\end{align*}
If $y_t\leq 0$, from \eqref{l-smooth2}, we have
\begin{align*}
   & y_t(L(\theta_t)-L(\theta_{t+1}))\\
   \leq&\  y_t(-\frac{L_{J'}}{2}\Vert \theta_t-\theta_{t+1}\Vert^2+\langle \nabla L(\theta_t), \theta_t-\theta_{t+1}\rangle)\\
    \leq &\ L_{J'}U_r\Vert \theta_t-\theta_{t+1}\Vert^2 + y_t\langle M(\theta_t,v_t),\theta_t - \theta_{t+1}\rangle\\
    &+ y_t\langle E_{\theta_t}[(V^{\theta_{t}}(s_{t+1}) - v(t)^T\phi(s_{t+1}) - V^{\theta_{t}}(s_{t}) + v(t)^T\phi(s_{t}))\nabla \log\pi_{\theta_t}(a_t|s_t)], 
    \theta_t - \theta_{t+1} \rangle.
\end{align*}

Overall, we get
\begin{align*}
    I_3=& \sum\limits_{k=\tau_t}^{t}\frac{1}{\gamma_k}\mathbb{E}[y_k(L(\theta_k)-L(\theta_{k+1}))]\\
    \leq &\ \sum\limits_{k=\tau_t}^{t}\frac{1}{\gamma_k}\mathbb{E}[L_{J'}U_r\Vert \theta_k-\theta_{k+1}\Vert^2+|y_k|\Vert \theta_k-\theta_{k+1}\Vert \Vert M(\theta_k,v_k)\Vert]\\
     &\qquad + \sum\limits_{k=\tau_t}^{t}\frac{1}{\gamma_k}E[y_k\langle E_{\theta_k}[(V^{\theta_{k}}(s_{k+1}) - v(k)^T\phi(s_{k+1}) - V^{\theta_{k}}(s_{k}) \\
     &\qquad+ v(k)^T\phi(s_{k}))\nabla \log\pi_{\theta_k}(a_k|s_k)] , \theta_k - \theta_{k+1} \rangle]\\
    \leq &\ \sum\limits_{k=\tau_t}^{t}\mathbb{E}[L_{J'}U_rG^2\frac{\alpha_k^2}{\gamma_k} + G\frac{c_\alpha}{c_\gamma}|y_k|\Vert M(\theta_k,v_k)\Vert]\\
     &\qquad + \sum\limits_{k=\tau_t}^{t}\frac{1}{\gamma_k}E[y_k\langle E_{\theta_k}[(V^{\theta_{k}}(s_{k+1}) - v(k)^T\phi(s_{k+1}) - V^{\theta_{k}}(s_{k})\\
     &\qquad+ v(k)^T\phi(s_{k}))\nabla \log\pi_{\theta_k}(a_k|s_k)] , \theta_k - \theta_{k+1} \rangle]\\
    \leq &\ \frac{2L_{J'}U_rG^2c_\alpha^2}{c_\gamma}(1+t-\tau_t)^{1-\nu}+ G\frac{c_\alpha}{c_\gamma}(\sum\limits_{k=\tau_t}^{t}\mathbb{E}y_t^2)^{\frac{1}{2}}(\sum\limits_{k=\tau_t}^{t}\mathbb{E}\Vert M(\theta_k,v_k)\Vert^2)^\frac{1}{2}\\
    &\qquad + \sum\limits_{k=\tau_t}^{t}\frac{1}{\gamma_k}E[y_k\langle E_{\theta_k}[(V^{\theta_{k}}(s_{k+1}) - v(k)^T\phi(s_{k+1}) - V^{\theta_{k}}(s_{k})\\
    &\qquad+ v(k)^T\phi(s_{k}))\nabla \log\pi_{\theta_k}(a_k|s_k)] , \theta_k - \theta_{k+1} \rangle]\\
    = &\ \frac{2L_{J'}U_rG^2c_\alpha^2}{c_\gamma}(1+t-\tau_t)^{1-\nu}+ G\frac{c_\alpha}{c_\gamma}(\sum\limits_{k=\tau_t}^{t}\mathbb{E}y_t^2)^{\frac{1}{2}}(\sum\limits_{k=\tau_t}^{t}\mathbb{E}\Vert M(\theta_k,v_k)\Vert^2)^\frac{1}{2}\\
    &\qquad +  \underbrace{\sum\limits_{k=\tau_t}^{t}\frac{c_\alpha}{c_\gamma}E[y_k\langle W(v_k,\theta_k) , -\delta_k \nabla_{\theta} \log \pi_{\theta_{k}}(s_k|a_k) + E_{\theta_k}[\delta_k \nabla_{\theta} \log \pi_{\theta_{k}}(s_k|a_k)] \rangle]}_{I_a}\\
    &\qquad + \underbrace{\sum\limits_{k=\tau_t}^{t}\frac{c_\alpha}{c_\gamma}E[y_k\langle W(v_k,\theta_k) , -E_{\theta_k}[\delta_k \nabla_{\theta} \log \pi_{\theta_{k}}(s_k|a_k)] \rangle]}_{I_b}
\end{align*}

For term $I_a$, we have,

\begin{align*}
    &\frac{c_\alpha}{c_\gamma}\sum\limits_{k=\tau_t}^{t}E[y_k\langle W(v_k,\theta_k) , -N(O_k,\theta_k,v_k,L_k) + E_{\theta_k}[N(O_k,\theta_k,v_k,L_k)] \rangle]\\
    & = \frac{c_\alpha}{c_\gamma}\sum\limits_{k=\tau_t}^{t}\bigg((2U_{w}G + 4U_{r}U_{w}B)\vert L_k - L_{k-\tau} \vert + 8BU_{r}(G + U_{w})\Vert v_k - v_{k-\tau} \Vert \\
    &\qquad + M_1\Vert \theta_k - \theta_{k-\tau} \Vert
   + M_2\sum\limits_{i = k-\tau}^{k}E\Vert \theta_{i} - \theta_{k-\tau} \Vert +  M_3bk^{m-1}\bigg).
\end{align*}

Taking $\tau := \tau_t$, we have,
\begin{align*}
    I_a = \mathcal{O}(\tau_t^2 \cdot t^{1-\nu}).
\end{align*}

For term $I_b$, we have,

\begin{align*}
    &\sum\limits_{k=\tau_t}^{t}\frac{c_\alpha}{c_\gamma}E[y_k\langle W(v_k,\theta_k) , -E_{\theta_k}[\delta_k \nabla_{\theta} \log \pi_{\theta_{k}}(s_k|a_k)] \rangle]\\
    &= \frac{c_\alpha}{c_\gamma}\sum\limits_{k=\tau_t}^{t}E[y_k\langle W(v_k,\theta_k) , -M(\theta_k,v_k) \rangle] + \frac{c_\alpha}{c_\gamma}\sum\limits_{k=\tau_t}^{t}E[y_k\langle W(v_k,\theta_k) , y_kE_{\theta_k}[\nabla_{\theta} \log \pi_{\theta_{k}}(s_k|a_k)] \rangle]\\
    &\leq U_w\frac{c_\alpha}{c_\gamma}(\sum\limits_{k=\tau_t}^{t}\mathbb{E}y_t^2)^{\frac{1}{2}}(\sum\limits_{k=\tau_t}^{t}\mathbb{E}\Vert M(\theta_k,v_k)\Vert^2)^\frac{1}{2} + \frac{c_\alpha}{c_\gamma}\sum\limits_{k=\tau_t}^{t}E[y_k^2\langle W(v_k,\theta_k) , E_{\theta_k}[\nabla_{\theta} \log \pi_{\theta_{k}}(s_k|a_k)] \rangle]\\
    &\leq U_w\frac{c_\alpha}{c_\gamma}(\sum\limits_{k=\tau_t}^{t}\mathbb{E}y_t^2)^{\frac{1}{2}}(\sum\limits_{k=\tau_t}^{t}\mathbb{E}\Vert M(\theta_k,v_k)\Vert^2)^\frac{1}{2} + \frac{c_\alpha}{c_\gamma}U_w B\sum\limits_{k=\tau_t}^{t}E[y_k^2].
\end{align*}

Hence collecting all the terms, we have,
\begin{align*}
    I_3= &\ \frac{2L_{J'}U_rG^2c_\alpha^2}{c_\gamma}(1+t-\tau_t)^{1-\nu}+ G\frac{c_\alpha}{c_\gamma}(\sum\limits_{k=\tau_t}^{t}\mathbb{E}y_t^2)^{\frac{1}{2}}(\sum\limits_{k=\tau_t}^{t}\mathbb{E}\Vert M(\theta_k,v_k)\Vert^2)^\frac{1}{2} +  \mathcal{O}(\log^2 t \cdot t^{1-\nu})\\
    &\qquad + U_w\frac{c_\alpha}{c_\gamma}(\sum\limits_{k=\tau_t}^{t}\mathbb{E}y_t^2)^{\frac{1}{2}}(\sum\limits_{k=\tau_t}^{t}\mathbb{E}\Vert M(\theta_k,v_k)\Vert^2)^\frac{1}{2} + \frac{c_\alpha}{c_\gamma}U_w B\sum\limits_{k=\tau_t}^{t}E[y_k^2]
\end{align*}
where $G = 2B(U_r + U_v)$.

For term $I_4$, we have
\begin{align*}
    I_4=& \sum\limits_{k=\tau_t}^{t}\frac{1}{\alpha_k}\mathbb{E}[(L(\theta_k)-L(\theta_{k+1}))^2]\\
    \leq &\ \sum\limits_{k=\tau_t}^{t}\frac{1}{\alpha_k}L^2_J\mathbb{E}\Vert \theta_k-\theta_{k+1}\Vert^2\\
    \leq &\ \sum\limits_{k=\tau_t}^{t}\frac{1}{\alpha_k}L_J^2G^2\alpha_k^2\\
    = &\ L_J^2G^2\sum\limits_{k=\tau_t}^{t}\alpha_k\\
    \leq &\ L_J^2G^2(1+t)^{1-\nu}.
\end{align*}

For term $I_5$, we have
\begin{align*}
    I_5=& \sum\limits_{k=\tau_t}^{t}\alpha_k\mathbb{E}[(r_k-L(\theta_k))^2]\\
    \leq &\  \sum\limits_{k=\tau_t}^{t} 4U_r^2\alpha_k\\
    \leq &\  4 U_r^2(1+t)^{1-\nu}.
\end{align*}

After combining all of the terms, we have,

\begin{align*}
    \sum\limits_{k=\tau_t}^{t} \mathbb{E}[y_k^2]\leq &\   \mathcal{O}(\log^2 t \cdot t^{1-\nu}) + \mathcal{O}(t^\nu) + (G + U_{w})\frac{c_\alpha}{c_\gamma}(\sum\limits_{k=\tau_t}^{t}\mathbb{E}[y_k^2])^{\frac{1}{2}}(\sum\limits_{k=\tau_t}^{t}\mathbb{E}\Vert M(\theta_k,v_k)\Vert^2)^\frac{1}{2}\\
    &\qquad+ \frac{c_\alpha}{c_\gamma}U_w B\sum\limits_{k=\tau_t}^{t}E[y_k^2].
\end{align*}

After rearranging terms above, we obtain,

\begin{align*}
    \bigg( 1 - \frac{c_\alpha}{c_\gamma}U_w B\bigg)\sum\limits_{k=\tau_t}^{t} \mathbb{E}[y_k^2]\leq &\   \mathcal{O}(\log^2 t \cdot t^{1-\nu}) + \mathcal{O}(t^\nu)\\
    &\qquad+ (G + U_{w})\frac{c_\alpha}{c_\gamma}(\sum\limits_{k=\tau_t}^{t}\mathbb{E}[y_k^2])^{\frac{1}{2}}(\sum\limits_{k=\tau_t}^{t}\mathbb{E}\Vert M(\theta_k,v_k)\Vert^2)^\frac{1}{2}.
\end{align*}

Now we require the condition ${\displaystyle\bigg(1 - \frac{c_\alpha}{c_\gamma}U_w B \bigg)> 0}$ to be satisfied for the left hand side term to be positive. Hence, we need to choose the values of $c_\alpha$ and $c_\gamma$ such that ${\displaystyle\bigg(1 - \frac{c_\alpha}{c_\gamma}U_w B \bigg)> 0}$.
We thus have,

\begin{align*}
   \sum\limits_{k=\tau_t}^{t} \mathbb{E}[y_k^2]\leq &\   \mathcal{O}(\log^2 t \cdot t^{1-\nu}) + \mathcal{O}(t^\nu) + \frac{(G + U_{w})}{ \bigg( 1 - \frac{c_\alpha}{c_\gamma}U_w B\bigg)}\frac{c_\alpha}{c_\gamma}(\sum\limits_{k=\tau_t}^{t}\mathbb{E}[y_k^2])^{\frac{1}{2}}(\sum\limits_{k=\tau_t}^{t}\mathbb{E}\Vert M(\theta_k,v_k)\Vert^2)^\frac{1}{2}.
\end{align*}

After applying the squaring technique (see page 23 of \citep{wu2022finite}), we have,

\begin{align}\label{ineq1}
    \sum\limits_{k=\tau_t}^{t} \mathbb{E}[y_k^2] \leq \mathcal{O}(\log^2 t \cdot t^{1-\nu}) + \mathcal{O}(t^\nu) + 2\frac{(G + U_w)^2}{(1 - \frac{c_\alpha}{c_\gamma}U_w B)^2}\frac{c_\alpha^2}{c_\gamma^2}\sum\limits_{k=\tau_t}^{t}\mathbb{E}\Vert M(\theta_k,v_k)\Vert^2.
\end{align}

\subsubsection{Convergence of the actor}\label{actor_convergence_proof}

Notations used here:

\begin{align}
    \begin{split}
        O_t :&= (s_t , a_t , s_{t+1})\\
        h(O_t,\theta_t,L_t,v_t) :&= ( r(s_t,a_t)- L_t  + \phi(s_{t+1})^{\top} v_{t} - \phi(s_t)^{\top} v_{t})\nabla \log\pi_{\theta_t}(a_t|s_t)\\
        I(O_t,L_t,\theta_t,v_t) :&= \langle \nabla L(\theta_t) ,h(O_t,\theta_t,L_t,v_t) - E_{s_t \sim \mu_{\theta_t},a_t \sim \pi_{\theta_t},s_{t+1} \sim p}[h(O_t,\theta_t,L_t,v_t)] \rangle  \\
        \bar{h}(O_t,\theta_t,v_t) :&= ( r(s_t,a_t)- L(\theta_t)  + \phi(s_{t+1})^{\top} v_{t} - \phi(s_t)^{\top} v_{t})\nabla \log\pi_{\theta_t}(a_t|s_t)\\
        M(\theta_t,v_t) :& = E_{s_t \sim \mu_{\theta_t},a_t \sim \pi_{\theta_t},s_{t+1} \sim p}[\bar{h}(O_t,\theta_t,v_t)]\\
        \bar{W}(O_t,\theta_t,v_t) :&= (V^{\theta_t}(s_{t+1}) -  \phi(s_{t+1})^Tv_t - V^{\theta_t}(s_t) + \phi(s_{t})^Tv_t)\nabla \log\pi_{\theta_t}(a_t|s_t)\\
        \Xi(O_t,\theta_t,v_t) :&= \langle  E_{\theta_t}[\bar{W}(O_t,\theta_t,v_t)] , E_{\theta_t}[\bar{h}(O_t,\theta_t,v_t)] \rangle - \langle \bar{W}(O_t,\theta_t,v_t) , E_{\theta_t}[\bar{h}(O_t,\theta_t,v_t)] \rangle .
    \end{split}
\end{align}
The following supporting lemmas will help in the proof.
\begin{lemma}\label{Gamma_actor}
Under assumptions \ref{assum:bounded_feature_norm}, \ref{assum:ergodicity}, \ref{assum:policy-lipschitz-bounded}, \ref{V_lipschitz},for any $t \geq \tau >  0$ , we have
   \begin{align*}
       E[I(O_t,L_t,\theta_t,v_t)] &\geq-(D_1(\tau + 1)\sum_{k=t-\tau + 1}^{t} E\Vert \theta_k - \theta_{k-1} \Vert + D_2bk^{\tau - 1} + D_3E\Vert v_t - v_{t-\tau} \Vert\\
       &\qquad+ D_4E\vert L_t - L_{t-\tau} \vert ),
   \end{align*}
for some $D_1 > 0 , D_2 > 0,D_3 > 0$ and $D_4 > 0$.
\end{lemma}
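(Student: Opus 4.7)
The plan is to mirror the decomposition strategy used in Lemma \ref{omega}. The quantity $E[I(O_t,L_t,\theta_t,v_t)]$ would be zero if $O_t$ were drawn from the stationary distribution $\mu_{\theta_t}\otimes\pi_{\theta_t}\otimes P$ associated with the \emph{current} parameter $\theta_t$, so the entire argument is about controlling the error introduced by (i) the actual chain being driven by a time-varying policy and (ii) the quantities $\theta_t, v_t, L_t$ evolving along with it. Because we only need a one-sided lower bound, it suffices to upper bound $|E[I(O_t,L_t,\theta_t,v_t)]|$ by the right-hand side.

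First, I would telescope across the three time-varying parameters by writing
\begin{align*}
E[I(O_t,L_t,\theta_t,v_t)]
&= \bigl(E[I(O_t,L_t,\theta_t,v_t)]-E[I(O_t,L_{t-\tau},\theta_t,v_t)]\bigr)\\
&\quad+\bigl(E[I(O_t,L_{t-\tau},\theta_t,v_t)]-E[I(O_t,L_{t-\tau},\theta_t,v_{t-\tau})]\bigr)\\
&\quad+\bigl(E[I(O_t,L_{t-\tau},\theta_t,v_{t-\tau})]-E[I(O_t,L_{t-\tau},\theta_{t-\tau},v_{t-\tau})]\bigr)\\
&\quad+E[I(O_t,L_{t-\tau},\theta_{t-\tau},v_{t-\tau})].
\end{align*}
The first three differences are bounded using the Lipschitz/boundedness conditions. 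The $L$-substitution gives a term of order $|L_t-L_{t-\tau}|$ (since both $\nabla L$ and $h$ are bounded and $h$ depends linearly on $L$). The $v$-substitution yields a term of order $\|v_t-v_{t-\tau}\|$ (from $h$ being linear in $v$ with bounded features and the boundedness of $\nabla L$). The $\theta$-substitution produces a term of order $\|\theta_t-\theta_{t-\tau}\|$ via Assumption~\ref{assum:policy-lipschitz-bounded}(b), $L_{J'}$-smoothness of $L$ from Lemma~\ref{L_smoothness}, and Lipschitz continuity of the stationary-distribution expectation (Assumption~\ref{smoothness_mu} and Assumption~\ref{V_lipschitz}). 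These three contributions supply the $D_3,D_4$ and part of the $D_1$ terms after noting $\|\theta_t-\theta_{t-\tau}\|\le\sum_{k=t-\tau+1}^{t}\|\theta_k-\theta_{k-1}\|$.

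Next, I would handle the remaining term $E[I(O_t,L_{t-\tau},\theta_{t-\tau},v_{t-\tau})]$ by coupling the actual Markov chain to the auxiliary chain \eqref{chain-au} in which $\theta_{t-\tau}$ is frozen for $\tau$ steps starting from $s_{t-\tau}$. Writing $\tilde O_t=(\tilde s_t,\tilde a_t,\tilde s_{t+1})$ for the auxiliary sample, the difference
$E[I(O_t,L_{t-\tau},\theta_{t-\tau},v_{t-\tau})]-E[I(\tilde O_t,L_{t-\tau},\theta_{t-\tau},v_{t-\tau})]$
is controlled by the total-variation distance between the laws of $O_t$ and $\tilde O_t$. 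A standard simulation-lemma argument, iterating Assumption~\ref{assum:policy-lipschitz-bounded}(c) across the $\tau$ steps where the two chains use different policies, produces a bound of order $\tau\sum_{k=t-\tau+1}^{t}\|\theta_k-\theta_{k-1}\|$, yielding the $(\tau+1)$ factor in front of the telescoping sum that appears in the claimed $D_1$ term. I would then couple $\tilde O_t$ with $O_t'$ drawn exactly from $\mu_{\theta_{t-\tau}}\otimes\pi_{\theta_{t-\tau}}\otimes P$; by Assumption~\ref{assum:ergodicity} applied with $m=\tau$, the gap is at most a constant multiple of $bk^{\tau-1}$, giving the $D_2 bk^{\tau-1}$ term. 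Finally, $E[I(O_t',L_{t-\tau},\theta_{t-\tau},v_{t-\tau})]=0$ by definition of $I$ and the stationarity of $O_t'$ under $\theta_{t-\tau}$.

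The main obstacle, as in Lemma~\ref{omega}, will be the auxiliary-chain coupling step: one has to verify carefully that the TV distance between the laws of $O_t$ and $\tilde O_t$ is at most $\mathcal{O}(\tau\sum_{k=t-\tau+1}^{t}\|\theta_k-\theta_{k-1}\|)$ when Assumption~\ref{assum:policy-lipschitz-bounded}(c) is applied iteratively along the $\tau$ steps where the two chains diverge, and to ensure the resulting constant absorbs the bound on $\|\nabla L\|$, $\|h\|$, and the feature norms. The rest of the bookkeeping is routine, since each of the five decomposition pieces already appears in the proof of Lemma~\ref{omega} with essentially the same Lipschitz inputs, and the sublemmas \ref{omega1}--\ref{omega5} can be reused almost verbatim with $W$ replaced by $\nabla L(\theta_t)$.
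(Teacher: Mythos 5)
Your proposal is correct and follows essentially the same route as the paper's proof: the same telescoping substitution of $L$, $v$, and $\theta$ (the order is immaterial), the same auxiliary frozen-policy chain coupling for the $(\tau+1)\sum_k\|\theta_k-\theta_{k-1}\|$ term, the same geometric-mixing step for the $bk^{\tau-1}$ term, and the same observation that the expectation vanishes under the stationary law of $\theta_{t-\tau}$. The only cosmetic difference is that the paper credits the drift-of-stationary-distribution bound to the total-variation estimate of Lemma B.1 of Wu et al.\ (driven by Assumption~\ref{assum:policy-lipschitz-bounded}(c) and Assumption~\ref{assum:ergodicity}) rather than to Assumption~\ref{smoothness_mu}, which is not among the lemma's hypotheses.
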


\begin{proof}
    We can decompose $ E[I(O_t,L_t,\theta_t,v_t)]$ as:
    \begin{align*}
        &E[I(O_t,L_t,\theta_t,v_t)] \\
        &= E[I(O_t,L_t,\theta_t,v_t) - I(O_t,L_t,\theta_t,v_{t-\tau})] + E[I(O_t,L_t,\theta_t,v_{t-\tau}) - I(O_t,L_t,\theta_{t-\tau},v_{t-\tau})]\\
        &\qquad + E[I(O_t,L_t,\theta_{t-\tau},v_{t-\tau}) - I(O_t,L_{t-\tau},\theta_{t-\tau},v_{t-\tau})] \\
        &\qquad + E[I(O_t,L_{t-\tau},\theta_{t-\tau},v_{t-\tau}) - I(\tilde{O_t},L_{t-\tau},\theta_{t-\tau},v_{t-\tau})]\\
        &\qquad + E[I(\tilde{O_t},L_{t-\tau},\theta_{t-\tau},v_{t-\tau}) - I(O^{'}_t,L_{t-\tau},\theta_{t-\tau},v_{t-\tau})]\\
        &\qquad + E[I(O^{'}_t,L_{t-\tau},\theta_{t-\tau},v_{t-\tau})].
    \end{align*}

In the above equality, $\tilde{O_t} = (\tilde{s}_t, \tilde{a}_t, \tilde{s}_{t+1})$ is from the auxiliary Markov chain defined in \cref{chain-au} and $O_t^{'} = (s^{'}_t , a^{'}_t ,s^{'}_{t+1} )$ where $ s^{'}_t \sim \mu_{\theta_{t-\tau}} , a^{'}_t \sim \pi_{\theta_{t-\tau}}$ and $s^{'}_{t+1} \sim P(.|s^{'}_t, a^{'}_t)$.    
The proof of Lemma \ref{Gamma_actor} is based on the results of lemmas \ref{gamma-actor1} - \ref{gamma-actor5} below that we now state and prove.
In the auxiliary lemmas below, we let 
$t \geq \tau > 0$.

\begin{sublemma}\label{gamma-actor1}
    \begin{align*}
        \vert E[I(O_t,L_t,\theta_t,v_t) - I(O_t,L_t,\theta_t,v_{t-\tau})]\vert \leq 4BG\Vert v_t - v_{t-\tau} \Vert.
    \end{align*}
\end{sublemma}

\begin{proof}
    \begin{align*}
        &\vert E[I(O_t,L_t,\theta_t,v_t) - I(O_t,L_t,\theta_t,v_{t-\tau})] \vert \\
        =& \vert E[\langle \nabla L(\theta_t) ,h(O_t,\theta_t,L_t,v_t) - h(O_t,\theta_t,L_t,v_{t-\tau})\\
        &\qquad- E_{s_t \sim \mu_{\theta_t},a_t \sim \pi_{\theta_t},s_{t+1} \sim p}[h(O_t,\theta_t,L_t,v_t)-h(O_t,\theta_t,L_t,v_{t-\tau})] \rangle ] \vert. \\
    \end{align*}
Further,
\begin{align*}
    \Vert h(O_t,\theta_t,L_t,v_t) - h(O_t,\theta_t,L_t,v_{t-\tau})\Vert &= \Vert ((\phi(s_{t+1})^{\top}  - \phi(s_t)^{\top}) (v_{t} - v_{t-\tau}))\nabla \log\pi_{\theta_t}(a_t|s_t) \Vert\\
    &\leq 2B\Vert v_t - v_{t-\tau}\Vert.
\end{align*}

Hence,
\begin{align*}
    \vert E[I(O_t,L_t,\theta_t,v_t) - I(O_t,L_t,\theta_t,v_{t-\tau})] \vert \leq 4BG\Vert v_t - v_{t-\tau} \Vert.
\end{align*}
The claim follows. 
\end{proof}

\begin{sublemma}\label{gamma-actor2}
    \begin{align*}
    \vert E[I(O_t,L_t,\theta_t,v_{t-\tau}) - I(O_t,L_t,\theta_{t-\tau},v_{t-\tau})] \vert \leq \overline{C}\Vert \theta_t - \theta_{t-\tau} \Vert 
\end{align*}
for some $\overline{C} > 0$.
\end{sublemma}

\begin{proof}
    \begin{align*}
        &\vert E[I(O_t,L_t,\theta_t,v_{t-\tau}) - I(O_t,L_t,\theta_{t-\tau},v_{t-\tau})] \vert\\
        =&\vert E[ \langle \nabla L(\theta_t) ,h(O_t,\theta_t,L_t,v_{t-\tau}) - E_{\theta_t}[h(O_t,\theta_t,L_t,v_{t-\tau})] \rangle \\
        &\qquad - \langle \nabla L(\theta_{t-\tau}) ,h(O_t,\theta_{t-\tau},L_t,v_{t-\tau}) - E_{\theta_{t-\tau}}[h(O_t,\theta_{t-\tau},L_t,v_{t-\tau})] \rangle ]\vert\\
        \leq &\vert E[\langle \nabla L(\theta_t) ,h(O_t,\theta_t,L_t,v_{t-\tau}) - E_{\theta_t}[h(O_t,\theta_t,L_t,v_{t-\tau})] \rangle \\
        & \qquad - \langle \nabla L(\theta_{t-\tau}) ,h(O_t,\theta_t,L_t,v_{t-\tau}) - E_{\theta_t}[h(O_t,\theta_t,L_t,v_{t-\tau})] \rangle] \vert\\
        & + \vert E[\langle \nabla L(\theta_{t-\tau}) ,h(O_t,\theta_t,L_t,v_{t-\tau}) - E_{\theta_t}[h(O_t,\theta_t,L_t,v_{t-\tau})] \rangle\\
        &\qquad - \langle \nabla L(\theta_{t-\tau}) ,h(O_t,\theta_{t-\tau},L_t,v_{t-\tau}) - E_{\theta_{t-\tau}}[h(O_t,\theta_{t-\tau},L_t,v_{t-\tau})] \rangle ]\vert \\
        =& \vert E[\langle \nabla L(\theta_t) - \nabla L(\theta_{t-\tau}) ,h(O_t,\theta_t,L_t,v_{t-\tau}) - E_{\theta_t}[h(O_t,\theta_t,L_t,v_{t-\tau})] \rangle]  \vert \\
        &\qquad + \vert E[\langle \nabla L(\theta_{t-\tau}) ,h(O_t,\theta_t,L_t,v_{t-\tau})- h(O_t,\theta_{t-\tau},L_t,v_{t-\tau}) \\
        &\qquad - E_{\theta_t}[h(O_t,\theta_t,L_t,v_{t-\tau})] + E_{\theta_{t-\tau}}[h(O_t,\theta_{t-\tau},L_t,v_{t-\tau})] \rangle] \vert\\
        \leq& 2L_{J}G\Vert \theta_t - \theta_{t-\tau} \Vert \\
        &+ \vert E[\Vert \nabla L(\theta_{t-\tau})\Vert (\underbrace{\Vert h(O_t,\theta_t,L_t,v_{t-\tau})- h(O_t,\theta_{t-\tau},L_t,v_{t-\tau})\Vert}_{I_a} \\
        &\qquad + \underbrace{\Vert E_{\theta_t}[h(O_t,\theta_t,L_t,v_{t-\tau})] - E_{\theta_{t-\tau}}[h(O_t,\theta_{t-\tau},L_t,v_{t-\tau})]\Vert}_{I_b}) ] \vert,      
    \end{align*}
where $E_{\theta_t}[.] $ denotes the expectation with respect to $s_t \sim \mu_{\theta_t},a_t \sim \pi_{\theta_t},s_{t+1} \sim p$.

Now, for the  term $I_a$, note that 

\begin{align*}
    &\Vert h(O_t,\theta_t,L_t,v_{t-\tau})- h(O_t,\theta_{t-\tau},L_t,v_{t-\tau})\Vert \\
    &= \Vert ( r(s_t,a_t)- L_t  + \phi(s_{t+1})^{\top} v_{t-\tau} - \phi(s_{t})^{\top} v_{t-\tau})(\nabla \log\pi_{\theta_t}(a_t|s_t) - \nabla \log\pi_{\theta_{t-\tau}}(a_t|s_t) ) \Vert\\
    &\leq  G K\Vert \theta_t - \theta_{t-\tau} \Vert.
\end{align*}

For the term $I_b$, we have ,
\begin{align}
    &\Vert E_{\theta_t}[h(O_t,\theta_t,L_t,v_{t-\tau})] - E_{\theta_{t-\tau}}[h(O_t,\theta_{t-\tau},L_t,v_{t-\tau})]\Vert \notag\\
    &\leq \Vert E_{\theta_t}[h(O_t,\theta_t,L_t,v_{t-\tau})] - E_{\theta_{t-\tau}}[h(O_t,\theta_t,L_t,v_{t-\tau})]\Vert \notag\\
    &\qquad + \Vert E_{\theta_{t-\tau}}[h(O_t,\theta_t,L_t,v_{t-\tau})] - E_{\theta_{t-\tau}}[h(O_t,\theta_{t-\tau},L_t,v_{t-\tau})]\Vert \notag\\
    &\leq 2Gd_{TV}(\mu_{\theta_t} \otimes \pi_{\theta_t} ,\mu_{\theta_{t-\tau}} \otimes \pi_{\theta_{t-\tau}} ) + GK\Vert \theta_t - \theta_{t-\tau} \Vert \notag\\
    &\leq 2G\vert A \vert L \bigg( 1 + \lceil \log_{k}b^{-1} \rceil + 1/(1-k) \bigg )\Vert \theta_{t} - \theta_{t-\tau} \Vert + GK\Vert \theta_t - \theta_{t-\tau} \Vert \label{ineq_lemmaB.1}\\
    &= C_{1}\Vert \theta_t - \theta_{t-\tau} \Vert\notag
\end{align}
where $C_1 = 2G\vert A \vert L \bigg( 1 + \lceil \log_{k}b^{-1} \rceil + 1/(1-k) \bigg ) + GK $. The 
inequality in \cref{ineq_lemmaB.1} follows from Lemma B.1 of \citep{wu2022finite}. 
Hence, after putting the results back, we obtain,
\begin{align*}
    \vert E[I(O_t,L_t,\theta_t,v_{t-\tau}) - I(O_t,L_t,\theta_{t-\tau},v_{t-\tau})] \vert \leq \overline{C}\Vert \theta_t - \theta_{t-\tau} \Vert 
\end{align*}
for some $\overline{C} > 0$. The claim follows.
\end{proof}

\begin{sublemma}\label{gamma-actor3}
    \begin{align*}
        \vert E[I(O_t,L_t,\theta_{t-\tau},v_{t-\tau}) - I(O_t,L_{t-\tau},\theta_{t-\tau},v_{t-\tau})] \vert \leq 2BG\vert L_t - L_{t-\tau}\vert.
    \end{align*}
\end{sublemma}

\begin{proof}
    \begin{align*}
        &\vert E[I(O_t,L_t,\theta_{t-\tau},v_{t-\tau}) - I(O_t,L_{t-\tau},\theta_{t-\tau},v_{t-\tau})] \vert\\
        &= \vert E[\langle \nabla L(\theta_{t-\tau}) ,h(O_t,\theta_{t-\tau},L_t,v_{t-\tau}) - h(O_t,\theta_{t-\tau},L_{t-\tau},v_{t-\tau}) \\
        &\qquad- E_{\theta_{t-\tau}}[h(O_t,\theta_{t-\tau},L_t,v_{t-\tau})] \rangle] + E_{\theta_{t-\tau}}[h(O_t,\theta_{t-\tau},L_{t-\tau},v_{t-\tau})] \rangle] \vert\\
        &\leq 2BG\vert L_t - L_{t-\tau}\vert.
    \end{align*}
\end{proof}

\begin{sublemma}\label{gamma-actor4}
    \begin{align*}
        \vert E[I(O_t,L_{t-\tau},\theta_{t-\tau},v_{t-\tau}) - I(\tilde{O_t},L_{t-\tau},\theta_{t-\tau},v_{t-\tau})]\vert \leq \check{K}\sum\limits_{i=t-\tau}^{t}E\Vert \theta_{i} - \theta_{t-\tau} \Vert. 
    \end{align*}
for some  $\check{K} > 0$.
\end{sublemma}

\begin{proof}
    The proof is as in lemma D.2 in \citep{wu2022finite}.
\end{proof}

\begin{sublemma}\label{gamma-actor5}
    \begin{align*}
        \vert E[I(\tilde{O_t},L_{t-\tau},\theta_{t-\tau},v_{t-\tau}) - I(O^{'}_t,L_{t-\tau},\theta_{t-\tau},v_{t-\tau})] \vert \leq \overline{K}bk^{\tau-1}.
    \end{align*}
for some $\overline{K} > 0$.
\end{sublemma}

\begin{proof}
    The proof is as in lemma D.3 in \citep{wu2022finite}.
\end{proof}

Now collecting the results of lemmas \ref{gamma-actor1} - \ref{gamma-actor5}, we have,

\begin{align*}
   & E[I(O_t,L_t,\theta_t,v_t)] \\
   &\geq -4BG\Vert v_t - v_{t-\tau} \Vert - \overline{C}\Vert \theta_t - \theta_{t-\tau} \Vert - 2BG\vert L_t - L_{t-\tau}\vert - \check{K}\sum\limits_{i=t-\tau}^{t}E\Vert \theta_{i} - \theta_{t-\tau} \Vert - \overline{K}bk^{\tau-1}\\
    &\geq -4BG\Vert v_t - v_{t-\tau} \Vert - \overline{C}\Vert \theta_t - \theta_{t-\tau} \Vert - 2BG\vert L_t - L_{t-\tau}\vert - \check{K}(\tau + 1)E\Vert \theta_{t} - \theta_{t-\tau} \Vert - \overline{K}bk^{\tau-1}\\
    &\geq -4BG\Vert v_t - v_{t-\tau} \Vert - \overline{C}\Vert \theta_t - \theta_{t-\tau} \Vert - 2BG\vert L_t - L_{t-\tau}\vert\\
    &\qquad - \check{K}(\tau + 1)\sum\limits_{t-\tau + 1}^{t}E\Vert \theta_{k} - \theta_{k-\tau} \Vert - \overline{K}bk^{\tau-1}.
\end{align*}
The claim of Lemma~\ref{Gamma_actor} now follows.
\end{proof}

\begin{lemma}\label{Xi_actor}
Under assumptions \ref{assum:bounded_feature_norm}, \ref{assum:ergodicity}, \ref{assum:policy-lipschitz-bounded}, \ref{V_lipschitz}, for any $t \geq \tau >  0$ , we have
    \begin{align*}
        E[\Xi (O_t,\theta_t,v_t)] &\geq -4B(G + B(\bar{U}_{v} + U_v))\Vert v_t - v_{t-\tau}  \Vert - D\Vert \theta_t - \theta_{t-\tau}\Vert \\
        &\qquad- B_1\sum\limits_{i=t-\tau}^{t}E\Vert \theta_{i} - \theta_{t-\tau} \Vert - B_2bk^{\tau-1},
\end{align*}
where $D > 0 , B_1 > 0 $ and $B_2 > 0$ are constants.
\end{lemma}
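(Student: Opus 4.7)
The plan is to mirror the telescoping argument used in Lemma~\ref{Gamma_actor}. First, observe that $\Xi$ admits the compact form
\[
\Xi(O_t,\theta_t,v_t) = \bigl\langle E_{\theta_t}[\bar{W}(O_t,\theta_t,v_t)] - \bar{W}(O_t,\theta_t,v_t),\, E_{\theta_t}[\bar{h}(O_t,\theta_t,v_t)] \bigr\rangle,
\]
which, evaluated at any triple $(O, \theta_{t-\tau}, v_{t-\tau})$ where $O$ is drawn from $\mu_{\theta_{t-\tau}}\otimes \pi_{\theta_{t-\tau}}\otimes P$, has expectation exactly zero. I therefore decompose
\begin{align*}
E[\Xi(O_t,\theta_t,v_t)] &= E[\Xi(O_t,\theta_t,v_t) - \Xi(O_t,\theta_t,v_{t-\tau})] \\
&\quad + E[\Xi(O_t,\theta_t,v_{t-\tau}) - \Xi(O_t,\theta_{t-\tau},v_{t-\tau})] \\
&\quad + E[\Xi(O_t,\theta_{t-\tau},v_{t-\tau}) - \Xi(\tilde O_t,\theta_{t-\tau},v_{t-\tau})] \\
&\quad + E[\Xi(\tilde O_t,\theta_{t-\tau},v_{t-\tau}) - \Xi(O'_t,\theta_{t-\tau},v_{t-\tau})] \\
&\quad + E[\Xi(O'_t,\theta_{t-\tau},v_{t-\tau})],
\end{align*}
where $\tilde O_t$ and $O'_t$ are the auxiliary samples from \eqref{chain-au} used in Lemma~\ref{Gamma_actor}. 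The last term vanishes.

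The first two differences are handled using uniform bounds and Lipschitz/smoothness estimates. For the $v$-swap, since $\bar{W}(O_t,\theta_t,\cdot)$ and $\bar{h}(O_t,\theta_t,\cdot)$ are both affine in $v$ with slope bounded in norm by $2B$, and since $\|\bar{W}\|\le 2B(\bar{U}_v+U_v)$ and $\|\bar{h}\|\le G$ uniformly, a standard $\langle a,b\rangle - \langle a',b'\rangle = \langle a-a',b\rangle + \langle a',b-b'\rangle$ split yields a bound of the form $4B(G+B(\bar{U}_v+U_v))\|v_t - v_{t-\tau}\|$. For the $\theta$-swap, one replaces $\theta_t$ by $\theta_{t-\tau}$ inside $\nabla\log\pi_\theta$, $V^{\theta}$, and $L(\theta)$; the Lipschitz constants supplied by Assumptions~\ref{assum:policy-lipschitz-bounded}(b), \ref{V_lipschitz}, and the Lipschitz continuity of $L(\theta)$ (established via Lemma~\ref{L_smoothness}) combine, together with the uniform bounds, to yield a constant $D>0$ multiplying $\|\theta_t-\theta_{t-\tau}\|$, in complete analogy with Sublemma~\ref{gamma-actor2}.

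The third and fourth differences are the Markov-chain swap terms. For the third, I substitute the trajectory $O_t$ generated under the policy path $\theta_{t-\tau+1},\ldots,\theta_t$ by the auxiliary trajectory $\tilde O_t$ in which $\theta_{t-\tau}$ is held fixed; the total-variation distance between the two path laws is controlled by successive applications of the Lipschitz property of $\pi_\theta$ (Assumption~\ref{assum:policy-lipschitz-bounded}(c)) across the $\tau$ intermediate steps, producing the $B_1 \sum_{i=t-\tau}^{t} E\|\theta_i-\theta_{t-\tau}\|$ term exactly as in Sublemma~\ref{gamma-actor4}. For the fourth, $\tilde O_t$ starts from $s_{t-\tau}$ but uses $\pi_{\theta_{t-\tau}}$ throughout, so after $\tau$ steps its marginal is within total variation $bk^{\tau-1}$ of $\mu_{\theta_{t-\tau}}$ by Assumption~\ref{assum:ergodicity}, giving $B_2 bk^{\tau-1}$ (cf.~Sublemma~\ref{gamma-actor5}).

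I expect the only mildly delicate step to be the $\theta$-swap term, because $\Xi$ contains $V^{\theta_t}$ inside $\bar{W}$ in addition to $\nabla\log\pi_{\theta_t}$; one must invoke Assumption~\ref{V_lipschitz} carefully and propagate it through both factors of the inner product. The Markov-chain swaps go through verbatim from Lemmas~D.2 and D.3 of \citep{wu2022finite} since the function $\Xi$ satisfies the same $\mathcal{O}(1)$ uniform bound and Lipschitz-in-$\theta$ property used there. Aggregating the four bounds yields the claim.
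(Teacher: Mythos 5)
Your proposal is correct and follows essentially the same route as the paper's proof: the same auxiliary-chain decomposition (your explicit fifth term $E[\Xi(O'_t,\theta_{t-\tau},v_{t-\tau})]=0$ is just the paper's fourth term written with the vanishing stationary reference made explicit), the same $\langle a,b\rangle-\langle a',b'\rangle$ split giving the $4B(G+B(\bar{U}_v+U_v))\Vert v_t-v_{t-\tau}\Vert$ bound, the same Lipschitz treatment of the $\theta$-swap, and the same appeal to the mixing lemmas of \citet{wu2022finite} for the chain-swap terms. No gaps.
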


\begin{proof}
    We can decompose $E[\Xi(O_t,\theta_t,v_t)]$ as :\\
    \begin{align*}
        E[\Xi (O_t,\theta_t,v_t)] &= \underbrace{E[\Xi(O_t,\theta_t,v_t)- \Xi(O_t,\theta_t,v_{t-\tau})]}_{I_a} + \underbrace{E[\Xi(O_t,\theta_t,v_{t-\tau}) - \Xi(O_t,\theta_{t-\tau},v_{t-\tau})]}_{I_b}\\
        &\qquad + \underbrace{E[\Xi(O_t,\theta_{t-\tau},v_{t-\tau}) - \Xi(\tilde{O_t},\theta_{t-\tau},v_{t-\tau})]}_{I_c} + \underbrace{E[\Xi(\tilde{O_t},\theta_{t-\tau},v_{t-\tau})]}_{I_d}.
    \end{align*}

In the above equality, $\tilde{O_t} = (\tilde{s}_t , \tilde{a}_t , \tilde{s}_{t+1})$ is from the auxiliary Markov chain defined in \cref{chain-au} and $O_t^{'} = (s^{'}_t , a^{'}_t ,s^{'}_{t+1} )$, where $ s^{'}_t \sim \mu_{\theta_{t-\tau}} , a^{'}_t \sim \pi_{\theta_{t-\tau}}$ and $s^{'}_{t+1} \sim P(.|s^{'}_t, a^{'}_t)$.     

For the term $I_a$ above, we have,
\begin{align*}
    &\vert \Xi(O_t,\theta_t,v_t)- \Xi(O_t,\theta_t,v_{t-\tau}) \vert\\
    &= \vert \langle  E_{\theta_t}[\bar{W}(O_t,\theta_t,v_t)] , E_{\theta_t}[\bar{h}(O_t,\theta_t,v_t)] \rangle - \langle \bar{W}(O_t,\theta_t,v_t) , E_{\theta_t}[\bar{h}(O_t,\theta_t,v_t)] \rangle \\
    &\qquad - \langle  E_{\theta_t}[\bar{W}(O_t,\theta_t,v_{t-\tau})] , E_{\theta_t}[\bar{h}(O_t,\theta_t,v_{t-\tau})] \rangle + \langle \bar{W}(O_t,\theta_t,v_{t-\tau}) , E_{\theta_t}[\bar{h}(O_t,\theta_t,v_{t-\tau})] \rangle \vert\\
    &= \vert \langle  E_{\theta_t}[\bar{W}(O_t,\theta_t,v_t)] - \bar{W}(O_t,\theta_t,v_t) , E_{\theta_t}[\bar{h}(O_t,\theta_t,v_t)] \rangle\\
    &\qquad  - \langle  E_{\theta_t}[\bar{W}(O_t,\theta_t,v_{t-\tau})] - \bar{W}(O_t,\theta_t,v_{t-\tau}), E_{\theta_t}[\bar{h}(O_t,\theta_t,v_{t-\tau})] \rangle  \vert\\
    &\leq \vert \langle  E_{\theta_t}[\bar{W}(O_t,\theta_t,v_t)] - \bar{W}(O_t,\theta_t,v_t) , E_{\theta_t}[\bar{h}(O_t,\theta_t,v_t)] \rangle\\
    &\qquad - \langle  E_{\theta_t}[\bar{W}(O_t,\theta_t,v_{t-\tau})] - \bar{W}(O_t,\theta_t,v_{t-\tau}) , E_{\theta_t}[\bar{h}(O_t,\theta_t,v_t)] \rangle \vert\\
    &\qquad + \vert \langle  E_{\theta_t}[\bar{W}(O_t,\theta_t,v_{t-\tau})] - \bar{W}(O_t,\theta_t,v_{t-\tau}) , E_{\theta_t}[\bar{h}(O_t,\theta_t,v_t)] \rangle\\
    &\qquad - \langle  E_{\theta_t}[\bar{W}(O_t,\theta_t,v_{t-\tau})] - \bar{W}(O_t,\theta_t,v_{t-\tau}), E_{\theta_t}[\bar{h}(O_t,\theta_t,v_{t-\tau})] \rangle  \vert\\
    &= \vert \langle  E_{\theta_t}[\bar{W}(O_t,\theta_t,v_t)] - E_{\theta_t}[\bar{W}(O_t,\theta_t,v_{t-\tau})] - \bar{W}(O_t,\theta_t,v_t) \\
    &\qquad+ \bar{W}(O_t,\theta_t,v_{t-\tau}) , E_{\theta_t}[\bar{h}(O_t,\theta_t,v_t)] \rangle \vert \\
    &\qquad + \vert  \langle  E_{\theta_t}[\bar{W}(O_t,\theta_t,v_{t-\tau})] - \bar{W}(O_t,\theta_t,v_{t-\tau}) , E_{\theta_t}[\bar{h}(O_t,\theta_t,v_t)] -  E_{\theta_t}[\bar{h}(O_t,\theta_t,v_{t-\tau})]\rangle \vert \\
    &\leq  4BG\Vert v_t - v_{t-\tau} \Vert + 4B^{2}(\bar{U}_{v} + U_v)\Vert v_t - v_{t-\tau}  \Vert\\
    &= 4B(G + B(\bar{U}_{v} + U_v))\Vert v_t - v_{t-\tau}  \Vert.
\end{align*}

For term $I_b$, we have,
\begin{align*}
    &\vert \Xi(O_t,\theta_t,v_{t-\tau}) - \Xi(O_t,\theta_{t-\tau},v_{t-\tau})\vert\\
    &=\vert \langle  E_{\theta_t}[\bar{W}(O_t,\theta_t,v_{t-\tau})] - \bar{W}(O_t,\theta_t,v_{t-\tau}) , E_{\theta_t}[\bar{h}(O_t,\theta_t,v_{t-\tau})] \rangle \\
    &\qquad- \langle  E_{\theta_{t-\tau}}[\bar{W}(O_t,\theta_{t-\tau},v_{t-\tau})] - \bar{W}(O_t,\theta_{t-\tau},v_{t-\tau}) , E_{\theta_{t-\tau}}[\bar{h}(O_t,\theta_{t-\tau},v_{t-\tau})] \rangle   \vert\\
    &=\vert \langle  E_{\theta_t}[\bar{W}(O_t,\theta_t,v_{t-\tau})] - \bar{W}(O_t,\theta_t,v_{t-\tau}) , E_{\theta_t}[\bar{h}(O_t,\theta_t,v_{t-\tau})] \rangle \\
    &\qquad - \langle  E_{\theta_t}[\bar{W}(O_t,\theta_t,v_{t-\tau})] - \bar{W}(O_t,\theta_t,v_{t-\tau}) , E_{\theta_{t-\tau}}[\bar{h}(O_t,\theta_{t-\tau},v_{t-\tau})] \rangle\\
    &\qquad +  \langle  E_{\theta_t}[\bar{W}(O_t,\theta_t,v_{t-\tau})] - \bar{W}(O_t,\theta_t,v_{t-\tau}) , E_{\theta_{t-\tau}}[\bar{h}(O_t,\theta_{t-\tau},v_{t-\tau})] \rangle\\
     &\qquad- \langle  E_{\theta_{t-\tau}}[\bar{W}(O_t,\theta_{t-\tau},v_{t-\tau})] - \bar{W}(O_t,\theta_{t-\tau},v_{t-\tau}) , E_{\theta_{t-\tau}}[\bar{h}(O_t,\theta_{t-\tau},v_{t-\tau})] \rangle   \vert\\
     &\leq \underbrace{\vert \langle  E_{\theta_t}[\bar{W}(O_t,\theta_t,v_{t-\tau})] - \bar{W}(O_t,\theta_t,v_{t-\tau}) , E_{\theta_t}[\bar{h}(O_t,\theta_t,v_{t-\tau})] - E_{\theta_{t-\tau}}[\bar{h}(O_t,\theta_{t-\tau},v_{t-\tau})] \rangle \vert}_{I_{b1}}\\
     &\qquad + \underbrace{\vert \langle  E_{\theta_t}[\bar{W}(O_t,\theta_t,v_{t-\tau})] -  E_{\theta_{t-\tau}}[\bar{W}(O_t,\theta_{t-\tau},v_{t-\tau})]}_{I_{b2}} \\
     &\qquad \underbrace{-\bar{W}(O_t,\theta_t,v_{t-\tau}) + \bar{W}(O_t,\theta_{t - \tau},v_{t-\tau}) , E_{\theta_{t-\tau}}[\bar{h}(O_t,\theta_{t-\tau},v_{t-\tau})] \rangle\vert}_{I_{b3}}.\\
\end{align*}

For term $I_{b1}$, we have ,
\begin{align*}
    &\vert \langle  E_{\theta_t}[\bar{W}(O_t,\theta_t,v_{t-\tau})] - \bar{W}(O_t,\theta_t,v_{t-\tau}) , E_{\theta_t}[\bar{h}(O_t,\theta_t,v_{t-\tau})] - E_{\theta_{t-\tau}}[\bar{h}(O_t,\theta_{t-\tau},v_{t-\tau})] \rangle \vert\\
    &\leq 4B(\overline{U_{v}} + U_{v})\Vert E_{\theta_t}[\bar{h}(O_t,\theta_t,v_{t-\tau})] - E_{\theta_{t-\tau}}[\bar{h}(O_t,\theta_{t-\tau},v_{t-\tau})] \Vert\\
    &\leq 4B(\overline{U_{v}} + U_{v})\Vert E_{\theta_t}[\bar{h}(O_t,\theta_t,v_{t-\tau})] - E_{\theta_{t-\tau}}[\bar{h}(O_t,\theta_t,v_{t-\tau})] \Vert\\
    &\qquad + 4B(\overline{U_{v}} + U_{v})\Vert E_{\theta_{t-\tau}}[\bar{h}(O_t,\theta_t,v_{t-\tau})] - E_{\theta_{t-\tau}}[\bar{h}(O_t,\theta_{t-\tau},v_{t-\tau})] \Vert\\
    &\leq 8GB(\overline{U_{v}} + U_{v})d_{TV}(\mu_{\theta_{t}} \otimes \pi_{\theta_{t}} ,\mu_{\theta_{t-\tau}} \otimes \pi_{\theta_{t-\tau}} )\\
    &\qquad + 4B(\overline{U_{v}} + U_{v}) E_{\theta_{t-\tau}}[\Vert\bar{h}(O_t,\theta_t,v_{t-\tau})] - E_{\theta_{t-\tau}}[\bar{h}(O_t,\theta_{t-\tau},v_{t-\tau})\Vert] \\
    &\leq 8GB(\overline{U_{v}} + U_{v})\vert A \vert L\bigg( 1 + \lceil \log_{k} b^{-1} \rceil + \frac{1}{1 - k} \bigg)\Vert \theta_{t} - \theta_{t-\tau} \Vert\\
    &\qquad + 8B(\overline{U_{v}} + U_{v})(U_{r} + U_{v})K\Vert \theta_{t} - \theta_{t-\tau} \Vert\\
    &= D_{1}\Vert \theta_{t} - \theta_{t-\tau} \Vert,
\end{align*}
where $D_{1} = 8B(\overline{U_{v}} + U_{v})\bigg(  G\vert A \vert L\bigg( 1 + \lceil \log_{k} b^{-1} \rceil + \frac{1}{1 - k} \bigg) +(U_{r} + U_{v})K \bigg)$.\\
The last inequality above follows from Lemma B.1 in \citep{wu2022finite}.

Next, for term $I_{b2}$ + term $I_{b3}$, we have,

\begin{align*}
    &\vert \langle  E_{\theta_t}[\bar{W}(O_t,\theta_t,v_{t-\tau})] -  E_{\theta_{t-\tau}}[\bar{W}(O_t,\theta_{t-\tau},v_{t-\tau})] - \bar{W}(O_t,\theta_t,v_{t-\tau})\\
    &\qquad+ \bar{W}(O_t,\theta_{t - \tau},v_{t-\tau}) , E_{\theta_{t-\tau}}[\bar{h}(O_t,\theta_{t-\tau},v_{t-\tau})] \rangle\vert\\
    &\leq G(\Vert E_{\theta_t}[\bar{W}(O_t,\theta_t,v_{t-\tau})] -  E_{\theta_{t-\tau}}[\bar{W}(O_t,\theta_{t-\tau},v_{t-\tau})]\Vert + \Vert \bar{W}(O_t,\theta_t,v_{t-\tau})\\
    &\qquad- \bar{W}(O_t,\theta_{t - \tau},v_{t-\tau})  \Vert)\\
    &\leq G(\Vert E_{\theta_t}[\bar{W}(O_t,\theta_t,v_{t-\tau})] -  E_{\theta_{t-\tau}}[\bar{W}(O_t,\theta_{t-\tau},v_{t-\tau})]\Vert + 2(\overline{U}_{v} + U_{v})K\Vert \theta_t - \theta_{t-\tau}\Vert)\\
    &\leq  G(\Vert E_{\theta_t}[\bar{W}(O_t,\theta_t,v_{t-\tau})] -  E_{\theta_{t-\tau}}[\bar{W}(O_t,\theta_{t},v_{t-\tau})]\Vert + \Vert E_{\theta_{t-\tau}}[\bar{W}(O_t,\theta_t,v_{t-\tau})]\\
    &\qquad-  E_{\theta_{t-\tau}}[\bar{W}(O_t,\theta_{t-\tau},v_{t-\tau})] \Vert)
    + 2G(\overline{U}_{v} + U_{v})K\Vert \theta_t - \theta_{t-\tau}\Vert\\
    &\leq D_1\Vert \theta_{t} - \theta_{t-\tau} \Vert,
\end{align*}

where $D_1 > 0$.

The last inequality above again follows from Lemma B.1 in \citep{wu2022finite}.\\
Hence after collecting the results of terms $I_{b1}$ and $I_{b2}$ we have,

\begin{align*}
    I_b \geq -D\Vert \theta_t - \theta_{t-\tau}\Vert,
\end{align*}
for some $D > 0$.

Now, for the term $I_c$, we have,
\begin{align*}
    \vert E[\Xi(O_t,\theta_{t-\tau},v_{t-\tau}) - \Xi(\tilde{O_t},\theta_{t-\tau},v_{t-\tau})]\vert \leq B_1\sum\limits_{i=t-\tau}^{t}E\Vert \theta_{i} - \theta_{t-\tau} \Vert,
\end{align*}
for some $B_1 > 0$.

For term $I_d$, we have,
\begin{align*}
    \vert E[\Xi(\tilde{O_t},\theta_{t-\tau},v_{t-\tau})]\vert \leq B_2bk^{\tau-1}.
\end{align*}
for some $B_2 > 0$.
\end{proof}

For analysis of terms $I_c$ and $I_d$, please see lemmas D.10 and D.11 in \citep{wu2022finite}.

Thus, after collecting all the terms, we have,
\begin{align*}
    E[\Xi (O_t,\theta_t,v_t)] &\geq -4B(G + B(\bar{U}_{v} + U_v))\Vert v_t - v_{t-\tau}  \Vert - D\Vert \theta_t - \theta_{t-\tau}\Vert \\
    &\qquad- B_1\sum\limits_{i=t-\tau}^{t}E\Vert \theta_{i} - \theta_{t-\tau} \Vert - B_2bk^{\tau-1},
\end{align*}
where $D > 0 , B_1 > 0 $ and $B_2 > 0$.

\subsubsection*{Proof of convergence of the actor.}

After applying Lemma \ref{L_smoothness} to the update rule of the actor, we have,

\begin{align*}
    L(\theta_{t+1}) &\geq L(\theta_t) + \alpha_t \langle \nabla L(\theta_t) ,\delta_{t}\nabla \log\pi_{\theta_t}(a_t|s_t) \rangle 
- M_{L}\alpha_t^2\Vert\delta_{t}\nabla \log\pi_{\theta_t}(a_t|s_t)\Vert^2.
\end{align*}

For the term $\langle \nabla L(\theta_t) ,\delta_{t}\nabla \log\pi_{\theta_t}(a_t|s_t) \rangle $, we have,
\begin{align*}
    &\langle \nabla L(\theta_t) ,\delta_{t}\nabla \log\pi_{\theta_t}(a_t|s_t) \rangle \\
    &= \langle \nabla L(\theta_t) ,( r(s_t,a_t) - L_t + \phi(s_{t+1})^{\top} v_{t} - \phi(s_t)^{\top} v_{t})\nabla \log\pi_{\theta_t}(a_t|s_t) \rangle\\
    & = I(O_t,\theta_t,L_t,v_t) + \langle \nabla L(\theta_t) , E_{s_t \sim \mu_{\theta_t},a_t \sim \pi_{\theta_t},s_{t+1} \sim p}[h(O_t,\theta_t,L_t,v_t)] \rangle.
\end{align*}

Hence,
\begin{align}\label{inequality_L(theta)}
    & L(\theta_{t+1})\\
    &\geq L(\theta_t) + \alpha_t I(O_t,\theta_t,L_t,v_t) + \alpha_t\langle \nabla L(\theta_t) , M(\theta_t,v_t) \rangle\notag\\
&\qquad + \alpha_t\langle \nabla L(\theta_t) , E_{\theta_t}[(L(\theta_t) - L_t)\nabla \log\pi_{\theta_t}(a_t|s_t)] \rangle -M_{L}\alpha_t^2\Vert\delta_{t}\nabla \log\pi_{\theta_t}(a_t|s_t)\Vert^2\notag\\
& = L(\theta_t) + \alpha_t I(O_t,\theta_t,L_t,v_t) + \alpha_t\Vert M(\theta_t,v_t)\Vert^2\notag \\
&\qquad+\alpha_t\langle  E_{\theta_t}[(V^{\theta_t}(s_{t+1}) -  \phi(s_{t+1})^Tv_t - V^{\theta_t}(s_t) + \phi(s_{t})^Tv_t)\nabla \log\pi_{\theta_t}(a_t|s_t)] ,\\
&\qquad \qquad \qquad \qquad E_{\theta_t}[\bar{h}(O_t,\theta_t,v_t)] \rangle\notag\\
&\qquad - \alpha_t\langle (V^{\theta_t}(s_{t+1}) -  \phi(s_{t+1})^Tv_t - V^{\theta_t}(s_t) + \phi(s_{t})^Tv_t)\nabla \log\pi_{\theta_t}(a_t|s_t) , E_{\theta_t}[\bar{h}(O_t,\theta_t,v_t)] \rangle \notag\\
&\qquad + \underbrace{\alpha_t\langle (V^{\theta_t}(s_{t+1}) -  \phi(s_{t+1})^Tv_t - V^{\theta_t}(s_t) + \phi(s_{t})^Tv_t)\nabla \log\pi_{\theta_t}(a_t|s_t) , E_{\theta_t}[\bar{h}(O_t,\theta_t,v_t)] \rangle}_{I_1}\notag\\
&\qquad + \alpha_t\langle \nabla L(\theta_t) , E_{\theta_t}[(L(\theta_t) - L_t)\nabla \log\pi_{\theta_t}(a_t|s_t)] \rangle - M_{L}\alpha_t^2\Vert\delta_{t}\nabla \log\pi_{\theta_t}(a_t|s_t)\Vert^2.
\end{align}

Now,

\begin{align*}
    &\alpha_t \langle(V^{\theta_t}(s_{t+1}) - \phi(s_{t+1})^T v_t)\nabla \log\pi_{\theta_t}(a_t|s_t) ,  M(\theta_t,v_t)\rangle\\
    & = \alpha_t \langle(V^{\theta_t}(s_{t+1})-V^{\theta_{t+1}}(s_{t+1}) + V^{\theta_{t+1}}(s_{t+1}) - \phi(s_{t+1})^T v_t)\nabla \log\pi_{\theta_t}(a_t|s_t) ,  M(\theta_t,v_t)\rangle\\
    & = \alpha_t \langle(V^{\theta_t}(s_{t+1})-V^{\theta_{t+1}}(s_{t+1}))\nabla\log\pi_{\theta_t}(a_t|s_t) ,  M(\theta_t,v_t)\rangle\\
    &\qquad +  \alpha_t\langle (V^{\theta_{t+1}}(s_{t+1}) - \phi(s_{t+1})^T v_t)\nabla \log\pi_{\theta_t}(a_t|s_t) ,  M(\theta_t,v_t)\rangle\\
    & = \alpha_t \langle(V^{\theta_t}(s_{t+1})-V^{\theta_{t+1}}(s_{t+1}))\nabla\log\pi_{\theta_t}(a_t|s_t) ,  M(\theta_t,v_t)\rangle\\
    &\qquad +  \alpha_t\langle (V^{\theta_{t+1}}(s_{t+1}) - \phi(s_{t+1})^T v_{t+1} + \phi(s_{t+1})^T v_{t+1} - \phi(s_{t+1})^T v_{t})\nabla \log\pi_{\theta_t}(a_t|s_t)\\
    &\qquad,  M(\theta_t,v_t)\rangle\\
    & = \alpha_t \langle(V^{\theta_t}(s_{t+1})-V^{\theta_{t+1}}(s_{t+1}))\nabla\log\pi_{\theta_t}(a_t|s_t) ,  M(\theta_t,v_t)\rangle\\
    &\qquad + \alpha_t\langle ( \phi(s_{t+1})^T v_{t+1} - \phi(s_{t+1})^T v_{t})\nabla \log\pi_{\theta_t}(a_t|s_t) ,  M(\theta_t,v_t)\rangle\\
    &\qquad + \alpha_t \langle(V^{\theta_{t+1}}(s_{t+1}) - \phi(s_{t+1})^T v_{t+1})\nabla \log\pi_{\theta_t}(a_t|s_t) ,  M(\theta_t,v_t) \rangle \\
    & = \alpha_t \langle(V^{\theta_t}(s_{t+1})-V^{\theta_{t+1}}(s_{t+1}))\nabla\log\pi_{\theta_t}(a_t|s_t) ,  M(\theta_t,v_t)\rangle\\
    &\qquad + \alpha_t\langle ( \phi(s_{t+1})^T v_{t+1} - \phi(s_{t+1})^T v_{t})\nabla \log\pi_{\theta_t}(a_t|s_t) ,  M(\theta_t,v_t)\rangle\\
    &\qquad + \alpha_{t + 1} \langle(V^{\theta_{t+1}}(s_{t+1}) - \phi(s_{t+1})^T v_{t+1})\nabla \log\pi_{\theta_{t+1}}(a_{t+1}|s_{t+1}) ,  M(\theta_{t + 1},v_{t+1}) \rangle \\
    &\qquad + \alpha_t \langle(V^{\theta_{t+1}}(s_{t+1}) - \phi(s_{t+1})^T v_{t+1})\nabla \log\pi_{\theta_t}(a_t|s_t) ,  M(\theta_t,v_t) \rangle\\
    & \qquad - \alpha_{t + 1} \langle(V^{\theta_{t+1}}(s_{t+1}) - \phi(s_{t+1})^T v_{t+1})\nabla \log\pi_{\theta_{t+1}}(a_{t+1}|s_{t+1}) ,  M(\theta_{t + 1},v_{t+1}) \rangle. 
\end{align*}

Hence for the term $I_1$, we have,
\begin{align*}
    I_1 &= \alpha_t \langle(V^{\theta_t}(s_{t+1})-V^{\theta_{t+1}}(s_{t+1}))\nabla\log\pi_{\theta_t}(a_t|s_t) ,  M(\theta_t,v_t)\rangle\\
    &\qquad + \alpha_t\langle ( \phi(s_{t+1})^T v_{t+1} - \phi(s_{t+1})^T v_{t})\nabla \log\pi_{\theta_t}(a_t|s_t) ,  M(\theta_t,v_t)\rangle\\
    &\qquad + \alpha_{t + 1} \langle(V^{\theta_{t+1}}(s_{t+1}) - \phi(s_{t+1})^T v_{t+1})\nabla \log\pi_{\theta_{t+1}}(a_{t+1}|s_{t+1}) ,  M(\theta_{t + 1},v_{t+1}) \rangle \\
    &\qquad + \alpha_t \langle(V^{\theta_{t+1}}(s_{t+1}) - \phi(s_{t+1})^T v_{t+1})\nabla \log\pi_{\theta_t}(a_t|s_t) ,  M(\theta_t,v_t) \rangle\\
    & \qquad - \alpha_{t + 1} \langle(V^{\theta_{t+1}}(s_{t+1}) - \phi(s_{t+1})^T v_{t+1})\nabla \log\pi_{\theta_{t+1}}(a_{t+1}|s_{t+1}) ,  M(\theta_{t + 1},v_{t+1}) \rangle\\
    & \qquad +  \alpha_t \langle (- V^{\theta_t}(s_t) + \phi(s_{t})^Tv_t)\nabla \log\pi_{\theta_t}(a_t|s_t) , M(\theta_t,v_t)\rangle.
\end{align*}

Putting this back in \cref{inequality_L(theta)}, we obtain,

\begin{align*}
     L(\theta_{t+1}) & \geq L(\theta_t) + \alpha_t I(O_t,\theta_t,L_t,v_t) + \alpha_t\Vert M(\theta_t,v_t)\Vert^2 + \alpha_t\Xi(O_t,\theta_t,v_t)\\
&\qquad+\alpha_t \langle(V^{\theta_t}(s_{t+1})-V^{\theta_{t+1}}(s_{t+1}))\nabla\log\pi_{\theta_t}(a_t|s_t) ,  M(\theta_t,v_t)\rangle\\
    &\qquad + \alpha_t\langle ( \phi(s_{t+1})^T v_{t+1} - \phi(s_{t+1})^T v_{t})\nabla \log\pi_{\theta_t}(a_t|s_t) ,  M(\theta_t,v_t)\rangle\\
    &\qquad + \alpha_{t + 1} \langle(V^{\theta_{t+1}}(s_{t+1}) - \phi(s_{t+1})^T v_{t+1})\nabla \log\pi_{\theta_{t+1}}(a_{t+1}|s_{t+1}) ,  M(\theta_{t + 1},v_{t+1}) \rangle \\
    &\qquad + \alpha_t \langle(V^{\theta_{t+1}}(s_{t+1}) - \phi(s_{t+1})^T v_{t+1})\nabla \log\pi_{\theta_t}(a_t|s_t) ,  M(\theta_t,v_t) \rangle\\
    & \qquad - \alpha_{t + 1} \langle(V^{\theta_{t+1}}(s_{t+1}) - \phi(s_{t+1})^T v_{t+1})\nabla \log\pi_{\theta_{t+1}}(a_{t+1}|s_{t+1}) ,  M(\theta_{t + 1},v_{t+1}) \rangle\\
    & \qquad +  \alpha_t \langle (- V^{\theta_t}(s_t) + \phi(s_{t})^Tv_t)\nabla \log\pi_{\theta_t}(a_t|s_t) , M(\theta_t,v_t)\rangle\\
&\qquad + \alpha_t\langle \nabla L(\theta_t) , E_{\theta_t}[(L(\theta_t) - L_t)\nabla \log\pi_{\theta_t}(a_t|s_t)] \rangle - M_{L}\alpha_t^2\Vert\delta_{t}\nabla \log\pi_{\theta_t}(a_t|s_t)\Vert^2.
\end{align*}
\begin{align*}
    &\Rightarrow \Vert M(\theta_t,v_t)\Vert^2\\ & \leq \frac{L(\theta_{t+1}) - L(\theta_t)}{\alpha_t} - I(O_t,\theta_t,L_t,v_t) - \Xi(O_t,\theta_t,v_t)\\
    &\qquad-  \langle(V^{\theta_t}(s_{t+1})-V^{\theta_{t+1}}(s_{t+1}))\nabla\log\pi_{\theta_t}(a_t|s_t) ,  M(\theta_t,v_t)\rangle\\
    &\qquad - \langle ( \phi(s_{t+1})^T v_{t+1} - \phi(s_{t+1})^T v_{t})\nabla \log\pi_{\theta_t}(a_t|s_t) ,  M(\theta_t,v_t)\rangle\\
    &\qquad - \frac{1}{\alpha_t}\alpha_{t + 1} \langle(V^{\theta_{t+1}}(s_{t+1}) - \phi(s_{t+1})^T v_{t+1})\nabla \log\pi_{\theta_{t+1}}(a_{t+1}|s_{t+1}) ,  M(\theta_{t + 1},v_{t+1}) \rangle \\
    &\qquad -\frac{1}{\alpha_t}( \alpha_t \langle(V^{\theta_{t+1}}(s_{t+1}) - \phi(s_{t+1})^T v_{t+1})\nabla \log\pi_{\theta_t}(a_t|s_t) ,  M(\theta_t,v_t) \rangle)\\
    & \qquad +\frac{1}{\alpha_t} \alpha_{t + 1} \langle(V^{\theta_{t+1}}(s_{t+1}) - \phi(s_{t+1})^T v_{t+1})\nabla \log\pi_{\theta_{t+1}}(a_{t+1}|s_{t+1}) ,  M(\theta_{t + 1},v_{t+1}) \rangle\\
    & \qquad - \frac{1}{\alpha_t} (\alpha_t \langle (- V^{\theta_t}(s_t) + \phi(s_{t})^Tv_t)\nabla \log\pi_{\theta_t}(a_t|s_t) , M(\theta_t,v_t)\rangle)\\
    &\qquad - \langle \nabla L(\theta_t) , (L(\theta_t) - L_t)\nabla \log\pi_{\theta_t}(a_t|s_t) \rangle + M_{L}\alpha_t\Vert\delta_{t}\nabla \log\pi_{\theta_t}(a_t|s_t)\Vert^2\\\\
 & \leq (L(\theta_{t+1}) - L(\theta_t) + Q_{t} - Q_{t+1})/\alpha_t - I(O_t,\theta_t,L_t,v_t) -\Xi(O_t,\theta_t,v_t) \\
 &\qquad-  \langle(V^{\theta_t}(s_{t+1})-V^{\theta_{t+1}}(s_{t+1}))\nabla\log\pi_{\theta_t}(a_t|s_t) ,  M(\theta_t,v_t)\rangle\\
 &\qquad  - \langle ( \phi(s_{t+1})^T v_{t+1} - \phi(s_{t+1})^T v_{t})\nabla \log\pi_{\theta_t}(a_t|s_t) ,  M(\theta_t,v_t)\rangle\\
 &\qquad-\frac{1}{\alpha_t} \alpha_t \langle(V^{\theta_{t+1}}(s_{t+1}) - \phi(s_{t+1})^T v_{t+1})\nabla \log\pi_{\theta_t}(a_t|s_t) ,  M(\theta_t,v_t) \rangle\\
  & \qquad +\frac{1}{\alpha_t} \alpha_{t + 1} \langle(V^{\theta_{t+1}}(s_{t+1}) - \phi(s_{t+1})^T v_{t+1})\nabla \log\pi_{\theta_{t+1}}(a_{t+1}|s_{t+1}) ,  M(\theta_{t + 1},v_{t+1}) \rangle\\
  &\qquad - \langle \nabla L(\theta_t) , (L(\theta_t) - L_t)\nabla \log\pi_{\theta_t}(a_t|s_t) \rangle + M_{L}\alpha_t\Vert\delta_{t}\nabla \log\pi_{\theta_t}(a_t|s_t)\Vert^2,
\end{align*}
where, in the above, $Q_t = \alpha_t \langle (V^{\theta_t}(s_t) - \phi(s_{t})^Tv_t)\nabla \log\pi_{\theta_t}(a_t|s_t) , M(\theta_t,v_t)\rangle$. Taking expectations on both sides and summing from $\tau_t$ to $t$, we obtain,
\begin{align*}
    &\sum\limits_{k=\tau_t}^{t}E\Vert M(\theta_k,v_k)\Vert^2\\ 
    & \leq \underbrace{\sum\limits_{k=\tau_t}^{t}E[(L(\theta_{k+1}) - L(\theta_k) + Q_{k} - Q_{k+1})/\alpha_k]}_{I_1}  -\underbrace{\sum\limits_{k=\tau_t}^{t}E[I(O_k,\theta_k,L_k,v_k)]}_{I_2}\\
 &\qquad - \underbrace{\sum\limits_{k=\tau_t}^{t}E[\Xi(O_k,\theta_k,v_k)]}_{I_3} - \underbrace{\sum\limits_{k=\tau_t}^{t}E[ \langle(V^{\theta_k}(s_{k+1})-V^{\theta_{k+1}}(s_{k+1}))\nabla\log\pi_{\theta_k}(a_k|s_k) ,  M(\theta_k,v_k)\rangle]}_{I_4}\\
 &\qquad  - \underbrace{\sum\limits_{k=\tau_t}^{t}E[\langle ( \phi(s_{k+1})^T v_{k+1} - \phi(s_{k+1})^T v_{k})\nabla \log\pi_{\theta_k}(a_k|s_k) ,  M(\theta_k,v_k)\rangle]}_{I_5}\\
 &\qquad-\underbrace{\sum\limits_{k=\tau_t}^{t}\frac{1}{\alpha_k} E[\alpha_k \langle(V^{\theta_{k+1}}(s_{k+1}) - \phi(s_{k+1})^T v_{k+1})\nabla \log\pi_{\theta_k}(a_k|s_k) ,  M(\theta_k,v_k) \rangle]}_{I_6}\\
  & \qquad +\underbrace{\sum\limits_{k=\tau_t}^{t}\frac{1}{\alpha_k}E[ \alpha_{k + 1} \langle(V^{\theta_{k+1}}(s_{k+1}) - \phi(s_{k+1})^T v_{k+1})\nabla \log\pi_{\theta_{k+1}}(a_{k+1}|s_{k+1}) ,  M(\theta_{k + 1},v_{k+1}) \rangle]}_{I_7}\\
  &\qquad - \underbrace{\sum\limits_{k=\tau_t}^{t}E[\langle \nabla L(\theta_k) , (L(\theta_k) - L_k)\nabla \log\pi_{\theta_k}(a_k|s_k)] \rangle]}_{I_8} + \underbrace{\sum\limits_{k=\tau_t}^{t}M_{L}\alpha_k E[\Vert\delta_{k}\nabla \log\pi_{\theta_k}(a_k|s_k)\Vert^2]}_{I_9}.\\ 
\end{align*}

Now, for term $I_1$ we have,
\begin{align*}
    &\sum\limits_{k=\tau_t}^{t}E[(L(\theta_{k+1}) - L(\theta_k) + Q_{k} - Q_{k+1})/\alpha_k]\\
    &= \sum\limits_{k=\tau_t}^{t}E[(A_{k+1}- A_k)/\alpha_k]\\
    &= \mathcal{O}(t^{\nu}),
\end{align*}
where $A_k = L(\theta_k)- Q_k$.

The analysis of term $I_1$ is similar to that of term $I_1$ in Section \ref{average_reward_convergence}.

For term $I_2$, we have,
\begin{align*} 
    -E[I(O_t,\theta_t,L_t,v_t)] &\leq D_1(\tau + 1)\sum_{k=t-\tau + 1}^{t} E\Vert \theta_k - \theta_{k-1} \Vert + D_2bk^{\tau - 1} + D_3E\Vert v_t - v_{t-\tau} \Vert\\
    &\qquad+ D_4E\vert L_t - L_{t-\tau} \vert. 
\end{align*}
This inequality comes from lemma \ref{Gamma_actor}.

After summing both sides from $\tau_t$ to t and taking $\tau = \tau_t$, we will get,

\begin{align*}
    I_2 = \mathcal{O}(\log^2 t \cdot t^{1-\nu}).
\end{align*}

For term $I_3$, we have,
\begin{align*}
    -E[\Xi (O_t,\theta_t,v_t)] &\leq 4B(G + B(\bar{U}_{v} + U_v))\Vert v_t - v_{t-\tau}  \Vert + D\Vert \theta_t - \theta_{t-\tau}\Vert\\
    &\qquad +B_1\sum\limits_{i=t-\tau}^{t}E\Vert \theta_{i} - \theta_{t-\tau} \Vert + B_2bk^{\tau-1}.
\end{align*}
This inequality is a result of lemma \ref{Xi_actor}.

After taking $\tau = \tau_t$ and summing the expectation on both the sides from $\tau_t$ to t, we get,

\begin{align*}
    I_3 = \mathcal{O}(\log^2 t \cdot t^{1-\nu}),
\end{align*}
where 
the $\log^2 t$ term arises here because of the definition of $\tau_t$ (see \cref{eq:def_mixing_time}).

For term $I_4$, we have,
\begin{align*}
    &-\sum\limits_{k=\tau_t}^{t}E[E_{\theta_k}[ \langle(V^{\theta_k}(s_{k+1})-V^{\theta_{k+1}}(s_{k+1}))\nabla\log\pi_{\theta_k}(a_k|s_k) ,  M(\theta_k,v_k)\rangle]]\\
    & \leq \sum\limits_{k=\tau_t}^{t}E[E_{\theta_k}[ \Vert(V^{\theta_k}(s_{k+1})-V^{\theta_{k+1}}(s_{k+1}))\nabla\log\pi_{\theta_k}(a_k|s_k)\Vert \Vert M(\theta_k,v_k)\Vert]]\\
    & \leq 4B^3(U_r + U_v)^2 L_{v}\sum\limits_{k=\tau_t}^{t}\alpha_k.
\end{align*}

The last inequality follows from Assumption \ref{V_lipschitz}.

Next, for the term $I_5$, we have, 
\begin{align*}
    &-\sum\limits_{k=\tau_t}^{t}E[E_{\theta_k}[\langle ( \phi(s_{k+1})^T v_{k+1} - \phi(s_{k+1})^T v_{k})\nabla \log\pi_{\theta_k}(a_k|s_k) ,  M(\theta_k,v_k)\rangle]]\\
    & \leq \sum\limits_{k=\tau_t}^{t}E[E_{\theta_k}[\Vert ( \phi(s_{k+1})^T v_{k+1} - \phi(s_{k+1})^T v_{k})\nabla \log\pi_{\theta_k}(a_k|s_k)\Vert \Vert  M(\theta_k,v_k)\Vert]]\\
    & \leq 4B^2(U_r + U_v)^2\sum\limits_{k=\tau_t}^{t}\beta_k.
\end{align*}

For terms $I_6$ and $I_7$ summed together, we have,
\begin{align*}
    &I_6 + I_7\\
    &= \sum\limits_{k=\tau_t}^{t}\frac{1}{\alpha_k}E[ \alpha_{k + 1} \langle(V^{\theta_{k+1}}(s_{k+1}) - \phi(s_{k+1})^T v_{k+1})\nabla \log\pi_{\theta_{k+1}}(a_{k+1}|s_{k+1}) ,  M(\theta_{k + 1},v_{k+1}) \rangle]\\
    &\qquad - \sum\limits_{k=\tau_t}^{t}\frac{1}{\alpha_k} E[\alpha_k \langle(V^{\theta_{k+1}}(s_{k+1}) - \phi(s_{k+1})^T v_{k+1})\nabla \log\pi_{\theta_k}(a_k|s_k) ,  M(\theta_k,v_k) \rangle]\\
    &= \sum\limits_{k=\tau_t}^{t}\frac{1}{\alpha_k}E[ \alpha_{k + 1} \langle(V^{\theta_{k+1}}(s_{k+1}) - \phi(s_{k+1})^T v_{k+1})\nabla \log\pi_{\theta_{k+1}}(a_{k+1}|s_{k+1}) ,  M(\theta_{k + 1},v_{k+1}) \rangle]\\
    &\qquad - \sum\limits_{k=\tau_t}^{t}\frac{1}{\alpha_k}E[ \alpha_{k} \langle(V^{\theta_{k+1}}(s_{k+1}) - \phi(s_{k+1})^T v_{k+1})\nabla \log\pi_{\theta_{k+1}}(a_{k+1}|s_{k+1}) ,  M(\theta_{k + 1},v_{k+1}) \rangle]\\
     &\qquad + \sum\limits_{k=\tau_t}^{t}\frac{1}{\alpha_k}E[ \alpha_{k} \langle(V^{\theta_{k+1}}(s_{k+1}) - \phi(s_{k+1})^T v_{k+1})\nabla \log\pi_{\theta_{k+1}}(a_{k+1}|s_{k+1}) ,  M(\theta_{k + 1},v_{k+1}) \rangle]\\
      &\qquad - \sum\limits_{k=\tau_t}^{t}\frac{1}{\alpha_k}E[ \alpha_{k} \langle(V^{\theta_{k+1}}(s_{k+1}) - \phi(s_{k+1})^T v_{k+1})\nabla \log\pi_{\theta_{k}}(a_{k}|s_{k}) ,  M(\theta_{k + 1},v_{k+1}) \rangle]\\
       &\qquad + \sum\limits_{k=\tau_t}^{t}\frac{1}{\alpha_k}E[ \alpha_{k} \langle(V^{\theta_{k+1}}(s_{k+1}) - \phi(s_{k+1})^T v_{k+1})\nabla \log\pi_{\theta_{k}}(a_{k}|s_{k}) ,  M(\theta_{k + 1},v_{k+1}) \rangle]\\
    &\qquad - \sum\limits_{k=\tau_t}^{t}\frac{1}{\alpha_k} E[\alpha_k \langle(V^{\theta_{k+1}}(s_{k+1}) - \phi(s_{k+1})^T v_{k+1})\nabla \log\pi_{\theta_k}(a_k|s_k) ,  M(\theta_k,v_k) \rangle]\\
    &\leq GB(U_v + \bar{U}_{v})\sum_{k=\tau_t}^{t}\frac{\alpha_{k+1}-\alpha_k}{\alpha_k} + \mathcal{O}\bigg(\sum_{k=\tau_t}^{t}\Vert \theta_k - \theta_{k+1}\Vert \bigg)\\
    &\qquad+ \mathcal{O}\bigg( \sum_{k=\tau_t}^{t}\Vert M(\theta_{k+1},v_{k+1}) - M(\theta_k,v_k)\Vert\bigg)\\
    &= \mathcal{O}(t^{1-\nu}).
\end{align*}

For term $I_8$, we have,
\begin{align}
    -&\ \sum\limits_{k=\tau_t}^{t}E[\langle \nabla L(\theta_k) , (L(\theta_k) - L_k)\nabla \log\pi_{\theta_k}(a_k|s_k)] \rangle]\notag\\
    =&\ \sum\limits_{k=\tau_t}^{t}E[\langle E_{\theta_k}[(r(s,a) - L(\theta_k) + V^{\theta_k}(s^{'}) - V^{\theta_k}(s))\nabla \log \pi_{\theta_k}(a|s)] \\
    &\qquad \qquad \qquad, ( L_k - L(\theta_k))\nabla \log\pi_{\theta_k}(a_k|s_k)] \rangle]\notag\\
    = &\ \sum\limits_{k=\tau_t}^{t}E[\langle E_{\theta_k}[(r(s,a) - L(\theta_k) + (\phi(s^{'}) - \phi(s))^Tv(k))\nabla \log \pi_{\theta_k}(a|s)]\\
    &\qquad, ( L_k - L(\theta_k))\nabla \log\pi_{\theta_k}(a_k|s_k)] \rangle]\notag\\
    &+ \sum\limits_{k=\tau_t}^{t}E[\langle E_{\theta_k}[( V^{\theta_k}(s^{'})- \phi(s^{'})^Tv_{k} + \phi(s)^Tv_{k} - V^{\theta_k}(s))\nabla \log \pi_{\theta_k}(a|s)] \\
    &\qquad \qquad, ( L_k - L(\theta_k))\nabla \log\pi_{\theta_k}(a_k|s_k)] \rangle]\notag\\
    \leq  &\ B\sqrt{\sum\limits_{k=\tau_t}^{t}E\Vert M(\theta_k,v_k)\Vert^2}\sqrt{\sum\limits_{k=\tau_t}^{t}E\vert L_k - L(\theta_k) \vert^2} + I_{8a}.\label{ineqaulity_8a}
\end{align}

where 

\begin{align*}
    I_{8a} &= \sum\limits_{k=\tau_t}^{t}E[\langle E_{\theta_k}[( V^{\theta_k}(s^{'})- \phi(s^{'})^Tv_{k} + \phi(s)^Tv_{k} - V^{\theta_k}(s))\nabla \log \pi_{\theta_k}(a|s)]\\
    &\qquad \qquad , ( L_k - L(\theta_k))\nabla \log\pi_{\theta_k}(a_k|s_k)] \rangle]
\end{align*}

Now, for the term $I_{8a}$, we have,

\begin{align*}
   I_{8a}= I_{8a1} + I_{8a2}.
\end{align*}

where,

\begin{align*}
     I_{8a1} = \sum\limits_{k=\tau_t}^{t}E[\langle E_{\theta_k}[ \bar{W}(O_k,\theta_k,v_k)] - \bar{W}(O_k,\theta_k,v_k)]  , ( L_k - L(\theta_k))\nabla \log\pi_{\theta_k}(a_k|s_k)] \rangle]
\end{align*}

and,

\begin{align*}
    I_{8a2} &= \sum\limits_{k=\tau_t}^{t}E[\langle ( V^{\theta_k}(s_{k+1})- \phi(s_{k+1})^Tv_{k} + \phi(s_k)^Tv_{k} - V^{\theta_k}(s_k))\nabla \log \pi_{\theta_k}(a_k|s_k)\\
    &\qquad \qquad , ( L_k - L(\theta_k))\nabla \log\pi_{\theta_k}(a_k|s_k)] \rangle]
\end{align*}

After analysing the term $I_{8a1}$, similar to lemma \ref{Xi_actor}, we get,

\begin{align*}
    I_{8a1} = \mathcal{O}(\log ^2 t \cdot t^{1-\nu}).
\end{align*}

For the term $I_{8a2}$, we have,

\begin{align*}
    &\sum\limits_{k=\tau_t}^{t}E[\langle ( V^{\theta_k}(s_{k+1})- \phi(s_{k+1})^Tv_{k} + \phi(s_k)^Tv_{k} - V^{\theta_k}(s_k))\nabla \log \pi_{\theta_k}(a_k|s_k)\\
    &\qquad, ( L_k - L(\theta_k))\nabla \log\pi_{\theta_k}(a_k|s_k)] \rangle]\\
    &= \sum\limits_{k=\tau_t}^{t}E[\langle ( V^{\theta_k}(s_{k+1})- \phi(s_{k+1})^Tv_{k} )\nabla \log \pi_{\theta_k}(a_k|s_k) , ( L_k - L(\theta_k))\nabla \log\pi_{\theta_k}(a_k|s_k)] \rangle]\\
    &\qquad - \sum\limits_{k=\tau_t}^{t}E[\langle ( V^{\theta_k}(s_{k})- \phi(s_{k})^Tv_{k} )\nabla \log \pi_{\theta_k}(a_k|s_k) , ( L_k - L(\theta_k))\nabla \log\pi_{\theta_k}(a_k|s_k)] \rangle]\\
    & = \sum\limits_{k=\tau_t}^{t}E[\langle ( V^{\theta_k}(s_{k+1})- \phi(s_{k+1})^Tv_{k} )\nabla \log \pi_{\theta_k}(a_k|s_k) , ( L_k - L(\theta_k))\nabla \log\pi_{\theta_k}(a_k|s_k)] \rangle]\\
    &\qquad - \sum\limits_{k=\tau_t}^{t}E[\langle ( V^{\theta_{k+1}}(s_{k+1})- \phi(s_{k+1})^Tv_{k+1} )\nabla \log \pi_{\theta_{k+1}}(a_{k+1}|s_{k+1}) ,\\
    &\qquad \qquad( L_{k+1} - L(\theta_{k+1}))\nabla \log\pi_{\theta_{k+1}}(a_{k+1}|s_{k+1})] \rangle]\\
    &\qquad + \sum\limits_{k=\tau_t}^{t}E[\langle ( V^{\theta_{k+1}}(s_{k+1})- \phi(s_{k+1})^Tv_{k+1} )\nabla \log \pi_{\theta_{k+1}}(a_{k+1}|s_{k+1}),\\
    &\qquad \qquad( L_{k+1} - L(\theta_{k+1}))\nabla \log\pi_{\theta_{k+1}}(a_{k+1}|s_{k+1})] \rangle]\\
     &\qquad - \sum\limits_{k=\tau_t}^{t}E[\langle ( V^{\theta_k}(s_{k})- \phi(s_{k})^Tv_{k} )\nabla \log \pi_{\theta_k}(a_k|s_k) , ( L_k - L(\theta_k))\nabla \log\pi_{\theta_k}(a_k|s_k)] \rangle]\\
     &= \mathcal{O}(\sum\limits_{k=\tau_t}^{t}\Vert \theta_{k+1} - \theta_{k} \Vert) + \mathcal{O}(\sum\limits_{k=\tau_t}^{t}\Vert v_{k+1} - v_{k} \Vert) + E\sum\limits_{k=\tau_t}^{t}(P_{k+1} - P_{k} )\\
     & =  \mathcal{O}(\sum\limits_{k=\tau_t}^{t}\alpha_{k} ) + \mathcal{O}(\sum\limits_{k=\tau_t}^{t} \beta_{k} ) + E\sum\limits_{k=\tau_t}^{t}(\alpha_{k} P_{k+1} - \alpha_{k}P_{k} )/\alpha_k\\
     &= \mathcal{O}(t^{1-\nu}) + E\sum\limits_{k=\tau_t}^{t}(\alpha_{k+1} P_{k+1} - \alpha_{k}P_{k} )/\alpha_k + E\sum\limits_{k=\tau_t}^{t}(\alpha_{k} - \alpha_{k+1} )P_{k+1}/\alpha_k\\
     &= \mathcal{O}( t^{1-\nu}) + \mathcal{O}(t^{\nu}).
\end{align*}

Hence, putting all these results back in \cref{ineqaulity_8a}, we obtain,

\begin{align*}
    I_8 \leq  &\ B\sqrt{\sum\limits_{k=\tau_t}^{t}E\Vert M(\theta_k,v_k)\Vert^2}\sqrt{\sum\limits_{k=\tau_t}^{t}E\vert L_k - L(\theta_k) \vert^2} + \mathcal{O}(\log ^2 t \cdot t^{1-\nu})+ \mathcal{O}(t^{\nu}).
\end{align*}

For term $I_9$, we have,

\begin{align*}
    \sum\limits_{k=\tau_t}^{t}M_{L}\alpha_k E[\Vert\delta_{k}\nabla \log\pi_{\theta_k}(a_k|s_k)\Vert^2] &= \mathcal{O}(\sum\limits_{k=\tau_t}^{t}\alpha_k) \\
    & = \mathcal{O}(t^{1-\nu}).
\end{align*}

Hence after collecting all the terms, we obtain,

\begin{align*}
    \sum\limits_{k=\tau_t}^{t}E\Vert M(\theta_k,v_k)\Vert^2 &= \mathcal{O}(t^\nu) + \mathcal{O}(\log^2 t \cdot t^{1-\nu})\\
    &\qquad +B\sqrt{\sum\limits_{k=\tau_t}^{t}E\Vert M(\theta_k,v_k)\Vert^2}\sqrt{\sum\limits_{k=\tau_t}^{t}E\vert L_k - L(\theta_k) \vert^2}.
\end{align*}

After applying the squaring technique, we obtain,

\begin{align*}
    \sum\limits_{k=\tau_t}^{t}E\Vert M(\theta_k,v_k)\Vert^2 &= \mathcal{O}(t^\nu) + \mathcal{O}(\log^2 t \cdot t^{1-\nu}) +2B^2\sum\limits_{k=\tau_t}^{t}E\vert L_k - L(\theta_k) \vert^2\\
    & \leq \mathcal{O}(t^\nu) + \mathcal{O}(\log^2 t \cdot t^{1-\nu})  + 4B^{2}\frac{(G + U_w)^2}{(1 - \frac{c_\alpha}{c_\gamma}U_w B)^2}\frac{c_\alpha^2}{c_\gamma^2}\sum\limits_{k=\tau_t}^{t}\mathbb{E}\Vert M(\theta_k,v_k)\Vert^2.
\end{align*}

The last inequality follows from \cref{ineq1}.

 Now if we select the values for $c_\alpha$ and $c_\gamma$ such that ${\displaystyle\frac{4B^{2}(G + U_w)^2}{(1 - \frac{c_\alpha}{c_\gamma}U_w B)^2}\frac{c_\alpha^2}{c_\gamma^2} < 1}$, we shall obtain,
 \begin{align*}
     \sum\limits_{k=\tau_t}^{t}E\Vert M(\theta_k,v_k)\Vert^2 &= \mathcal{O}(t^\nu) + \mathcal{O}(\log^2 t \cdot t^{1-\nu}).
 \end{align*}
 Dividing by $(1 + t - \tau_t)$  and assuming $t \geq 2\tau_t + 1$, we have,
 \begin{align}\label{ineq2}
     \frac{1}{(1 + t - \tau_t)}\sum\limits_{k=\tau_t}^{t}E\Vert M(\theta_k,v_k)\Vert^2 &= \mathcal{O}(t^{\nu - 1}) + \mathcal{O}(\log^2 t \cdot t^{-\nu}).
 \end{align}

 As seen earlier, the inequalities that need to be satisfied for the inequalities (\ref{ineq1}) and (\ref{ineq2}) to hold are the following:
\begin{align}
    &\frac{c_\alpha}{c_\gamma} < \frac{1}{U_{w}B}\label{ineq3},\\
    &\frac{2B(G + U_w)}{(1 - \frac{c_\alpha}{c_\gamma}U_w B)}\frac{c_\alpha}{c_\gamma} <1.
    \label{ineq4}
\end{align} 

Rearranging inequality (\ref{ineq4}), we get

\begin{align}
    &2B(G + U_w)\frac{c_\alpha}{c_\gamma} < 1 - \frac{c_\alpha}{c_\gamma}U_w B\notag\\
    \Rightarrow &(2B(G + U_w) + U_{w}B)\frac{c_\alpha}{c_\gamma} < 1\notag\\
    \Rightarrow & \frac{c_\alpha}{c_\gamma} < \frac{1}{2B(G + U_w) + U_{w}B}.
    \label{ineq5}
\end{align}

Now, from (\ref{ineq3}) and (\ref{ineq5}), we have,
\begin{align*}
    \frac{c_\alpha}{c_\gamma} < \min \bigg( \frac{1}{2B(G + U_w) + U_{w}B} , \frac{1}{U_{w}B} \bigg).
\end{align*}

Since ${\displaystyle \frac{1}{2B(G + U_w) + U_{w}B} < \frac{1}{U_{w}B}}$, we need to choose $c_\alpha$ and $c_\gamma$ such that ${\displaystyle \frac{c_\alpha}{c_\gamma} <  \frac{1}{2B(G + U_w) + U_{w}B}}$.

 Now getting back to inequality (\ref{ineq2}), we can observe that $E\Vert M(\theta_k,v_k)\Vert^2 \rightarrow 0$ as $k \rightarrow \infty$.
Finally, 
as the actor is on the faster timescale compared to the critic, the critic appears to be quasi-static from the viewpoint of the actor. Hence, we may let $v_t \equiv v, \forall t \geq 0$ (i.e., $v$ independent of $t$). Therefore, the point of convergence of the actor parameter sequence $\theta_t$ will be $\theta(v)$ such that :
 \begin{align*}
     E_{\theta(v)}[( r(s,a)- L(\theta(v))  + \phi(s^{'})^{\top} v - \phi(s)^{\top} v)\nabla \log\pi_{\theta(v)}(a|s)] = 0.
\end{align*}

\subsubsection{Convergence of the Critic}
\label{critic_convergence_proof}
Recall that we have the following update rule for the critic:
\begin{align*}
    v_{t+1} = v_{t} + \beta_{t} \delta_t \phi(s_t).
\end{align*}

Notations:
\begin{align}
    \begin{split}
         O_t :&= (s_t , a_t , s_{t+1})\\
        z_t &:= v_t - v^{*}(\theta_t)\\
        g(O_t,v_t,\theta_t) &:= (r_t - L(\theta_t) + \phi(s_{t+1})^{\top} v_{t} - \phi(s_t)^{\top} v_{t})\phi(s_t)\\
        \bar{g}(v_t,\theta_t) &:= E_{s \sim \mu_{\theta_t},a \sim \pi_{\theta_t},s^{'} \sim p(.|s,a)}[(r(s,a) - L(\theta_t) + \phi(s^{'})^{\top} v_{t} - \phi(s)^{\top} v_{t})\phi(s)]\\
        \bar{Q}(O_t,v_t,\theta_t) &:= \langle z_t , g(O_t,v_t,\theta_t) - \bar{g}(v_t,\theta_t)\rangle \\
        \bar{U}(O_t,v_t,\theta_t) &:= (\nabla v_t^*)^T (r(s_t,a_t) - L(\theta_t) + \phi(s_{t+1})^{\top} v_{t} - \phi(s_t)^{\top} v_{t}) \nabla_{\theta} \log \pi_{\theta_{t}}(a_t|s_t)\\
        \Psi(O_t,v_t,\theta_t) &:= \langle z_t ,  E_{\theta_t}[\bar{U}(O_t,v_t,\theta_t)] - \bar{U}(O_t,v_t,\theta_t)\rangle.
    \end{split}
\end{align}

The proof of convergence of the critic is established through the following lemmas:

\begin{lemma}[\citep{wu2022finite}, Proposition 4.4]\label{ae3}
There exists a constant $L_{\ast}>0$ such that
\begin{align*}
    \Vert v^\ast(\theta_1)-v^\ast(\theta_2)\Vert\leq L_{\ast}\Vert \theta_1-\theta_2\Vert, \forall \theta_1,\theta_2\in\mathbb{R}^d.
\end{align*}
\end{lemma}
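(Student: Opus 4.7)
My plan is to prove Lemma~\ref{ae3} by exploiting the explicit representation $v^*(\theta) = -\mathbf{A}(\theta)^{-1}\mathbf{b}(\theta)$, where
\[
\mathbf{A}(\theta) = \sum_{s,a,s'} \mu_\theta(s)\pi_\theta(a|s)P(s'|s,a)\,\phi(s)(\phi(s')-\phi(s))^\top,
\quad
\mathbf{b}(\theta) = \sum_{s,a} \mu_\theta(s)\pi_\theta(a|s)\,(r(s,a)-L(\theta))\,\phi(s).
\]
The strategy is: (i) show that both $\mathbf{A}(\theta)$ and $\mathbf{b}(\theta)$ are Lipschitz in $\theta$; (ii) use Assumption~\ref{assum:negative-definite} to obtain a uniform bound $\|\mathbf{A}(\theta)^{-1}\| \le 1/\lambda$; (iii) bound the resulting expression for $v^*(\theta_1)-v^*(\theta_2)$.

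For step (i), I would write, for any bounded function $f(s,a,s')$, the difference of the two expectations as a sum of three terms corresponding to perturbations in $\mu_\theta$, $\pi_\theta$, and (for $\mathbf{b}$) $L(\theta)$. Assumption~\ref{smoothness_mu} gives $L_\mu$-smoothness of $\mu_\theta$ (so in particular Lipschitz continuity of $\mu_\theta$ with some constant $L_\mu'$ on the compact parameter range obtained by mean value theorem plus boundedness of $\nabla \mu$ from Theorem~\ref{thm-stationary}); Assumption~\ref{assum:policy-lipschitz-bounded}(c) gives Lipschitzness of $\pi_\theta$; and Lemma~\ref{L_smoothness} (by integration) gives Lipschitzness of $L(\theta)$ with constant say $L_L$. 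Combining these with $\|\phi(\cdot)\| \le 1$ and $|r|\le U_r$ yields constants $L_A, L_b > 0$ such that $\|\mathbf{A}(\theta_1)-\mathbf{A}(\theta_2)\| \le L_A \|\theta_1-\theta_2\|$ and $\|\mathbf{b}(\theta_1)-\mathbf{b}(\theta_2)\| \le L_b \|\theta_1-\theta_2\|$.

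For step (iii), the algebraic identity
\[
v^*(\theta_1) - v^*(\theta_2) = \mathbf{A}(\theta_1)^{-1}\bigl[\mathbf{A}(\theta_2)-\mathbf{A}(\theta_1)\bigr]v^*(\theta_2) \;-\; \mathbf{A}(\theta_1)^{-1}\bigl[\mathbf{b}(\theta_1)-\mathbf{b}(\theta_2)\bigr]
\]
together with the uniform bound $\|\mathbf{A}(\theta)^{-1}\| \le 1/\lambda$ from Assumption~\ref{assum:negative-definite} and the a priori uniform bound $\|v^*(\theta_2)\| \le \|\mathbf{A}(\theta_2)^{-1}\|\,\|\mathbf{b}(\theta_2)\| \le 2U_r/\lambda$ (since $|r-L(\theta)|\le 2U_r$ and $\|\phi\|\le 1$), gives
\[
\|v^*(\theta_1) - v^*(\theta_2)\| \le \frac{1}{\lambda}\Bigl(L_A \cdot \tfrac{2U_r}{\lambda} + L_b\Bigr)\|\theta_1-\theta_2\|,
\]
from which we read off $L_* := (2U_r L_A/\lambda + L_b)/\lambda$.

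The main obstacle I anticipate is step (i), specifically obtaining a clean Lipschitz bound on $\mathbf{A}(\theta)$ and $\mathbf{b}(\theta)$ under the sole smoothness hypothesis on $\mu_\theta$ (Assumption~\ref{smoothness_mu}). Expanding
\[
\mu_{\theta_1}(s)\pi_{\theta_1}(a|s) - \mu_{\theta_2}(s)\pi_{\theta_2}(a|s) = \bigl(\mu_{\theta_1}(s)-\mu_{\theta_2}(s)\bigr)\pi_{\theta_1}(a|s) + \mu_{\theta_2}(s)\bigl(\pi_{\theta_1}(a|s)-\pi_{\theta_2}(a|s)\bigr)
\]
and summing over the finite state/action spaces with the bounded-feature and bounded-reward assumptions yields the required bound once Lipschitzness of $\mu_\theta$ is in hand; the only subtle point is deducing this Lipschitzness from Assumption~\ref{smoothness_mu} (which is stated as smoothness of the gradient) via a mean value argument as in Theorem~\ref{thm-stationary}. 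Beyond this, the argument is purely algebraic and bookkeeping, and mirrors Proposition~4.4 of \citep{wu2022finite}.
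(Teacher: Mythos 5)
Your proposal is correct and takes essentially the same route as the paper's source for this result: the paper does not prove the lemma itself but imports it from Proposition 4.4 of \citep{wu2022finite}, whose proof is precisely the resolvent-identity argument $v^*(\theta)=-\mathbf{A}(\theta)^{-1}\mathbf{b}(\theta)$ combined with Lipschitz continuity of $\mathbf{A}(\theta)$ and $\mathbf{b}(\theta)$ and the uniform bound $\|\mathbf{A}(\theta)^{-1}\|\le 1/\lambda$ from Assumption~\ref{assum:negative-definite}. The only cosmetic difference is in your step (i): the cited proof gets Lipschitzness of $\mu_\theta\otimes\pi_\theta$ directly from the total-variation bound (Lemma B.1 of \citep{wu2022finite}, a consequence of uniform ergodicity and Assumption~\ref{assum:policy-lipschitz-bounded}(c), and already used elsewhere in this paper), so you do not actually need Assumption~\ref{smoothness_mu} or the mean-value detour through Theorem~\ref{thm-stationary} for mere Lipschitzness --- boundedness of $\nabla\mu_\theta$, or the TV bound, suffices.
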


\begin{lemma}[\citep{chen2023finitetime} , Lemma B.4]\label{ae4}
    For any $\theta_1 , \theta_2 \in \RR^{d}$ , we have 
    \begin{align*}
        \Vert \nabla v^{*}(\theta_1) - \nabla v^{*}(\theta_2) \Vert \leq L_{m}\Vert \theta_1 - \theta_2\Vert,
    \end{align*}
    where $L_{m}$ is a positive constant.
\end{lemma}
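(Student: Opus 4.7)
The plan is to work from the defining equation $\Ab(\theta) v^{*}(\theta) + \bbb(\theta) = 0$ from \cref{critic_conv_point}, where $\Ab(\theta) = \EE_{s,a,s'}[\phi(s)(\phi(s')-\phi(s))^\top]$ and $\bbb(\theta)=\EE_{s,a,s'}[(r(s,a)-L(\theta))\phi(s)]$ with expectations taken under $\mu_\theta$ and $\pi_\theta$. Since $\Ab(\theta)$ is uniformly negative definite with $-\lambda = \sup_\theta \lambda_\theta < 0$ (Assumption~\ref{assum:negative-definite}), $\Ab(\theta)^{-1}$ exists and is uniformly bounded in operator norm. Thus $v^{*}(\theta) = -\Ab(\theta)^{-1}\bbb(\theta)$ and, by the chain rule,
\[
\nabla v^{*}(\theta) = \Ab(\theta)^{-1}\bigl(\nabla\Ab(\theta)\,v^{*}(\theta) - \nabla \bbb(\theta)\bigr),
\]
where $\nabla\Ab(\theta)\,v^{*}(\theta)$ is shorthand for the vector whose $i$-th entry is $(\partial_{\theta_i}\Ab(\theta))v^{*}(\theta)$. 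The strategy is then to show that each of the four factors $\Ab(\theta)^{-1}$, $\nabla \Ab(\theta)$, $v^{*}(\theta)$, and $\nabla \bbb(\theta)$ is bounded and Lipschitz in $\theta$ on the compact set where $\theta$ lives, and then combine these via the standard product/composition rules for Lipschitz continuity.

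First I would establish the regularity of the ``coefficients''. The entries of $\Ab(\theta)$ and $\bbb(\theta)$ are sums of the form $\sum_{s,a,s'}\mu_\theta(s)\pi_\theta(a|s)P(s'|s,a)f(s,a,s')$ for bounded $f$. Using Assumption~\ref{assum:policy-lipschitz-bounded} for boundedness and smoothness of $\pi_\theta$ (and $\nabla\pi_\theta$) together with Assumption~\ref{smoothness_mu} that gives boundedness and Lipschitzness of $\nabla\mu_\theta$ (with $\mu_\theta$ itself Lipschitz via \cref{thm-stationary}), both $\Ab(\theta)$ and $\bbb(\theta)$ admit bounded, Lipschitz gradients. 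The Lipschitzness of $L(\theta)$ needed inside $\nabla\bbb$ follows from Lemma~\ref{L_smoothness}. Next, from the identity $\partial_{\theta_i}\Ab^{-1} = -\Ab^{-1}(\partial_{\theta_i}\Ab)\Ab^{-1}$, uniform boundedness of $\Ab^{-1}$ and Lipschitzness of $\Ab$ yield Lipschitzness of $\Ab^{-1}$. Combined with Lemma~\ref{ae3} (Lipschitzness of $v^{*}$), the right-hand side of the displayed equation is a sum of products of bounded Lipschitz maps, and is therefore itself Lipschitz, giving the desired constant $L_m$.

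The step I expect to be the main obstacle is controlling the $\theta$-derivatives of the stationary-distribution-weighted expectations uniformly. Assumption~\ref{smoothness_mu} only provides a first-order smoothness bound on $\mu_\theta$, so one must avoid spurious appeals to $\nabla^2\mu_\theta$; the right way is to differentiate each expectation once, obtain a Lipschitz remainder by splitting the difference $\nabla\Ab(\theta_1)-\nabla\Ab(\theta_2)$ into terms where either $\nabla\mu$, $\nabla\pi$, $\mu$, or $\pi$ varies while the other factors stay fixed, and then invoke $L_\mu$-Lipschitzness of $\nabla\mu_\theta$ and $K$-Lipschitzness of $\nabla\log\pi_\theta$ (with an accompanying bound on $\pi_\theta$) term by term. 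Once this bookkeeping is done, assembling the pieces into the final bound $\|\nabla v^{*}(\theta_1)-\nabla v^{*}(\theta_2)\|\le L_m\|\theta_1-\theta_2\|$ is routine, with $L_m$ an explicit polynomial in the constants $B,K,L_\mu,L_*,\lambda^{-1},U_r,U_v$.
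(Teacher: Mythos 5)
The paper does not actually prove this lemma: it is imported wholesale by citation from Lemma B.4 of \citep{chen2023finitetime}, so there is no in-paper argument to compare against. Your reconstruction is correct and follows the same standard route as the cited source: write $v^{*}(\theta)=-\Ab(\theta)^{-1}\bbb(\theta)$ using the uniform negative definiteness of $\Ab(\theta)$ (which gives $\|\Ab(\theta)^{-1}\|\le 1/\lambda$ uniformly), differentiate the defining identity, and propagate boundedness and Lipschitzness through the product $\Ab^{-1}\bigl((\nabla\Ab)v^{*}+\nabla\bbb\bigr)$ using the identity $\partial_{\theta_i}\Ab^{-1}=-\Ab^{-1}(\partial_{\theta_i}\Ab)\Ab^{-1}$, Lemma~\ref{ae3}, Lemma~\ref{L_smoothness}, and Assumptions~\ref{assum:policy-lipschitz-bounded} and \ref{smoothness_mu}.

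Two small points to tidy up. First, there is a sign slip in your displayed formula: differentiating $\Ab v^{*}+\bbb=0$ gives $\nabla v^{*}(\theta)=-\Ab(\theta)^{-1}\bigl(\nabla\Ab(\theta)\,v^{*}(\theta)+\nabla\bbb(\theta)\bigr)$; this does not affect the Lipschitz estimate since only norms enter. Second, Assumption~\ref{smoothness_mu} gives Lipschitzness of $\nabla\mu_\theta$ but not boundedness directly, and the lemma is stated for all $\theta\in\RR^{d}$ rather than on a compact set; you should note that boundedness of $\nabla\mu_\theta$ (and similarly of $\nabla\pi_\theta$ via Assumption~\ref{assum:policy-lipschitz-bounded}(c)) follows because a function bounded in $[0,1]$ with an $L_\mu$-Lipschitz gradient automatically has a uniformly bounded gradient, so no compactness of the parameter set is needed and the final constant $L_m$ is indeed uniform over $\RR^{d}$.
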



\begin{lemma}\label{psi_bound}
Under assumptions \ref{assum:bounded_feature_norm}, \ref{assum:ergodicity}, \ref{assum:policy-lipschitz-bounded}, \ref{V_lipschitz}, for any $t \geq \tau > 0$ , we have
    \begin{align*}
    \vert E [\Psi(O_t,v_t,\theta_t)]\vert  \leq G_{1}\Vert \theta_t - \theta_{t-\tau}\Vert + G_{2}\Vert v_t - v_{t-\tau} \Vert + G_{3}\sum\limits_{i = t-\tau}^{t}E\Vert \theta_{i} - \theta_{t-\tau}\Vert + G_{4}bk^{\tau-1},
\end{align*}
where $G_{1} > 0, G_{2} > 0 , G_{3} > 0 $ and $G_{4} > 0$ are constants.
\end{lemma}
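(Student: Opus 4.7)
\textbf{Proof plan for Lemma \ref{psi_bound}.} The approach mirrors the mixing-time decompositions already used for Lemmas \ref{omega} and \ref{Xi_actor}: we introduce auxiliary quantities generated by the policy held fixed at $\theta_{t-\tau}$, then exchange the actual Markovian sample for one drawn from the stationary distribution $\mu_{\theta_{t-\tau}}$. Specifically, let $\tilde{O}_t=(\tilde{s}_t,\tilde{a}_t,\tilde{s}_{t+1})$ be the triple generated by the auxiliary chain \eqref{chain-au}, and let $O_t' = (s_t',a_t',s_{t+1}')$ with $s_t'\sim\mu_{\theta_{t-\tau}}$, $a_t'\sim\pi_{\theta_{t-\tau}}(\cdot|s_t')$, $s_{t+1}'\sim P(\cdot|s_t',a_t')$. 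Then decompose
\begin{align*}
E[\Psi(O_t,v_t,\theta_t)]
&= \underbrace{E[\Psi(O_t,v_t,\theta_t)-\Psi(O_t,v_{t-\tau},\theta_t)]}_{T_1} \\
&\quad + \underbrace{E[\Psi(O_t,v_{t-\tau},\theta_t)-\Psi(O_t,v_{t-\tau},\theta_{t-\tau})]}_{T_2} \\
&\quad + \underbrace{E[\Psi(O_t,v_{t-\tau},\theta_{t-\tau})-\Psi(\tilde{O}_t,v_{t-\tau},\theta_{t-\tau})]}_{T_3} \\
&\quad + \underbrace{E[\Psi(\tilde{O}_t,v_{t-\tau},\theta_{t-\tau})-\Psi(O_t',v_{t-\tau},\theta_{t-\tau})]}_{T_4} + E[\Psi(O_t',v_{t-\tau},\theta_{t-\tau})].
\end{align*}
The final term equals zero by the tower property since $O_t'$ is drawn from $\mu_{\theta_{t-\tau}}\otimes\pi_{\theta_{t-\tau}}\otimes P$ conditionally on $\theta_{t-\tau}$.

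For $T_1$, I would use that $\bar{U}(O_t,\cdot,\theta_t)$ is affine in $v$, with the $v$-dependent factor $(\phi(s_{t+1})-\phi(s_t))^{\top}v$ being $2$-Lipschitz, and that $z_t$ is bounded (by $2U_v$ after projection, using $\|v^*(\theta)\|\le U_v$). Expanding $\Psi$ on both sides and writing $z_t-z_{t-\tau} = (v_t-v_{t-\tau}) - (v^*(\theta_t)-v^*(\theta_{t-\tau}))$, with $v^*$ Lipschitz by Lemma \ref{ae3}, yields a bound of the form $c_1\|v_t-v_{t-\tau}\|+c_1'\|\theta_t-\theta_{t-\tau}\|$, absorbing the latter into the $G_1$ term. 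For $T_2$, all dependence on $\theta_t$ enters through $\nabla v^*(\theta_t)$, $L(\theta_t)$, $\nabla\log\pi_{\theta_t}$, and the inner expectation $E_{\theta_t}[\cdot]$ over $\mu_{\theta_t}\otimes\pi_{\theta_t}$. Using the Lipschitz bounds on $\nabla v^*$ (Lemma \ref{ae4}), on $L$ (Lemma \ref{L_smoothness}), on $\nabla\log\pi_{\theta}$ (Assumption \ref{assum:policy-lipschitz-bounded}(b)), together with Lemma B.1 of \citep{wu2022finite} to control $d_{TV}(\mu_{\theta_t}\otimes\pi_{\theta_t},\mu_{\theta_{t-\tau}}\otimes\pi_{\theta_{t-\tau}})$ by $\|\theta_t-\theta_{t-\tau}\|$, yields a bound of the form $G_1\|\theta_t-\theta_{t-\tau}\|$.

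For $T_3$ and $T_4$, the arguments are essentially identical to those in Sublemmas \ref{omega4}/\ref{gamma-actor4} and \ref{omega5}/\ref{gamma-actor5} respectively: coupling the original chain with the auxiliary one step-by-step over the $\tau$ intermediate transitions gives $T_3\le G_3\sum_{i=t-\tau}^{t}E\|\theta_i-\theta_{t-\tau}\|$, while the uniform ergodicity Assumption \ref{assum:ergodicity} applied to the auxiliary chain (whose $\tau$-step distribution starting from $s_{t-\tau}$ under the frozen policy $\pi_{\theta_{t-\tau}}$ converges geometrically to $\mu_{\theta_{t-\tau}}$) yields $T_4\le G_4 bk^{\tau-1}$. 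Collecting the four bounds gives the statement with appropriate constants $G_1,G_2,G_3,G_4>0$.

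The main obstacle will be handling $T_2$ cleanly: unlike the analogous steps in Lemmas \ref{omega} and \ref{Xi_actor}, the factor $(\nabla v_t^*)^{\top}$ appearing inside $\bar{U}$ depends on $\theta_t$ through $\nabla v^*$, so Lemma \ref{ae4} (smoothness of $v^*$) is essential and must be combined carefully with the Lipschitz bounds on the remaining factors so that the cross-terms telescope into a single $\|\theta_t-\theta_{t-\tau}\|$ bound. Once that smoothness is in hand, the rest is a routine adaptation of the earlier decompositions, and the boundedness of all iterates (via projection and Lemma \ref{ae3}) keeps every prefactor finite.
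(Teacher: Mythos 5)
Your proposal is correct and follows essentially the same route as the paper's proof: the same mixing-time interpolation through $(v_{t-\tau},\theta_{t-\tau})$, the auxiliary chain $\tilde{O}_t$, and the stationary sample $O_t'$, with the $\theta$- and $v$-swaps handled via Lemma \ref{ae3}, Lemma \ref{ae4}, and Lemma B.1 of \citep{wu2022finite}, and the last two terms treated exactly as in Lemmas D.10--D.11 of that reference (the paper merely performs the $\theta$-swap before the $v$-swap and folds your $T_4$ and the vanishing stationary term into a single $G_4 bk^{\tau-1}$ bound). Your identified obstacle --- the $\theta$-dependence of $\nabla v^*$ inside $\bar{U}$ requiring the smoothness Lemma \ref{ae4} --- is precisely where the paper invokes that lemma as well.
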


\begin{proof}
    We can write $\vert E \Psi(O_t,v_t,\theta_t)\vert$ as follows:
    \begin{align*}
        &\vert E [\Psi(O_t,v_t,\theta_t)]\vert\\
        &= \vert E [\Psi(O_t,v_t,\theta_t)] - E [\Psi(O_t,v_t,\theta_{t-\tau})] + E[\Psi(O_t,v_t,\theta_{t-\tau})] - E[\Psi(O_t,v_{t-\tau},\theta_{t-\tau})] \\
        &\qquad + E[\Psi(O_t,v_{t-\tau},\theta_{t-\tau})] - E[\Psi(\tilde{O_t},v_{t-\tau},\theta_{t-\tau})] + E[\Psi(\tilde{O_t},v_{t-\tau},\theta_{t-\tau})]\vert\\
        &\leq \underbrace{\vert E [\Psi(O_t,v_t,\theta_t)] - E [\Psi(O_t,v_t,\theta_{t-\tau})] \vert}_{I_a} + \underbrace{\vert E[\Psi(O_t,v_t,\theta_{t-\tau})] - E[\Psi(O_t,v_{t-\tau},\theta_{t-\tau})]\vert}_{I_b} \\
        &\qquad + \underbrace{\vert E[\Psi(O_t,v_{t-\tau},\theta_{t-\tau})] - E[\Psi(\tilde{O_t},v_{t-\tau},\theta_{t-\tau})] \vert}_{I_c} + \underbrace{\vert E[\Psi(\tilde{O_t},v_{t-\tau},\theta_{t-\tau})] \vert}_{I_d}
    \end{align*}

In the above inequality, $\tilde{O_t} = (\tilde{s}_t , \tilde{a}_t, \tilde{s}_{t+1})$ is from the auxiliary Markov chain defined in \cref{chain-au} and $O_t^{'} = (s^{'}_t , a^{'}_t ,s^{'}_{t+1} )$, where $ s^{'}_t \sim \mu_{\theta_{t-\tau}} , a^{'}_t \sim \pi_{\theta_{t-\tau}}$ and $s^{'}_{t+1} \sim P(.|s^{'}_t, a^{'}_t)$. 

For term $I_a$, we have,
\begin{align}
    &\vert E [\Psi(O_t,v_t,\theta_t)] - E [\Psi(O_t,v_t,\theta_{t-\tau})] \vert \notag\\
    &= \vert E[\langle z_t ,  E_{\theta_t}[\bar{U}(O_t,v_t,\theta_t)] - \bar{U}(O_t,v_t,\theta_t)\rangle] - E[\langle v_t - v^{*}(\theta_{t-\tau})) ,  E_{\theta_{t-\tau}}[\bar{U}(O_t,v_t,\theta_{t-\tau}))] \\
    &\qquad - \bar{U}(O_t,v_t,\theta_{t-\tau}))\rangle] \vert \notag\\
    &\leq \vert  E[\langle z_t ,  E_{\theta_t}[\bar{U}(O_t,v_t,\theta_t)] - \bar{U}(O_t,v_t,\theta_t)\rangle] -  E[\langle v_t - v^{*}(\theta_{t-\tau}) ,  E_{\theta_t}[\bar{U}(O_t,v_t,\theta_t)] \\
    &\qquad - \bar{U}(O_t,v_t,\theta_t)\rangle] \vert \notag\\
    &\qquad + \vert E[\langle v_t - v^{*}(\theta_{t-\tau}) ,  E_{\theta_t}[\bar{U}(O_t,v_t,\theta_t)] - \bar{U}(O_t,v_t,\theta_t)\rangle] \\
    &\qquad - E[\langle v_t - v^{*}(\theta_{t-\tau})) ,  E_{\theta_{t-\tau}}[\bar{U}(O_t,v_t,\theta_{t-\tau}))] - \bar{U}(O_t,v_t,\theta_{t-\tau}))\rangle] \vert \notag\\
    &\leq \vert  E[\langle v^{*}(\theta_{t-\tau}) - v^{*}(\theta_t) ,  E_{\theta_t}[\bar{U}(O_t,v_t,\theta_t)] - \bar{U}(O_t,v_t,\theta_t)\rangle]\vert\notag\\
    &\qquad + \vert E[\langle v_t - v^{*}(\theta_{t-\tau}) ,  E_{\theta_t}[\bar{U}(O_t,v_t,\theta_t)] - E_{\theta_{t-\tau}}[\bar{U}(O_t,v_t,\theta_{t-\tau}))] - \bar{U}(O_t,v_t,\theta_t) \\
    &\qquad + \bar{U}(O_t,v_t,\theta_{t-\tau})\rangle]  \vert\notag\\
    &\leq \vert  E[\langle v^{*}(\theta_{t-\tau}) - v^{*}(\theta_t) ,  E_{\theta_t}[\bar{U}(O_t,v_t,\theta_t)] - \bar{U}(O_t,v_t,\theta_t)\rangle]\vert\notag\\
    &\qquad + 2U_{v}\vert E[ \Vert E_{\theta_t}[\bar{U}(O_t,v_t,\theta_t)] - E_{\theta_{t-\tau}}[\bar{U}(O_t,v_t,\theta_{t-\tau}))]\Vert  \\
    &\qquad + \Vert \bar{U}(O_t,v_t,\theta_t) - \bar{U}(O_t,v_t,\theta_{t-\tau})\Vert]  \vert\label{inequality_I_a}.
\end{align}
Now,

\begin{align*}
     \Vert \bar{U}(O_t,v_t,\theta_t) - \bar{U}(O_t,v_t,\theta_{t-\tau})\Vert \leq A_{1}\Vert \theta_t - \theta_{t-\tau}\Vert,
\end{align*}
where $A_{1} > 0$ is some constant.
This inequality follows from Lemma \ref{ae4} as well as Lemma B.1 of \citep{wu2022finite}. 
Also,
\begin{align*}
    \Vert E_{\theta_t}[\bar{U}(O_t,v_t,\theta_t)] - E_{\theta_{t-\tau}}[\bar{U}(O_t,v_t,\theta_{t-\tau}))]\Vert \leq A_{2}\Vert \theta_t - \theta_{t-\tau}\Vert,
\end{align*}

for some $A_{2} > 0$. Lemma B.1 of \citep{wu2022finite} is used in obtaining the above inequality.

Hence putting these results back in inequality (\ref{inequality_I_a}), we have,

\begin{align*}
    I_a &\leq 2LG\Vert v^{*}(\theta_{t-\tau}) - v^{*}(\theta_t) \Vert + 2U_{v}(A_{1} + A_{2})\Vert \theta_t - \theta_{t-\tau}\Vert\\
    &\leq (2LGL_{*} + 2U_{v}(A_{1} + A_{2}))\Vert \theta_t - \theta_{t-\tau}\Vert.
\end{align*}
The last inequality is because of Lemma \ref{ae3}.

For term $I_b$, we have,
\begin{align*}
    &\vert E[\Psi(O_t,v_t,\theta_{t-\tau})] - E[\Psi(O_t,v_{t-\tau},\theta_{t-\tau})]\vert\\
    & = \vert E[\langle v_t - v^{*}(\theta_{t-\tau}) ,  E_{\theta_{t-\tau}}[\bar{U}(O_t,v_t,\theta_{t-\tau})] - \bar{U}(O_t,v_t,\theta_{t-\tau})\rangle]\\
    &\qquad- E[\langle v_{t-\tau} - v^{*}(\theta_{t-\tau}) ,  E_{\theta_{t-\tau}}[\bar{U}(O_t,v_{t-\tau},\theta_{t-\tau})] - \bar{U}(O_t,v_{t-\tau},\theta_{t-\tau})\rangle] \vert\\
    &=\vert E[\langle v_t - v^{*}(\theta_{t-\tau}) ,  E_{\theta_{t-\tau}}[\bar{U}(O_t,v_t,\theta_{t-\tau})] - \bar{U}(O_t,v_t,\theta_{t-\tau})\rangle]\\
    &\qquad - E[\langle v_{t-\tau} - v^{*}(\theta_{t-\tau}) ,  E_{\theta_{t-\tau}}[\bar{U}(O_t,v_t,\theta_{t-\tau})] - \bar{U}(O_t,v_t,\theta_{t-\tau})\rangle]\\
    &\qquad + E[\langle v_{t-\tau} - v^{*}(\theta_{t-\tau}) ,  E_{\theta_{t-\tau}}[\bar{U}(O_t,v_t,\theta_{t-\tau})] - \bar{U}(O_t,v_t,\theta_{t-\tau})\rangle]\\
    &\qquad - E[\langle v_{t-\tau} - v^{*}(\theta_{t-\tau}) ,  E_{\theta_{t-\tau}}[\bar{U}(O_t,v_{t-\tau},\theta_{t-\tau})] - \bar{U}(O_t,v_{t-\tau},\theta_{t-\tau})\rangle] \vert\\
    &\leq \vert E[\langle v_t - v_{t-\tau},  E_{\theta_{t-\tau}}[\bar{U}(O_t,v_t,\theta_{t-\tau})] - \bar{U}(O_t,v_t,\theta_{t-\tau})\rangle] \vert \\
    &\qquad + \vert E[\langle v_{t-\tau} - v^{*}(\theta_{t-\tau}) ,  E_{\theta_{t-\tau}}[\bar{U}(O_t,v_t,\theta_{t-\tau})] - E_{\theta_{t-\tau}}[\bar{U}(O_t,v_{t-\tau},\theta_{t-\tau})] \\
    &\qquad- \bar{U}(O_t,v_t,\theta_{t-\tau}) + \bar{U}(O_t,v_{t-\tau},\theta_{t-\tau})\rangle] \vert\\
    &\leq 2L_{*}G\Vert v_t - v_{t-\tau} \Vert + 8BL_{*}U_{v}\Vert v_t - v_{t-\tau} \Vert.
\end{align*}

For term $I_c$, we have,

\begin{align*}
    \vert E[\Psi(O_t,v_{t-\tau},\theta_{t-\tau})] - E[\Psi(\tilde{O_t},v_{t-\tau},\theta_{t-\tau})] \vert \leq M_{1}\sum\limits_{i = t-\tau}^{t}E\Vert \theta_{i} - \theta_{t-\tau}\Vert,
\end{align*}
for some $M_1 > 0$.\\

For term $I_d$, we have,

\begin{align*}
    \vert E[\Psi(\tilde{O_t},v_{t-\tau},\theta_{t-\tau})] \vert \leq M_{2}bk^{\tau-1},
\end{align*}
for some $M_2 > 0$.

For an analysis of terms $I_c$ and $I_d$, see Lemmas D.10 and D.11 in \citep{wu2022finite}.
Hence, after collecting all the terms, we have,
\begin{align*}
    \vert E [\Psi(O_t,v_t,\theta_t)]\vert  \leq G_{1}\Vert \theta_t - \theta_{t-\tau}\Vert + G_{2}\Vert v_t - v_{t-\tau} \Vert + G_{3}\sum\limits_{i = t-\tau}^{t}E\Vert \theta_{i} - \theta_{t-\tau}\Vert + G_{4}bk^{\tau-1},
\end{align*}
where $G_{1} > 0, G_{2} > 0 , G_{3} > 0 $ and $G_{4} > 0$ are constants.
\end{proof}

\subsubsection*{Proof of convergence of critic}

From the critic update rule, we have,
\begin{align*}
    \Vert z_{t+1} \Vert^2 &= \Vert  v_{t+1} - v^{*}(\theta_{t+1})\Vert^2\\
    &=\Vert \Gamma(v_t +\beta_t\delta_t\phi(s_t)) - v^{*}(\theta_{t+1})  \Vert^2\\
    &\leq \Vert v_t + \beta_t\delta_t\phi(s_t) - v^{*}(\theta_{t+1}) \Vert^2\\
    &= \Vert z_t + \beta_t\delta_t\phi(s_t)  + v^{*}(\theta_t)- v^{*}(\theta_{t+1}) \Vert^2\\
    &\leq  \Vert z_t \Vert^2 + 2\beta_t\langle z_t ,  \delta_t\phi(s_t) \rangle + 2\langle z_t , v^{*}(\theta_t) - v^{*}(\theta_{t+1}) \rangle + 2\beta_t^2\delta_t^2\Vert \phi(s_t)\Vert^2 \\
    &\qquad + 2\Vert v^{*}(\theta_t) - v^{*}(\theta_{t+1})  \Vert^2\\
    &=\Vert z_t \Vert^2 + 2\beta_t\langle z_t ,  \delta_t\phi(s_t) - E_{\theta_t}[\delta_t\phi(s_t)]\rangle +
    2\beta_t\langle z_t ,   E_{\theta_t}[\delta_t\phi(s_t)]\rangle\\
    &\qquad +
    2\langle z_t , v^{*}(\theta_t) - v^{*}(\theta_{t+1}) \rangle
    + 2\beta_t^2\delta_t^2\Vert \phi(s_t)\Vert^2 + 2\Vert v^{*}(\theta_t) - v^{*}(\theta_{t+1})  \Vert^2\\
    &\leq \Vert z_t \Vert^2 + 2\beta_t\langle z_t ,  \delta_t\phi(s_t) - E_{\theta_t}[\delta_t\phi(s_t)]\rangle -
    2\beta_t\lambda\Vert z_t\Vert^2 +
    2\langle z_t , v^{*}(\theta_t) - v^{*}(\theta_{t+1}) \rangle\\
    &\qquad+ 2\beta_t^2\delta_t^2\Vert \phi(s_t)\Vert^2 + 2\Vert v^{*}(\theta_t) - v^{*}(\theta_{t+1})  \Vert^2.
\end{align*}

We assume here that the projection set $C$ is large enough so that $v^{*}(\theta_{t+1})$ lies within the set. Also,  $C$ being both compact and convex guarantees that the point within $C$ where the update with an increment is projected is not only the closest but also unique.
The last inequality follows from Assumption \ref{assum:negative-definite}. Here, $-\lambda = \sup\limits_{\theta}\lambda_{\theta}$, where $\lambda_{\theta}$ is an upper bound on the largest eigenvalue of $\Ab$ defined in \ref{critic_conv_point}.
After rearranging the terms we obtain,

\begin{align*}
    \lambda\Vert z_t \Vert^2 &\leq \frac{1}{2\beta_t}(\Vert z_t \Vert^2 - \Vert z_{t+1} \Vert^2) + \langle z_t ,  \delta_t\phi(s_t) - E_{\theta_t}[\delta_t\phi(s_t)]\rangle + \frac{1}{\beta_t}\langle z_t , v^{*}(\theta_t) - v^{*}(\theta_{t+1}) \\
    &\qquad + (\nabla v_t^*)^T(\theta_{t+1} - \theta_t)\rangle
    + \frac{1}{\beta_t}\langle z_t , (\nabla v_t^*)^T(\theta_{t} - \theta_{t+1}) \rangle +\beta_t\delta_t^2\Vert \phi(s_t)\Vert^2 \\
    &\qquad + \frac{1}{\beta_t}\Vert v^{*}(\theta_t) - v^{*}(\theta_{t+1})  \Vert^2.
\end{align*}

Taking summation of terms from indices $\tau_t$ to $t$ we have,

\begin{align*}
     \lambda\sum_{k=\tau_t}^{t}E\Vert z_k \Vert^2 &\leq \underbrace{\sum_{k=\tau_t}^{t}\frac{1}{2\beta_k}E[\Vert z_k \Vert^2 - \Vert z_{k+1} \Vert^2] }_{I_1}+ \underbrace{\sum_{k=\tau_t}^{t}E[\langle z_k ,  \delta_t\phi(s_k) - E_{\theta_k}[\delta_k\phi(s_k)]\rangle]}_{I_2}\\
     &\qquad+ \underbrace{\sum_{k=\tau_t}^{t}\frac{1}{\beta_k}E\langle z_k , v^{*}(\theta_k) - v^{*}(\theta_{k+1}) + (\nabla v_k^*)^T(\theta_{k+1} - \theta_k)\rangle}_{I_3}\\
    &\qquad +\underbrace{\sum_{k=\tau_t}^{t} \frac{1}{\beta_k}E\langle z_k , (\nabla v_k^*)^T(\theta_{k} - \theta_{k+1}) \rangle}_{I_4} +\underbrace{\sum_{k=\tau_t}^{t}\beta_kE[\delta_k^2\Vert \phi(s_k)\Vert^2]}_{I_5}\\
    &\qquad+ \underbrace{\sum_{k=\tau_t}^{t}\frac{1}{\beta_k}E\Vert v^{*}(\theta_k) - v^{*}(\theta_{k+1})  \Vert^2}_{I_6}\\
\end{align*}

For term $I_1$, we have,

\begin{align*}
    \sum_{k=\tau_t}^{t}\frac{1}{2\beta_k}E[\Vert z_k \Vert^2 - \Vert z_{k+1} \Vert^2] = \mathcal{O}(t^{\sigma})
\end{align*}

The analysis of $I_1$ is similar to that of the  term $I_1$ in Section \ref{average_reward_convergence}.
For term $I_2$ here, we have,
\begin{align*}
    \sum_{k=\tau_t}^{t}E[\langle z_k ,  \delta_t\phi(s_k) - E_{\theta_k}[\delta_k\phi(s_k)]\rangle] = \mathcal{O}(\log^2 t \cdot t^{1-\nu}).
\end{align*}
For a detailed analysis of the term $I_2$, see the analysis of term $I_2$ in \citep{chen2023finitetime}.

For term $I_3$ above, we have,
\begin{align*}
    \sum_{k=\tau_t}^{t}\frac{1}{\beta_k}E\langle z_k , v^{*}(\theta_k) - v^{*}(\theta_{k+1}) + (\nabla v_k^*)^T(\theta_{k+1} - \theta_k)\rangle &\leq \frac{L_{m}}{2}\sum_{k=\tau_t}^{t}\frac{1}{\beta_k}E\Vert z_k\Vert \Vert \theta_{k+1} - \theta_k \Vert^2\\
    &=\mathcal{O}(\sum_{k=\tau_t}^{t}\frac{\alpha_k^2}{\beta_k})\\
    &=\mathcal{O}(t^{\sigma - 2\nu + 1}).
\end{align*}

The above inequality follows from the $L_{m}$-smoothness of $v^{*}$ in Lemma \ref{ae4}.
For term $I_4$, we have,
\begin{align*}
    &\sum_{k=\tau_t}^{t} \frac{1}{\beta_k}E\langle z_k , (\nabla v_k^*)^T(\theta_{k} - \theta_{k+1}) \rangle\\
    &= -\sum_{k=\tau_t}^{t} \frac{1}{\beta_k}E\langle z_k , (\nabla v_k^*)^T \alpha_k \delta_k \nabla_{\theta} \log \pi_{\theta_{k}}(a_k|s_k) \rangle\\
    &= -\sum_{k=\tau_t}^{t} \frac{1}{\beta_k}E\langle z_k , (\nabla v_k^*)^T \alpha_k (r(s_k,a_k) - L_k + \phi(s_{k+1})^{\top} v_{k} - \phi(s_k)^{\top} v_{k}) \nabla_{\theta} \log \pi_{\theta_{k}}(a_k|s_k) \rangle\\
    &= -\sum_{k=\tau_t}^{t} \frac{1}{\beta_k}E\langle z_k , (\nabla v_k^*)^T \alpha_k (r(s_k,a_k) - L(\theta_k) + \phi(s_{k+1})^{\top} v_{k} - \phi(s_k)^{\top} v_{k}) \nabla_{\theta} \log \pi_{\theta_{k}}(a_k|s_k) \rangle\\
    &\qquad - \sum_{k=\tau_t}^{t} \frac{1}{\beta_k}E\langle z_k , (\nabla v_k^*)^T \alpha_k (L(\theta_k) - L_k) \nabla_{\theta} \log \pi_{\theta_{k}}(a_k|s_k) \rangle\\
    & = -\sum_{k=\tau_t}^{t} \frac{\alpha_k}{\beta_k}E\langle z_k , (\nabla v_k^*)^T (r(s_k,a_k) - L(\theta_k) + \phi(s_{k+1})^{\top} v_{k} - \phi(s_k)^{\top} v_{k}) \nabla_{\theta} \log \pi_{\theta_{k}}(a_k|s_k) \rangle\\
    &\qquad + \sum_{k=\tau_t}^{t} \frac{\alpha_k}{\beta_k}E\langle z_k , (\nabla v_k^*)^T E_{\theta_k}[(r(s_k,a_k) - L(\theta_k) + \phi(s_{k+1})^{\top} v_{k} - \phi(s_k)^{\top} v_{k}) \nabla_{\theta} \log \pi_{\theta_{k}}(a_k|s_k) ]\rangle\\
    &\qquad -  \sum_{k=\tau_t}^{t} \frac{\alpha_k}{\beta_k}E\langle z_k , (\nabla v_k^*)^T E_{\theta_k}[(r(s_k,a_k) - L(\theta_k) + \phi(s_{k+1})^{\top} v_{k} - \phi(s_k)^{\top} v_{k}) \nabla_{\theta} \log \pi_{\theta_{k}}(a_k|s_k) ]\rangle\\
    &\qquad - \sum_{k=\tau_t}^{t} \frac{\alpha_k}{\beta_k}E\langle z_k , (\nabla v_k^*)^T  (L(\theta_k) - L_k) \nabla_{\theta} \log \pi_{\theta_{k}}(a_k|s_k) \rangle\\
    &= \sum_{k=\tau_t}^{t}E[\frac{\alpha_k}{\beta_k}\Psi(O_k,v_k,\theta_k)]\\
    &\qquad-  \sum_{k=\tau_t}^{t} \frac{\alpha_k}{\beta_k}E\langle z_k , (\nabla v_k^*)^T E_{\theta_k}[(r(s_k,a_k) - L(\theta_k) + \phi(s_{k+1})^{\top} v_{k} - \phi(s_k)^{\top} v_{k}) \nabla_{\theta} \log \pi_{\theta_{k}}(a_k|s_k) ]\rangle\\
    &\qquad - \sum_{k=\tau_t}^{t} \frac{\alpha_k}{\beta_k}E\langle z_k , (\nabla v_k^*)^T  (L(\theta_k) - L_k) \nabla_{\theta} \log \pi_{\theta_{k}}(a_k|s_k) \rangle\\
    &\leq \frac{c_\alpha}{c_\beta}\sum_{k=\tau_t}^{t}E[(1+k)^{\sigma - \nu}\Psi(O_k,v_k,\theta_k)] + L_{*}\sqrt{\sum\limits_{k=\tau_t}^{t}E\Vert z_k\Vert^2}\sqrt{\sum\limits_{k=\tau_t}^{t}E[\frac{\alpha_k^2}{\beta_k^2}\Vert M(\theta_k,v_k)\Vert^2]}\\
    &\qquad + L_{*}B\sqrt{\sum\limits_{k=\tau_t}^{t}E\Vert z_k\Vert^2}\sqrt{\sum\limits_{k=\tau_t}^{t}E[\frac{\alpha_k^2}{\beta_k^2}(L(\theta_k)- L_k)^2]} \\
    &\leq \frac{c_\alpha}{c_\beta}(1+t)^{\sigma - \nu}\sum_{k=\tau_t}^{t}\vert E[\Psi(O_k,v_k,\theta_k)]\vert + L_{*}\sqrt{\sum\limits_{k=\tau_t}^{t}E\Vert z_k\Vert^2}\sqrt{\sum\limits_{k=\tau_t}^{t}E[\frac{\alpha_k^2}{\beta_k^2}\Vert M(\theta_k,v_k)\Vert^2]}\\
    &\qquad + L_{*}B\sqrt{\sum\limits_{k=\tau_t}^{t}E\Vert z_k\Vert^2}\sqrt{\sum\limits_{k=\tau_t}^{t}E[\frac{\alpha_k^2}{\beta_k^2}(L(\theta_k)- L_k)^2]} \\
    &\leq \mathcal{O}(\log^2 t \cdot t^{\sigma - 2\nu + 1}) + L_{*}\sqrt{\sum\limits_{k=\tau_t}^{t}E\Vert z_k\Vert^2}\sqrt{\sum\limits_{k=\tau_t}^{t}E[\frac{\alpha_k^2}{\beta_k^2}\Vert M(\theta_k,v_k)\Vert^2]}
    \end{align*}
    \[+ L_{*}B\sqrt{\sum\limits_{k=\tau_t}^{t}E\Vert z_k\Vert^2}\sqrt{\sum\limits_{k=\tau_t}^{t}E[\frac{\alpha_k^2}{\beta_k^2}(L(\theta_k)- L_k)^2]}.
\]

The last inequality follows from Lemma \ref{psi_bound}.

For the term $I_5$, we have,
\begin{align*}
    \sum_{k=\tau_t}^{t}\beta_kE[\delta_k^2\Vert \phi(s_k)\Vert^2] = \mathcal{O}(t^{1-\sigma}).
\end{align*}

Next, for the term $I_6$, we have,

\begin{align*}
    \sum_{k=\tau_t}^{t}\frac{1}{\beta_k}E\Vert v^{*}(\theta_k) - v^{*}(\theta_{k+1})  \Vert^2 = \mathcal{O}(t^{1 - 2\nu +\sigma}). 
\end{align*}

For detailed analysis of terms $I_5$ and $I_6$, see section C.2 of \citep{chen2023finitetime}.
Thus, after collecting all the terms we have,

\begin{align*}
    \lambda\sum_{k=\tau_t}^{t}E\Vert z_k \Vert^2 &\leq  \mathcal{O}(t^{\sigma}) + \mathcal{O}(\log^2 t \cdot t^{1-\nu}) + \mathcal{O}(\log^2 t \cdot t^{\sigma - 2\nu + 1})\\
    &\qquad+ L_{*}\sqrt{\sum\limits_{k=\tau_t}^{t}E\Vert z_k\Vert^2}\sqrt{\sum\limits_{k=\tau_t}^{t}E[\frac{\alpha_k^2}{\beta_k^2}M(\theta_k,v_k)^2]} \\
    &\qquad +
     L_{*}B\sqrt{\sum\limits_{k=\tau_t}^{t}E\Vert z_k\Vert^2}\sqrt{\sum\limits_{k=\tau_t}^{t}E[\frac{\alpha_k^2}{\beta_k^2}(L(\theta_k)- L_k)^2]}\\
     &\qquad+ \mathcal{O}(t^{1-\sigma}) .
\end{align*}

After applying the squaring technique, we have,

\begin{align*}
    \sum_{k=\tau_t}^{t}E\Vert z_k \Vert^2 
    &= \mathcal{O}(t^{\sigma}) + \mathcal{O}(\log^2 t \cdot t^{1-\nu}) + \mathcal{O}(\log^2 t \cdot t^{1 + \sigma - 2\nu})\\
    &\qquad + \mathcal{O}(\sum\limits_{k=\tau_t}^{t}E[\frac{\alpha_k^2}{\beta_k^2}M(\theta_k,v_k)^2]) + \mathcal{O}(\sum\limits_{k=\tau_t}^{t}E[\frac{\alpha_k^2}{\beta_k^2}(L(\theta_k)- L_k)^2])\\
    & = \mathcal{O}(t^{\sigma}) + \mathcal{O}(\log^2 t \cdot t^{1-\nu}) + \mathcal{O}(\log^2 t \cdot t^{1 + \sigma - 2\nu}) + \mathcal{O}(t^{2\sigma - \nu}) \\
    &\qquad +\mathcal{O}(\log^2 t \cdot t^{1-3\nu + 2\sigma})\\
    &= \mathcal{O}(\log^2 t \cdot t^{1 + \sigma - 2\nu}) + \mathcal{O}(t^{2\sigma - \nu}) +\mathcal{O}(\log^2 t \cdot t^{1-3\nu + 2\sigma}).
\end{align*}

The second equality above comes from the result of Theorems \ref{average_reward_convergence} and \ref{actor_convergence}.
Assuming $t \geq 2\tau_{t} - 1$, we have,

\begin{align*}
    \frac{1}{1+t-\tau_t}\sum_{k=\tau_t}^{t}E\Vert z_k \Vert^2 &= \mathcal{O}(\log^2 t \cdot t^{ \sigma - 2\nu}) + \mathcal{O}(t^{2\sigma - \nu - 1}) +\mathcal{O}(\log^2 t \cdot t^{-3\nu + 2\sigma}).
\end{align*}

So, we can observe that $E\Vert z_t\Vert^2 \rightarrow 0$ as $t \rightarrow \infty$, if the following conditions are satisfied:
\begin{align*}
    2\sigma - \nu &< 1, \\
    2\sigma &< 3 \nu.
\end{align*}

Hence $(v_t - v^{*}(\theta_t)) \rightarrow 0$ as $t \rightarrow \infty$.

Now as actor is on the faster timescale compared to critic , we can say that when observed from the timescale of critic, $\theta_t$ closely tracks $\theta(v_t)$.Therefore we can conclude that $v_t$ converges to a point w such that $w - v^{*}(\theta(w)) = 0$.\\

Optimising over the values of $\nu$ and $\sigma$ we have $\nu = 0.5$ and $\sigma = 0.5 + \beta$ where $\beta >0$ can be made arbitrarily close to zero. Hence we have  the following :-
\begin{align*}
    \frac{1}{1+t-\tau_t}\sum_{k=\tau_t}^{t}E\Vert z_k \Vert^2 &= \mathcal{O}(\log^2 t \cdot t^{(2\beta - 0.5)})
\end{align*}

Now, 
\begin{align*}
    2\beta & >0 \\
    \Rightarrow 2\beta-0.5 & > -0.5\\
    \Rightarrow \frac{1}{2\beta-0.5} & < -2
\end{align*}
So we can write $\frac{1}{2\beta-0.5} = -2-\delta $ where $\delta >0$ is arbitrarily close to zero as $\beta >0$ is made arbitrarily close to zero.

Therefore in order for the mean squared error of the critic to be upper bounded by $\epsilon$, namely,

\begin{align*}
     \frac{1}{1+t-\tau_t}\sum_{k=\tau_t}^{t}E\Vert z_k \Vert^2  =  \mathcal{O}(\log^2 T \cdot T^{(2\beta - 0.5)}) \leq \epsilon,
\end{align*}
we need to set $T = \tilde{\mathcal{O}}(\epsilon^{-(2+\delta)})$ where $\delta >0$ can be made arbitrarily close to zero.\\

\begin{remark}
    It is important to note here that unlike \cite{wu2022finite}, none of our results have terms corresponding to function approximation errors as we do not encounter the term $\Delta h^{'}(O , \theta)$ in our analysis. For the definition of $\Delta h^{'}(O , \theta)$ please refer the proof of Theorem 4.5 of \cite{wu2022finite}.
\end{remark}

\subsection{Asymptotic Analysis}
\label{asympanalysis}



We now analyse Algorithm \ref{algo} which represents the two-timescale CA algorithm involving linear function approximation for its asymptotic convergence. 
To begin with, we first present the asymptotic analysis of (almost sure) convergence by using two projection operators $\Gamma_1$ and $\Gamma_2$ respectively. Let $C$ and $D$ be compact subsets of $\mathbb{R}^{d_1}$ and $\mathbb{R}^{d_2}$, respectively. Then,
$\Gamma_1: \mathbb{R}^{d_1}\rightarrow C$ and $\Gamma_2: \mathbb{R}^d\rightarrow D$ are the two projection operators. These operators ensure that the algorithm remains stable throughout the run. 

Subsequently, in Section~\ref{unp-critic}, we remove the projection operator $\Gamma_1$ and consider an un-projected critic that can take values in the whole of $\mathbb{R}^{d_1}$. We also prove the stability of the recursion in this case in addition to asymptotic convergence. In other words, we show that $\sup_{t} \|v_t\| <\infty$ w.p.1. Such results on asymptotic stability and almost sure convergence are not available in many non-asymptotic (finite-time) analyses of algorithms in the literature. The resulting scheme that we analyse is then similar to AC algorithms in the literature, where one projects the actor but not the critic, see for instance, \cite{BHATNAGAR20092471}, except that now the time scales of the recursions are reversed. 

We prove the stability and convergence of our two-timescale CA algorithm by proving that the critic recursion asymptotically tracks a compact connected internally chain transitive invariant set of an associated differential inclusion (DI) \citep{aubin, benaim-inclusions}. A DI-based analysis is a generalization of the ordinary differential equation (ODE) approach to stochastic approximation recursions and is necessitated because we allow for multiple local maxima for the actor-recursion for any given critic update that result in an underlying DI instead of an ODE. In the context of AC or CA algorithms, ours is the first analysis that incorporates this level of sophistication and generality. 
 
As noted in Assumption~\ref{assum:ss}, all step-sizes satisfy the standard Robbins-Monro conditions. In addition, $\beta_t = o(\alpha_t)$ for $t \geq 0$  and $\gamma_t = K\alpha_t$ for some $K > 0$, $t \geq 0$. As a result of this, the average reward and actor updates are performed on the faster timescale compared to the critic updates. 

\subsubsection{The Case of Projected Critic}
\label{proj-critic}

To begin with, we consider the case where the critic recursion is projected using 
a projection operator $\Gamma_1(\cdot)$ to a compact and convex set $C\subset \mathbb{R}^{d_1}$. Thus, for any $x\in \mathbb{R}^{d_1}$, $\Gamma_1(x) \in C$. For any vector $y \in C$, we have $\Vert y \Vert \leq U_{\theta}$,  where $U_{\theta} > 0$ is a constant. As mentioned earlier, the single-stage reward is a function of the current state and action taken.

\subsubsection*{Asymptotic Convergence of average reward estimate and the actor}
\label{avg_reward_convergence}

We begin here with the case where both critic and actor recursions are projected to certain compact and convex sets.
In this case, we rewrite the critic recursion as follows:
\begin{eqnarray}
\label{v1} 
v_{t+1} &=& \Gamma_1(v_{t} + \beta_{t}  \delta_t \phi(s_t)),\\
\label{v2}
 &=& \Gamma_1(v_{t} + \alpha_t \left(\frac{\beta_{t}}{\alpha_t}\right)  \delta_t \phi(s_t)),
 \end{eqnarray}
where 
\begin{equation}
\label{del}
\delta_t = r_t - L_t + \phi(s_{t+1})^{\top} v_{t} - \phi(s_t)^{\top} v_{t}.\end{equation}

Observe now that since $\beta_t = o(\alpha_t)$, the above asymptotically tracks the ordinary differential equation (ODE)
\[
\dot{v}(t) =0,
\]
which means we may let $v(t)\equiv v$ (i.e., independent of $t$).
Now consider the average reward recursion, viz.,
\begin{equation}
\label{Leq}
L_{t+1} = L_t + \gamma_t(r_t - L_t),
\end{equation}
where $\gamma_t = K\alpha_t$. For simplicity, let $K=1$. In other words, the timescales of recursions governed by $\gamma_t$ and $\alpha_t$ are the same and faster while the recursion governed by $\beta_t$ is on the slower timescale.
Recall also the (faster) $\theta$-update rule here:

\begin{equation}
\label{theta}
\theta_{t+1} = \Gamma_2(\theta_{t} + \alpha_t \delta_t \psi_{s_ta_t}),    
\end{equation}
where $\psi_{s_ta_t} = \nabla_{\theta} \log \pi_{\theta_{t}}(a_t|s_t)$.

Consider now the following system of ODEs associated with all the above recursions:
\begin{eqnarray}
\label{o1}
\dot{v} &=&0,\\
\label{o2}
\dot{L} &=& -L + \sum_{s}\mu_{\theta}(s)\sum_a \pi^{\theta}(a|s)R(s,a), \\
\label{o3}
\dot{\theta} &=& \hat{\Gamma}_2\left(E[\delta_t \nabla_\theta\log \pi^\theta(a_t|s_t)|\theta]\right),
\end{eqnarray}
where the operator $\hat{\Gamma}_2(\cdot)$ is defined as 
\begin{align*}
    \hat{\Gamma}_2(v(y)) = \lim\limits_{0 < \eta \rightarrow \infty}\bigg( \frac{\Gamma_2(y + \eta v(y)) - y}{\eta}\bigg).
\end{align*}
Here, as before, $\mu_{\theta}(s), s\in S$ denotes the stationary distribution of the state-valued Markov chain $\{s_t\}$ when actions in the MDP are chosen as per the parameterized policy $\pi^\theta$. 

The system of ODEs (\ref{o1})-(\ref{o3}) has the following set of equilibria:
\[\{(v,L^{\theta},\theta)| \theta\in \theta^*(v), v\in \mathbb{R}^{d_1}\},\]
where for any $v\in \mathbb{R}^{d_1}$, $\theta\in \theta^*(v)$,
\begin{equation}
\label{Ltheta}
L^{\theta} = \sum_{s} \mu_{\theta}(s)\sum_a \pi^{\theta}(s,a) R(s,a),
\end{equation}
is the unique globally asymptotically stable equilibrium of the ODE
(\ref{o2}). Note here that for any $v\in \mathbb{R}^{d_1}$, $\theta^*(v)$ is the set of asymptotically stable equilibria of the ODE (\ref{o3}). 
Here $R(s,a)$ is the expected single-stage reward when state is $s$ and action $a$ is chosen. Thus, $R(s,a) = E[r_t|s_t=s,a_t=a]$. 

By Assumption~\ref{assum:policy-lipschitz-bounded}, $\pi^\theta$ is continuously differentiable. Let
\[
\check{p}_\theta(s,s') \stackrel{\triangle}{=} \sum_{a\in A(s)} \pi^\theta(s,a)p(s,a,s').
\]
Then, from Assumption~\ref{assum:policy-lipschitz-bounded}, $\check{p}_\theta(s,s')$ are continuously differentiable (in $\theta$) transition probabilities of the resulting Markov chain, $\forall s,s'\in S$. We have the following preliminary result:

\begin{lemma}
\label{lem0}
Under Assumptions~\ref{assum:policy-lipschitz-bounded} and \ref{assum-ergodic-chain}, 
$L^\theta$, $\theta\in C$ is continuously differentiable in $\theta$.
\end{lemma}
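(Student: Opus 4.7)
The plan is to obtain continuous differentiability of $L^\theta$ by writing it as a finite sum of products of $C^1$ functions of $\theta$, and then appealing to the already-established continuous differentiability of the stationary distribution $\mu_\theta$ together with the smoothness of $\pi^\theta$ guaranteed by Assumption~\ref{assum:policy-lipschitz-bounded}.

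First, I would observe that Assumption~\ref{assum:policy-lipschitz-bounded}(a) and (b) together imply that $\nabla_\theta \log \pi^\theta(a|s)$ exists and is Lipschitz continuous in $\theta$ for every $s\in S$ and $a\in A$. Since $\nabla_\theta \pi^\theta(a|s) = \pi^\theta(a|s)\,\nabla_\theta \log \pi^\theta(a|s)$, and $\pi^\theta(a|s)\in[0,1]$, the map $\theta \mapsto \pi^\theta(a|s)$ is continuously differentiable on $\mathbb{R}^{d}$ (hence on $C$).

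Next, Theorem~\ref{thm-stationary} (whose hypotheses are exactly Assumptions~\ref{assum:policy-lipschitz-bounded} and \ref{assum-ergodic-chain}) gives that for each $s\in S$ the stationary distribution $\mu_\theta(s)$ is continuously differentiable in $\theta$. Since $S$ and $A$ are finite and $R(s,a)$ does not depend on $\theta$, we may rewrite
\begin{equation*}
L^\theta \;=\; \sum_{s\in S}\sum_{a\in A} R(s,a)\,\pi^\theta(a|s)\,\mu_\theta(s)
\end{equation*}
as a finite linear combination of products of two $C^1$ functions of $\theta$. By the product and sum rules, $L^\theta$ is itself $C^1$ on $C$, with gradient
\begin{equation*}
\nabla_\theta L^\theta \;=\; \sum_{s,a} R(s,a)\bigl[\,\pi^\theta(a|s)\,\nabla_\theta \mu_\theta(s) \;+\; \mu_\theta(s)\,\pi^\theta(a|s)\,\nabla_\theta \log \pi^\theta(a|s)\,\bigr],
\end{equation*}
which is continuous in $\theta$ since each factor is continuous.

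I do not anticipate any real obstacle here: the only non-trivial ingredient is the continuous differentiability of $\mu_\theta$, and that has already been obtained in Theorem~\ref{thm-stationary} via Schweitzer's formula $\nabla \mu_\theta = \mu_\theta \nabla P(\theta) Z(\theta)$. Everything else is elementary finite-dimensional calculus. The only point to be careful about is to cite exactly the parts of Assumption~\ref{assum:policy-lipschitz-bounded} that give $C^1$ regularity of $\pi^\theta$ (rather than merely Lipschitz continuity from part (c), which by itself would not suffice).
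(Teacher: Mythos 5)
Your proposal is correct and follows essentially the same route as the paper: invoke Theorem~\ref{thm-stationary} (via Schweitzer's formula) for the continuous differentiability of $\mu_\theta$, use Assumption~\ref{assum:policy-lipschitz-bounded} for $C^1$ regularity of $\pi^\theta$, and conclude by finiteness of $S$ and $A$ together with the product and sum rules. The only difference is that you write out the gradient of $L^\theta$ explicitly where the paper simply says the conclusion is "easy to see," which is a harmless (and arguably helpful) elaboration.
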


\begin{proof}
By Theorem~\ref{thm-stationary}, $\nu_\theta$ is continuously differentiable in $\theta$. Recall that Assumption~\ref{assum:policy-lipschitz-bounded} has been used in that proof. 
Now since the set of states and actions is finite, from the definition of $L^\theta$ in (\ref{Ltheta}), it is easy to see that $L^\theta$ is continuously differentiable as well.
\end{proof}

Notice that the equilibrium of (\ref{o3}) depends on $v$ since $\delta_t$ depends on $v$. Also, in general, for any given $v$, $\theta^*(v)$ will not be a unique point. In fact, this is the set of local minima of the associated performance objective under function approximation. 

For $\theta\in\theta^*(v)$, let 
\[
\delta^{\theta} = r_t - L^{\theta} + \phi(s_{t+1})^Tv - \phi(s_t)^T v.
\]
\begin{lemma}
\label{lem1}
Under Assumptions~\ref{assum:policy-lipschitz-bounded} and \ref{assum-ergodic-chain}, for $\theta\in\theta^*(v)$, we have 
\[E[\delta^{\theta}\psi_{s_ta_t}|\theta] = \nabla_\theta L^{\theta} + \sum_s \mu_{\theta}(s) \nabla \bar{V}^{\pi^{\theta}}(s),
\]  
\end{lemma}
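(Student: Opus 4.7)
The plan is to reduce the left-hand side to a stationary-distribution average, compare it to the average-reward policy gradient theorem, and show the residual collapses to a gradient of the one-step Bellman image of the approximated value function.

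First I would use stationarity of $\{s_t\}$ under $\pi^\theta$ (Assumption~\ref{assum-ergodic-chain}) together with the definition of $\delta^\theta$ and the tower property to expand
\[
E[\delta^\theta\psi_{s_ta_t}|\theta] \;=\; \sum_{s,a}\mu_\theta(s)\pi^\theta(a|s)\psi_{sa}\Bigl[R(s,a) - L^\theta + \sum_{s'}p(s,a,s')\phi(s')^\top v - \phi(s)^\top v\Bigr].
\]
Then I would invoke the average-reward policy gradient theorem (Sutton et al.\ 1999), which in its differential form reads
\[
\nabla_\theta L^\theta \;=\; \sum_{s,a}\mu_\theta(s)\pi^\theta(a|s)\psi_{sa}\Bigl[R(s,a) - L^\theta + \sum_{s'}p(s,a,s')V^\theta(s') - V^\theta(s)\Bigr],
\]
where the baseline $-V^\theta(s)$ is admissible by the score-function identity $\sum_a\pi^\theta(a|s)\psi_{sa}=0$. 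Subtracting these two, the reward and average-reward terms cancel and the score-function identity eliminates the outer baseline $\phi(s)^\top v - V^\theta(s)$, leaving
\[
E[\delta^\theta\psi_{s_ta_t}|\theta] - \nabla_\theta L^\theta \;=\; \sum_s\mu_\theta(s)\sum_a\pi^\theta(a|s)\psi_{sa}\sum_{s'}p(s,a,s')\bigl[\phi(s')^\top v - V^\theta(s')\bigr].
\]

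The final step is to match this residual to $\sum_s\mu_\theta(s)\nabla \bar V^{\pi^\theta}(s)$ by identifying $\bar V^{\pi^\theta}(s)$ with the one-step Bellman image of $\phi(\cdot)^\top v$ under $\pi^\theta$, namely $\bar V^{\pi^\theta}(s) = \sum_a\pi^\theta(a|s)[R(s,a) - L^\theta + \sum_{s'}p(s,a,s')\phi(s')^\top v]$. Differentiating this definition in $\theta$ produces a score-function term $\sum_a \pi^\theta(a|s)\psi_{sa}[R(s,a)-L^\theta+\sum_{s'}p(s,a,s')\phi(s')^\top v]$ plus a $-\nabla L^\theta$ piece from the $L^\theta$ inside the bracket. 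Combining with the Bellman identity $V^\theta(s) = \sum_a\pi^\theta(a|s)[R(s,a) - L^\theta + \sum_{s'}p(s,a,s')V^\theta(s')]$ to eliminate $R(s,a)-L^\theta$ in favour of $V^\theta(s) - \sum_{s'}p(s,a,s')V^\theta(s')$, and using the score-function identity once more to drop the $V^\theta(s)$ baseline, recovers exactly the residual above.

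The main obstacle will be this last identification: tracking the spurious $-\nabla L^\theta$ term that arises when differentiating $\bar V^{\pi^\theta}$ and verifying it either cancels against the corresponding piece produced by differentiating the Bellman identity for $V^\theta$, or vanishes under the stationarity weighting $\sum_s\mu_\theta(s)=1$. This is exactly the bookkeeping device employed in Lemma~4 of \citep{BHATNAGAR20092471} for two-timescale AC; since the calculation depends only on the structure of the policy-gradient theorem, the score-function identity, and the Bellman equation, and not on the order of the actor/critic timescales, it transfers verbatim to the CA setting of the present paper.
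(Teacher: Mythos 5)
Your proposal is correct, but it takes a genuinely different route from the paper. The paper's proof never invokes the policy gradient theorem: it simply differentiates the definition of $\bar{V}^{\pi^{\theta}}(s)$ in $\theta$ (product rule gives a score-function term plus a $-\nabla_\theta L^{\theta}$ term from the $L^{\theta}$ inside the bracket), sums against $\mu_{\theta}$, uses $\sum_s\mu_{\theta}(s)\sum_a\pi^{\theta}(s,a)=1$ to pull out $-\nabla_\theta L^{\theta}$, and recognizes the remaining score-weighted sum as $E[\delta^{\theta}\psi_{s_ta_t}\mid\theta]$ (adding back the $-v^T\phi(s)$ baseline for free via $\sum_a\nabla\pi^{\theta}(s,a)=0$); rearranging finishes it in three lines. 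You instead take the policy gradient theorem as an input, compute the residual $E[\delta^{\theta}\psi\mid\theta]-\nabla_\theta L^{\theta}$ as a single score-weighted sum of $\sum_{s'}p(s,a,s')[v^T\phi(s')-V^{\theta}(s')]$, and then verify that this equals $\sum_s\mu_{\theta}(s)\nabla\bar{V}^{\pi^{\theta}}(s)$. The ``main obstacle'' you flag at the end is real but already resolved by your own step 2: the spurious $-\nabla_\theta L^{\theta}$ from differentiating $\bar{V}^{\pi^{\theta}}$ cancels against the $+\nabla_\theta L^{\theta}$ that the policy gradient theorem assigns to the score-weighted sum of $R(s,a)-L^{\theta}+\sum_{s'}p(s,a,s')V^{\theta}(s')$ (note that the pointwise substitution of $R(s,a)-L^{\theta}$ by $V^{\theta}(s)-\sum_{s'}p(s,a,s')V^{\theta}(s')$ you describe is only valid after averaging over $a\sim\pi^{\theta}$, i.e., it is exactly the PG identity, not the Bellman equation applied per action). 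The paper's argument is shorter and self-contained; yours makes the interpretation of the lemma transparent, namely that the bias term is precisely the score-weighted one-step approximation error $\phi(s')^\top v - V^{\theta}(s')$, which is the content of Remark~\ref{rem1}.
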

where
\[
\bar{V}^{\pi^{\theta}}(s)
= \sum_{a} \pi^{\theta}(s,a) \left[R(s,a) - L^{\pi^{\theta}} + \sum_{s'} p(s,a,s') v^T\phi(s')
\right].
\]
\begin{proof}
Note that for $\theta\in\theta^*(v)$, 
\[
\nabla_\theta \bar{V}^{\pi^{\theta}}(s)
= \sum_{a} \pi^{\theta}(s,a)( -\nabla_\theta L^{\pi^{\theta}} )\]\[
+
\sum_a \nabla_\theta \pi^{\theta}(s,a) \left(
R(s,a) - L^{\pi^{\theta}} + \sum_{s'} p(s,a,s') v^T \phi(s') \right).
\]
Hence,
\[
\sum_s \mu_{\theta}(s) \nabla_\theta \bar{V}^{\pi^{\theta}}(s)
=
- \sum_s \mu_{\theta}(s) \sum_{a} \pi^{\theta}(s,a)\nabla_\theta L^{\pi^{\theta}}
\]
\[
+ \sum_s \mu_{\theta}(s) \sum_a \nabla_\theta \pi^{\theta}(s,a) \left(
R(s,a) - L^{\pi^{\theta}} + \sum_{s'} p(s,a,s') v^T \phi(s') \right).
\]
\[
= -\nabla_\theta L^{\pi^{\theta}} + E[\delta^{\theta}\psi_{s_ta_t}|\theta]. 
\]
The claim now follows by rearranging the terms above.    
\end{proof}

\begin{remark}
\label{rem1}
Lemma~\ref{lem1} is similar to Lemma 4 of \citep{BHATNAGAR20092471}. The main difference between these two results is that since in \citep{BHATNAGAR20092471}, the critic is faster than the actor, the actor parameter $\theta$ is held fixed while analysing the critic recursion. As a result the critic parameter $v$ there depends on $\theta$ unlike here where the critic parameter $v$ is held fixed. As a result, there is an additional term that appears on the RHS of Lemma 4 of \citep{BHATNAGAR20092471} contributing to the bias in the estimator that is not present in Lemma~\ref{lem1}. 
Note also that Lemma~\ref{lem1} tells us that for $\theta\in\theta^*(v)$, $\delta^{\theta}\psi_{s_ta_t}$ is a biased gradient estimator of $L^{\theta}$ with $\sum_s \mu_{\theta}(s) \nabla \bar{V}^{\pi^{\theta}}(s)$ as the bias term.
\end{remark}

We now proceed with the analysis of the faster recursions.

\begin{proposition}
\label{lem2}
Under Assumptions \ref{assum:policy-lipschitz-bounded}, \ref{assum-ergodic-chain} and \ref{assum:ss}, for any given $v\in\mathbb{R}^{d_1}$, and $L_t$, $\theta_t$ updated as in (\ref{Leq})-(\ref{theta}), respectively, we have $L_t\rightarrow L^{\theta}$ and $\theta_t\rightarrow\theta$, $\theta\in\theta^*(v)$, where
\[
L^{\theta} = \sum_{s} \mu_{\theta}(s) \sum_a \pi^{\theta}(s,a) R(s,a),
\]
and $\theta^*(v)$ is the set of points $\theta$ where
$\hat{\Gamma}(E[\delta^{\theta}\psi_{s_ta_t}|\theta])=0$.
\end{proposition}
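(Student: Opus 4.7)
The plan is to apply the standard two-timescale stochastic approximation framework together with the Markov-noise analysis of Borkar (Chapter 6 of \citep{borkar-SA}). Since $\beta_t = o(\alpha_t)$ by Assumption~\ref{assum:ss}, from the viewpoint of the faster pair of recursions for $L_t$ and $\theta_t$, the critic parameter $v_t$ appears quasi-static and may be held fixed at an arbitrary $v \in C$; this is precisely the reduction that justifies studying (\ref{Leq}) and (\ref{theta}) in isolation with $v_t \equiv v$. Because $\gamma_t = K\alpha_t$, the $L_t$ and $\theta_t$ iterations live on a common timescale $\{\alpha_t\}$ and must be treated as a single coupled recursion.

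First I would rewrite each of (\ref{Leq}) and (\ref{theta}) in the standard SA form
\[
x_{t+1} = \Gamma(x_t + \alpha_t(h(x_t) + M_{t+1} + \kappa_{t+1})),
\]
where $h(\cdot)$ is the expectation of the increment under the stationary distribution $\mu_{\theta_t}$, $M_{t+1}$ is a martingale-difference sequence (with respect to the natural filtration), and $\kappa_{t+1}$ is a Markov-noise bias term. Using uniform boundedness of $r_t$, $\phi(\cdot)$, $v \in C$, and $\psi_{s_ta_t}$ (from Assumption~\ref{assum:policy-lipschitz-bounded}(a)), the martingale differences have uniformly bounded second moments; combined with $\sum_t \alpha_t^2 < \infty$, the martingale noise is almost surely asymptotically negligible. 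For the bias term $\kappa_{t+1}$, I would invoke Assumption~\ref{assum-ergodic-chain}: for each fixed $\theta$ the state chain is ergodic with stationary distribution $\mu_\theta$, and the continuity of $\pi^\theta$ in $\theta$ (Assumption~\ref{assum:policy-lipschitz-bounded}) guarantees that one can solve the Poisson equation whose solution is Lipschitz in $\theta$. The standard averaging argument then forces $\kappa_{t+1} \to 0$ almost surely.

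With noise handled, the joint iterate $(L_t, \theta_t)$ asymptotically tracks the ODE system (\ref{o2})--(\ref{o3}) in the sense of Theorem~2 on p.~74 of \citep{borkar-SA}. Observe that (\ref{o2}) is linear in $L$ for each fixed $\theta$ with a unique globally asymptotically stable equilibrium $L^\theta$; by Lemma~\ref{lem0}, $L^\theta$ is continuously differentiable in $\theta$. A standard singular-perturbation style argument (using the Lyapunov function $(L - L^\theta)^2$) shows $L_t - L^{\theta_t}\to 0$ almost surely. Substituting $L_t \approx L^{\theta_t}$ back into the $\theta$-recursion reduces the $\theta$-equation to the projected flow (\ref{o3}) whose equilibrium set is precisely $\theta^*(v) = \{\theta : \hat\Gamma_2(E[\delta^\theta\psi_{s_ta_t}\mid \theta])=0\}$. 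Applying the Hirsch/Kushner--Clark theorem for projected ODEs then yields $\theta_t \to \theta$ for some $\theta$ in the internally chain-transitive invariant set contained in $\theta^*(v)$.

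The main obstacle, and the step demanding care, is the Markov-noise analysis: the transition kernel of $\{s_t, a_t\}$ depends on the iterate $\theta_t$, so ordinary i.i.d.\ reasoning does not apply. The key observation is that $\theta_t$ changes by $O(\alpha_t) \to 0$ per step while the chain mixes at a geometric rate uniformly in $\theta \in D$ (by Assumption~\ref{assum-ergodic-chain} combined with compactness of $D$ and continuity in $\theta$ of the transition probabilities). This allows one to decouple the time-averaging of the Markov noise from the slow drift of $\theta_t$ and to conclude $\kappa_{t+1} \to 0$. A secondary subtlety is that $\theta^*(v)$ need not be a singleton; convergence to a specific point $\theta \in \theta^*(v)$ (rather than to the whole set) follows either from the isolated-equilibrium hypothesis in the spirit of Remark~\ref{isolated-max} or, more generally, from the fact that the limit set of a trajectory is always a connected subset of $\theta^*(v)$.
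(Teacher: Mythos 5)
Your proposal is correct and follows essentially the same route as the paper: hold $v$ fixed by the two-timescale (quasi-static) argument, treat $L_t$ and $\theta_t$ as coupled recursions on the common $\{\alpha_t\}$ timescale tracking the ODE system (\ref{o2})--(\ref{o3}), kill the martingale noise via square-summability and the martingale convergence theorem, average out the Markov noise, and invoke the Kushner--Clark theorem for the projected $\theta$-ODE to land in $\theta^*(v)$. The only (immaterial) differences are that the paper establishes boundedness of $L_t$ via the Borkar--Meyn scaling criterion rather than your Lyapunov argument, and dispatches the Markov-noise bias by a direct appeal to ergodicity rather than your Poisson-equation decomposition.
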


\begin{proof}
As described previously, the ODEs associated with (\ref{Leq})-(\ref{theta}) are (\ref{o2})-(\ref{o3}).
Consider (\ref{o2}) first and let 
\[
f(L) = -L + \sum_{s}\mu_{\theta}(s)\sum_a \pi^{\theta}(a|s)R(s,a),
\]
i.e., the RHS of (\ref{o2}). For any integer $c>0$, let
$f_c(L) \stackrel{\triangle}{=} f(cL)/c$. Since $\max_{s,a} |R(s,a)|<\infty$ as single-stage rewards are finite, $f_c(L)\rightarrow f_\infty(L) = -L$ as $c\rightarrow\infty$. The ODE 
\[
\dot{L} = f_\infty(L) = -L
\]
is clearly globally asymptotically stable to the origin. Thus, from Chapter 6 of \citep{borkar-SA}, it follows that $\sup_t|L_t|<\infty$ and that $L_t\rightarrow L^{\theta}$ almost surely as $t\rightarrow\infty$. 

Recall now the $\theta$-update (\ref{theta}):
\[
\theta_{t+1} = \Gamma_2(\theta_{t} + \alpha_t \delta_t \psi_{s_ta_t}).
\]
We rewrite the same as
\[\theta_{t+1} = \Gamma_2(\theta_{t} + \alpha_t
E[\delta^{\theta^*(v)}\psi_{s_ta_t}|\mathcal{F}_1(t)] + 
\alpha_t \gamma_1(t) + \alpha_t \gamma_2(t)), \]
where 
$\gamma_1(t) = \delta_t\psi_{s_ta_t} - E[\delta_t\psi_{s_ta_t}|\mathcal{F}_1(t)]$ and (for $\theta\in\theta^*(v)$), 
$\gamma_2(t) = E[(\delta_t\psi_{s_ta_t} - \delta^{\theta}\psi_{s_ta_t})|\mathcal{F}_1(t)]$, respectively, and with $\mathcal{F}_1(t) = \sigma(\theta_k,s_k,a_k, k\leq t)$, $t\geq 0$. Observe that $(\gamma_1(t),\mathcal{F}_1(t)), t\geq 0$, is a martingale difference sequence and is uniformly bounded for any given $v\in \mathbb{R}^{d_1}$. This is because rewards are bounded and so is the $L_t$-sequence as argued above. Moreover, since state and action spaces are finite, we have that $\max_{s,a}|\psi_{sa}| \leq \check{K}<\infty$, for some constant $\check{K}<\infty$. 
If we now form the sequence
\[
Z(t) = \sum_{k=0}^{t-1}\alpha_k \gamma_1(k), \mbox{ } t\geq 0,
\]
with $Z(0)=0$,
then it is easy to see that $(Z(t),\mathcal{F}_1(t)),t\geq 0$, forms a martingale. Moreover, 
from the square summability of the step-size sequence $\{\alpha_k\}$ and the fact that $\gamma_1(t)$ is uniformly bounded (see above), it can be shown from a routine application of the martingale convergence theorem for square integrable martingales (cf.~\citep{borkar-probability}), that $\sup_t |Z(t)| <\infty$ with probability one, or in other words, the martingale converges almost surely. Now observe that $\gamma_2(t)\rightarrow 0$ as $t\rightarrow\infty$ a.s. by Assumption~\ref{assum:ergodicity}. 

Now (for $\theta\in\theta^*(v)$), let \[
e^{\pi^{\theta}} \stackrel{\triangle}{=} \sum_s \mu_{\theta}(s) \nabla \bar{V}^{\pi^{\theta}}(s). \]
Consider the ODE:
\begin{equation}
\label{ode2}
\dot{\theta} = \hat{\Gamma}_2\left(\nabla L^{\theta} + e^{\pi^{\theta}} \right).
\end{equation}
It follows from Theorem 5.3.1 of \citep{kushnerclark} that $\theta_t\rightarrow \theta\in \theta^*(v)$ almost surely as $t\rightarrow\infty$. The claim follows.
\end{proof}

\begin{remark}
\label{rem21}
The claim in Proposition~\ref{lem2} indicates that the recursion (\ref{theta}) converges to the stationary points of the ODE (\ref{ode2}). One can now argue as in \citep{BHATNAGAR20092471}, that the convergence is indeed to local maxima of the ODE (\ref{ode2}) which are also stationary points of the above ODE.  
\end{remark}


\subsubsection*{Asymptotic Convergence of the Critic}
\label{critic_convergence_asymptotic}

We now turn to the analysis of the slower (critic) recursion (\ref{v1}). 
Similar to the actor recursion, we define the following projection operator that will be used for the critic ODE, 
where the operator $\hat{\Gamma}_2(\cdot)$ is defined as 
\begin{align*}
    \hat{\Gamma}_1(w(x)) = \lim\limits_{0 < \beta \rightarrow \infty}\bigg( \frac{\Gamma_1(x + \beta w(x)) - x}{\beta}\bigg).
\end{align*} 

Let 
\[
Y_{t} = \delta_t \phi(s_t) - E[\delta_t\phi(s_t)|\mathcal{F}_2(t)], 
\]
where $\mathcal{F}_2(t) =\sigma(v_r,\theta_r,s_r,a_r,r\leq t), t\geq 0$, is a sequence of associated sigma fields. Then $(Y_t, \mathcal{F}_2(t)),t\geq 0$ is a martingale difference sequence. 
Consider now the associated process
\[
\check{Z}(t) = \sum_{m=0}^{t-1} \alpha_m Y_m,
\]
$t\geq 0$ with $\check{Z}(0)=0$. Then $(\check{Z}(t),\mathcal{F}_2(t)),t\geq 0$ is a martingale sequence. Now because of the fact that single-stage rewards are uniformly bounded, the set of states and actions is finite and hence the state features are uniformly bounded, the fact that $v_n\in C$, $\forall n$, a compact set, and $
\sum_m\alpha_m^2<\infty$, it follows from the martingale convergence theorem for square integrable martingales (cf.~Chapter 3 of \citep{borkar-probability}) that $\check{Z}(t),t\geq 0$ is an almost surely convergent martingale sequence. 

We rewrite (\ref{v1}) as 
\begin{equation}
\label{v1-re}
v_{t+1} = \Gamma_1(v_t +\beta_t (y_t + Y_t +\kappa_t)),
\end{equation}
where
\[
y_t = \sum_{s} \mu_{\theta_t}(s) \sum_a\pi^{\theta_t}(s,a)(R(s,a) - L^{\theta_t} + v_t^T \sum_{s'} p(s,a,s')\phi(s') - v_t^T \phi(s))\phi(s),
\]
\[
\kappa_t = E[(R(s_t,a_t)-L^{\theta_t}+ v_t^T \sum_{s_{t+1}} p(s_t,a_t,s_{t+1})\phi(s_{t+1}) - v_t^T \phi(s_t))\phi(s_t)|\mathcal{F}_2(t)] - y_t,
\]
respectively, with $\theta_t \in {\theta^*}(v)$ (cf.~Remark~\ref{rem2}). Now from Assumption~\ref{assum:ergodicity}, it follows that $\kappa_t\rightarrow 0$ almost surely as $t\rightarrow\infty$. Proceeding as in Chapter 6 of \citep{borkar-SA}, one can rewrite 
(\ref{v1-re}) as
\begin{equation}
\label{v1-re2}
v_{t+1} = v_t + \beta_t \left(\frac{\Gamma_1(v_t +\beta_t (y_t + Y_t +\kappa_t))-v_t}{\beta_t}\right)
\end{equation}
\begin{equation}
\label{v1-re3}
= v_t + \beta_t(\check{\gamma}_1(v_t; y_t+Y_t) + \xi_t,
\end{equation}
where
\[
\check{\gamma}_1(v;y) = \lim_{\eta\rightarrow 0} \left(\frac{\Gamma_1(v+\eta y)-v}{\eta}\right),
\]
is the directional derivative of $\Gamma_1$ at $v$ in the direction $y$. Note that in (\ref{v1-re3}) and in what follows, we drop $\kappa_t$ since as explained before, $\kappa_t\rightarrow 0$ a.s.~as $t\rightarrow\infty$. Thus, in the limiting system, this term will not appear. 
From the definition of $\check{\gamma}_1(v,y)$, note that if $v\in C^o$ (i.e., the interior of $C$), then for any $y\in \mathbb{R}^{d_1}$, $\Gamma_1(v+\eta y) = v+\eta y$ for $\eta>0$ `small enough'. Thus, $\check{\gamma}_1(v;y) = y$ if $v\in C^o$.

Suppose now
\[
z_t \stackrel{\triangle}{=} E[\check{\gamma}_1(v_t; y_t+Y_t)|\mathcal{F}_2(t)]
\]
and
\[
\check{Y}_t \stackrel{\triangle}{=}
\check{\gamma}_1(v_t; y_t+Y_t) - z_t.
\]
Thus, (\ref{v1-re3}) can be rewritten as
\begin{equation}
\label{v1-re4}
v_{t+1} = v_t +\beta_t (z_t + \check{Y}_t + \xi_t).
\end{equation}

Let $h(v)$ denote the set-valued map
\[
h(v) = \{ \sum_s \mu_\theta(s)\sum_a \pi^\theta(s,a) (R(s,a) - L^\theta + v^T \sum_{s'} p(s,a,s')\phi(s')\]
\[-v^T \phi(s))\phi(s) | \theta \in \bar{\theta^*}(v) \},
\]
where $\bar{\theta}^*(v)$ is the closure of the set $\theta^*(v)$. Now since $\mu_\theta(s)$, $\pi^\theta(s,a)$, $L^\theta$ are all continuous functions of $\theta$ and $\bar{\theta^*}(v)$ is a compact set for any $v$, it follows that $h(v)$ is a compact set.

Now from the definition of $Y_t$, it can be seen that
\[
Y_t \in \{ v_t^T (\phi(s_{t+1}) - \sum_{s',a_t}\pi^\theta(s_t,a_t)p(s_t,a_t,s')\phi(s'))\phi(s_t) | \theta \in \bar{\theta^*}(v)\}.
\]
Since ${\displaystyle \{\sum_{s',a_t}\pi^\theta(s_t,a_t)p(s_t,a_t,s')|\theta \in \bar{\theta^*}(v) \}}$ is compact for every $v$ and $s_t$ takes only finitely many values, the conditional distribution of $Y_t$ given $\mathcal{F}_2(t)$ has a compact support $\mathcal{B}(v_t)$. 

Let
\[
\hat{\Gamma}_v(h(v)) \stackrel{\triangle}{=} \cap_{\epsilon>0} \bar{co} \left( \cup_{\|w-v\|<\epsilon} \{\gamma_1(w; y+Y)| y\in h(v), Y\in \mathcal{B}(v)\}\right),
\]
where $\bar{co}(\cdot)$ is the closed convex hull of `$\cdot$'.
We now have the following result.
\begin{lemma}
\label{lem3}
We have
\begin{itemize}
\item[(i)] $\hat{\Gamma}_v(h(v))$ is a compact and convex set for any $v\in C$.
\item[(ii)] For all $v\in C$, 
\[
\sup_{w\in \hat{\Gamma}_v(h(v))} \|w\| \leq M(1+\|w\|),
\]
for some $M>0$. 
\item[(iii)] $\hat{\Gamma}_v(h(v))$ is upper semi-continuous, i.e., if $v_n\rightarrow v$ and $w_n\rightarrow w$ with $w_n \in
\hat{\Gamma}_{v_n}(h(v_n))$ for all $n$, then $w \in \hat{\Gamma}_v(h(v))$.    
\end{itemize}   
\end{lemma}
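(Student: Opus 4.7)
\textbf{Proof sketch for Lemma~\ref{lem3}.} The plan is to handle the three parts in the order (ii), (i), (iii), because the boundedness estimate in (ii) is the ingredient that upgrades closedness to compactness in (i), and upper semi-continuity in (iii) then reduces to a clean $\varepsilon$-neighbourhood argument once the other two parts are in place.

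For (ii), I would first observe that, for each $v\in C$, both sets $h(v)$ and $\mathcal{B}(v_t)|_{v_t=v}$ are uniformly bounded in $v$. Indeed, single-stage rewards satisfy $|r(s,a)|\le U_r$, the features satisfy $\|\phi(s)\|\le 1$ by Assumption~\ref{assum:bounded_feature_norm}, and $v$ ranges over the compact set $C$; $L^\theta$ is uniformly bounded for $\theta\in\bar\theta^*(v)\subset D$. A direct computation then gives a constant $M_0>0$ with $\|y\|\le M_0(1+\|v\|)$ for all $y\in h(v)$ and $\|Y\|\le M_0(1+\|v\|)$ for all $Y\in\mathcal{B}(v)$. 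Next, since $\Gamma_1$ is the projection onto the compact convex set $C$, it is nonexpansive, so its one-sided directional derivative satisfies $\|\check\gamma_1(w;z)\|\le\|z\|$ for every $w\in C$ and $z\in\mathbb{R}^{d_1}$. Combining these bounds yields $\|\check\gamma_1(w;y+Y)\|\le 2M_0(1+\|v\|)$, and this bound is preserved under $\bar{co}(\cdot)$ and under the intersection $\cap_{\varepsilon>0}$, establishing (ii) with (say) $M=2M_0$ (and showing that the $\|w\|$ on the right-hand side of the statement should read $\|v\|$; since $v\in C$ which is compact, both forms are equivalent up to a constant).

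For (i), convexity of $\hat\Gamma_v(h(v))$ is immediate because $\bar{co}$ already produces a convex set and the intersection of convex sets is convex. Closedness is immediate because each $\bar{co}(\cdot)$ set is closed and arbitrary intersections of closed sets are closed. Boundedness is exactly (ii). So (i) reduces to (ii) plus these elementary observations.

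For (iii), I would use the $\cap_{\varepsilon>0}$ structure of the definition. Fix $\varepsilon>0$ and define
\[
A_\varepsilon(v):=\bar{co}\Bigl(\,\bigcup_{\|w-v\|<\varepsilon}\{\check\gamma_1(w;y+Y)\mid y\in h(v),\,Y\in\mathcal{B}(v)\}\Bigr),
\]
so that $\hat\Gamma_v(h(v))=\cap_{\varepsilon>0}A_\varepsilon(v)$. For $v_n\to v$ and $w_n\in\hat\Gamma_{v_n}(h(v_n))$ with $w_n\to w$, take any $\varepsilon>0$. Continuity of $\mu_\theta$, $\pi^\theta$, $L^\theta$ in $\theta$ (Lemma~\ref{lem0}) together with upper semi-continuity of $\theta\mapsto\bar\theta^*(v)$ (the set-valued actor limit, inherited from Proposition~\ref{lem2}) implies that $h$ and $\mathcal{B}$ are upper semi-continuous set-valued maps. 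Hence for $n$ large enough, $h(v_n)\subset h(v)+\varepsilon B$ and $\mathcal{B}(v_n)\subset\mathcal{B}(v)+\varepsilon B$, where $B$ is the unit ball, and $\|v_n-v\|<\varepsilon/2$. Using nonexpansiveness of the projection derivative once more, one checks that $\hat\Gamma_{v_n}(h(v_n))\subset A_{c\varepsilon}(v)$ for some absolute constant $c$ and all large $n$. Thus $w_n\in A_{c\varepsilon}(v)$ eventually, and closedness of $A_{c\varepsilon}(v)$ gives $w\in A_{c\varepsilon}(v)$. Since $\varepsilon$ was arbitrary, $w\in\cap_{\varepsilon>0}A_{c\varepsilon}(v)=\hat\Gamma_v(h(v))$.

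The main obstacle is the upper semi-continuity step (iii): one has to track three simultaneous perturbations (the base point $v_n\to v$, the set $h(v_n)$, and the noise support $\mathcal{B}(v_n)$) and absorb each into a single enlarged neighbourhood of $v$. This is where Lemma~\ref{lem0} and the fact that $\bar\theta^*(v)$ is taken as a closed set are used crucially; everything else is book-keeping with $\bar{co}$ and the nonexpansiveness of $\Gamma_1$.
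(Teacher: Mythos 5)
Your proof follows essentially the same route as the paper's: convexity and closedness of $\hat{\Gamma}_v(h(v))$ are read off from the definition, boundedness gives compactness and the linear-growth bound, and upper semi-continuity is obtained by showing that $w_n$ eventually lies in the closed set $A_{c\varepsilon}(v)$ (the paper's $H(v,\tfrac{2\epsilon}{3})$) and then intersecting over $\varepsilon$. You are in fact more explicit than the paper in two places --- you derive the bound in (ii) from bounded rewards, bounded features and nonexpansiveness of $\Gamma_1$ rather than from bare compactness, and you justify the containment $\hat{\Gamma}_{v_n}(h(v_n))\subset A_{c\varepsilon}(v)$ via upper semi-continuity of $h$ and $\mathcal{B}$ (a step the paper asserts as ``easy to see''; note that the upper semi-continuity of $v\mapsto\bar{\theta^*}(v)$ you invoke there is itself not formally established in either argument) --- and your observation that the right-hand side in (ii) should read $\|v\|$ is correct.
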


\begin{proof}
\begin{itemize}
\item[(i)] By definition $\hat{\Gamma}_v(h(v))$ is convex and closed. It is also easily seen to be bounded and hence is compact.
\item[(ii)] Let 
\[
Q \stackrel{\triangle}{=} \sup_{v\in C}\sup_{w\in \hat{\Gamma}_v(h(v))} \|w\|
\]
Then since $C$ is compact and $\hat{\Gamma}_v(h(v))$ is compact as well from the foregoing, $0<Q<\infty$. The claim then follows by letting $M=Q$.
\item[(iii)] Let 
\[
g(v) \stackrel{\triangle}{=} \cup \{\gamma_1(w; y+Y)| y\in h(v), Y\in \mathcal{B}(v)\}.
\]
Then,
\[
\hat{\Gamma}_v(h(v)) = \cap_{\epsilon>0} \bar{co}(\{g(v)| \|w-v\| <\epsilon\}).
\]
Note now that the family of sets 
$H(v,\epsilon) \stackrel{\triangle}{=} \bar{co}(\{g(v)| \|w-v\| <\epsilon\})$ with $\epsilon>0$ is a family of diminishing sets in $\epsilon$ and $H(v,\epsilon)\downarrow \hat{\Gamma}_v(h(v))$
as $\epsilon \downarrow 0$. 

Let $w_n \in \hat{\Gamma}_{v_n}(h(v_n))$ for $n$ large enough so that $\|v_n-v\|<\epsilon/3$ for some $\epsilon>0$. It is easy to find such a $v_n$ since $v_n\rightarrow v$ as $n\rightarrow\infty$. Then,
$w_n \in H(v_n, \frac{\epsilon}{3})$. It is then easy to see that
$H(v_n, \frac{\epsilon}{3}) \subset H(v,\frac{2\epsilon}{3})$ for $n$ large. Thus, given $\epsilon$, there exists $N_0>0$ such that for all $n>N_0$, $w_n \in H(v,\frac{2\epsilon}{3})$. Further, $w_n\rightarrow w$ as $n\rightarrow\infty$. Thus, $w\in H(v,\frac{2\epsilon}{3})$ since it is a closed set. The claim now follows since $H(v,\frac{2\epsilon}{3})\downarrow \hat{\Gamma}_v(h(v))$ as $\epsilon\downarrow 0$ implying that $w\in \hat{\Gamma}_{v}(h(v))$.
    \end{itemize}
Thus, all three claims are verified.
\end{proof}

\begin{remark}
\label{rem3}
Any set-valued map $H:\mathcal{R}^l \rightarrow \{\mbox{ subsets of }\mathcal{R}^m$ and satisfying the three claims in Lemma~\ref{lem3} is called a Marchaud or Peano map, cf.~ \citep{aubin}. If such a map is used for the driving vector field of a differential inclusion, the trajectories of the inclusion are absolutely continuous functions. 
\end{remark}

Define now a sequence of time points $\{t(n)\}$, $n\geq 0$ in the following manner: $t(0)=0$ and for $n\geq 1$, ${\displaystyle t(n) = \sum_{\tau=0}^{n-1} \beta_\tau}$. We now define a process $\bar{v}(t)$ obtained from the iterate sequence $\{v_n\}$ as given by the recursion (\ref{v1}) as follows: $v(t(n))= v_n$, $\forall n$ and for $t\in [t(n),t(n+1)]$, $\bar{v}(t)$ is obtained as
\[
\bar{v}(t) = \left(\frac{t(n+1)-t}{\beta_n}\right) v_n + \left(\frac{t-t(n)}{\beta_n}\right) v_{n+1}.
\]
Let 
\begin{equation}
\label{Gv}
G= \cap_{t\geq 0} \overline{\{\bar{v}(t+s)|s\geq 0\}}.
\end{equation}
Consider now the differential inclusion (DI):
\begin{equation}
\label{di}
\dot{v}(t) \in \hat{\Gamma}_v(h(v(t))).
\end{equation}
Any solution to (\ref{di}) as mentioned in Remark~\ref{rem3}, is guaranteed to be absolutely continuous and satisfy (\ref{di}) almost everywhere. 

We recall Definition II of \citep{benaim-inclusions}. By a perturbed solution $\mathbf{y}:[0,\infty)\rightarrow \mathbb{R}^{d_1}$ to the DI (\ref{di}), we mean that $\mathbf{y}$ is absolutely continuous. In addition, there exists a locally integrable function $t\mapsto U(t)$ such that 
\[
\lim_{t\rightarrow\infty} \sup_{0\leq v\leq T} \| \int_{t}^{t+v} U(s) ds\|=0.
\]
Also, $\mathbf{y}$ and $U$ together satisfy
the DI
\[
\dot{\mathbf{y}}(t) -U(t) \in \hat{\Gamma}^{\delta(t)}_v(h(v(t))),
\]
for almost every $t>0$ and where $\delta(t)\rightarrow 0$ as $t\rightarrow\infty$. Here,
\[
\hat{\Gamma}^{\delta(t)}_v(h(v(t)))
= \{y\in\mathbb{R}^{d_1} | \exists z : \|z-x\|<\delta, \mbox{ } d(y, \hat{\Gamma}^{\delta(t)}_v(h(z))) <\delta\}.
\]

We have the following result on the convergence of (\ref{v1}):

\begin{theorem}
\label{thm1}
Under Assumptions \ref{assum:policy-lipschitz-bounded}, \ref{assum:negative-definite}, \ref{assum-ergodic-chain} and \ref{assum:ss}, the iterates $\{v_t, t\geq 0\}$ obtained according to (\ref{v1}) satisfy $v_n\rightarrow G$ almost surely, where $G$ is as in (\ref{Gv}). Further, $G$ is a closed connected internally chain recurrent invariant set of the DI (\ref{di}).
\end{theorem}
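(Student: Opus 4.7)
The plan is to combine the two-timescale stochastic approximation argument of Chapter 6 of \citep{borkar-SA} with the differential-inclusion convergence framework of \citep{benaim-inclusions}, using Lemma~\ref{lem3} to verify the Marchaud property of the driving set-valued map. Since the critic step-size $\beta_t$ is $o(\alpha_t)$, from the slower (critic) timescale the faster pair $(L_t,\theta_t)$ appears to have tracked its target: by Proposition~\ref{lem2}, for any frozen $v$ we have $L_t\to L^{\theta}$ and $\theta_t\to\theta^*(v)$. Hence, when analysing (\ref{v1}), I may treat $\theta_t$ as lying in (a small neighbourhood of) $\bar{\theta^*}(v_t)$.

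Next I rewrite (\ref{v1}) in the perturbed form (\ref{v1-re4}), namely
\[
v_{t+1} = v_t +\beta_t\bigl(z_t + \check{Y}_t + \xi_t\bigr),
\]
where $z_t = E[\check{\gamma}_1(v_t;y_t+Y_t)\mid\mathcal{F}_2(t)]\in \hat{\Gamma}_{v_t}(h(v_t))$ (using the definition of $h$ and the conditional averaging over the Markov noise together with the faster-timescale limit $\theta_t\in\bar{\theta^*}(v_t)$), $\check{Y}_t$ is a bounded martingale-difference sequence, and $\xi_t$ collects the Markov bias $\kappa_t$ (which vanishes a.s.\ by Assumption~\ref{assum-ergodic-chain} combined with the Poisson-equation argument used in the proof of Proposition~\ref{lem2}) and the discrepancy $\gamma_2(t)$ between the frozen- and running-$\theta$ expectations (which vanishes by the two-timescale separation). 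The boundedness of $\{v_t\}$ is automatic because the iterates lie in the compact projection set $C$, so $\sup_t\|v_t\|<\infty$ w.p.1. The martingale $\check{Z}(t)=\sum_{k=0}^{t-1}\beta_k \check{Y}_k$ is square-integrable with $\sum_k \beta_k^2<\infty$, hence it converges a.s., and standard Kushner--Clark-type arguments give
\[
\lim_{t\to\infty}\sup_{0\le u\le T}\Bigl\|\sum_{k: t(k)\in[t,t+u]} \beta_k (\check{Y}_k+\xi_k)\Bigr\|=0\quad\text{a.s.}
\]

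With Lemma~\ref{lem3} the map $v\mapsto\hat{\Gamma}_v(h(v))$ is Marchaud (compact convex values, linear growth, upper semi-continuity), so the DI (\ref{di}) admits absolutely continuous solutions in the sense of \citep{aubin}. Interpolating $\{v_t\}$ on the timescale $\{t(n)\}$ produces the continuous-time trajectory $\bar{v}(t)$, and the preceding decomposition shows that $\bar{v}(\cdot)$ is, asymptotically, a perturbed solution of (\ref{di}) in the sense of Definition~II of \citep{benaim-inclusions}. Theorem~3.6 and Proposition~3.27 of \citep{benaim-inclusions} then imply that the limit set $G$ defined in (\ref{Gv}) is a nonempty, compact, connected, internally chain recurrent invariant set for (\ref{di}), and that $v_t\to G$ a.s.

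The main obstacle, as anticipated, is the verification that $\bar{v}(\cdot)$ is genuinely a perturbed DI solution rather than an ODE solution: one must combine (i) the Markov-noise averaging (handled via Assumption~\ref{assum-ergodic-chain} and the solvability of the associated Poisson equation, giving $\kappa_t\to 0$), (ii) the two-timescale replacement of $\theta_t$ by an element of $\bar{\theta^*}(v_t)$ (which is a set, not a point, and is precisely why an inclusion, not an ODE, appears), and (iii) the projection-induced boundary term $\check{\gamma}_1(v;\cdot)$, which behaves smoothly on $C^{o}$ but must be handled via directional derivatives on $\partial C$. Lemma~\ref{lem3} is the key technical input that makes the Benaim--Hofbauer--Sorin machinery applicable despite these complications.
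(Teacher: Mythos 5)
Your proposal is correct and follows essentially the same route as the paper: boundedness via the projection onto $C$, the decomposition (\ref{v1-re})--(\ref{v1-re4}) with the martingale and Markov-noise terms vanishing asymptotically, Lemma~\ref{lem3} supplying the Marchaud property, and the interpolated trajectory $\bar{v}(\cdot)$ identified as a bounded perturbed solution of the DI (\ref{di}) so that the convergence to the internally chain recurrent invariant set $G$ follows from the results of \citep{benaim-inclusions}. The only differences are cosmetic (you cite a different numbered result of \citep{benaim-inclusions} for the final step, and you spell out explicitly the assembly that the paper compresses into a reference to its preceding setup).
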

\begin{proof}
Note again that as a consequence of the projection operator $\Gamma_1$, $\sup_n \|v_n\| <\infty$ w.p.1. From the foregoing, the process $\bar{v}(t)$ can be shown as in Proposition 1.3 of \citep{benaim-inclusions} to be a bounded perturbation of the DI. The claim now follows from Theorem 3.6 and Lemma 3.8 of \citep{benaim-inclusions}.    
\end{proof}

\begin{remark}
\label{rem2}
Recall that $\theta^*(v)$ is the set of stable equilibria of the ODE (\ref{ode2}. Consider now another associated ODE for the faster (actor) recursion: 
\begin{equation}
    \label{ode3}
    \dot{\theta} = \hat{\Gamma}_2(\nabla L^\theta).
\end{equation}
Let the set of asymptotically stable equilibria (i.e., the maxima of $L^\theta$) correspond to $\check{\theta}(v)$. 
It follows as a consequence of Proposition~\ref{lem2} and the continuity of $\nabla L^{\theta}$ and $e^{\pi^\theta}$ that given $\epsilon>0$, there exists $\delta>0$ such that if $\|e^{\pi^\theta}\|<\delta$, then $\theta_t \rightarrow \check{\theta}(v)^\epsilon \stackrel{\triangle}{=} \{\theta|\|\theta-\theta_0\|<\epsilon, \theta_0\in \check{\theta}(v)\}$.
\end{remark}

As a consequence of Remark~\ref{rem2}, we may let $h(v)$ be defined as
\[
h(v) = \{ \sum_s \mu_\theta(s)\sum_a \pi^\theta(s,a) (R(s,a) - L^\theta + v^T \sum_{s'} p(s,a,s')\phi(s')\]
\[-v^T \phi(s))\phi(s) | \theta \in \bar{\check{\theta}}(v)^\epsilon \},
\]
where $\bar{\check{\theta}}(v)^\epsilon$ is the closure of $\check{\theta}(v)^\epsilon$. 

Let $R^{\pi^\theta} = (\sum_{a} \pi^\theta(s,a)R(s,a), s\in S)^T$ be a column vector of the size of the state space. Also, recall (see Lemma~\ref{lem0}) that  $P_\theta$ denotes the transition probability matrix under policy $\pi^\theta$. For $J\in\mathbb{R}^{|S|}$, let $T_\theta:\mathbb{R}^{|S|} \rightarrow \mathbb{R}^{|S|}$ be the Bellman operator under policy $\pi^\theta$ defined according to 
\[
T_\theta(J) = R^{\pi^\theta} -L^\theta e + P_\theta J,
\]
where $e\in \mathbb{R}^{|S|}$ is the unit vector with all entries one. Let $\Phi$ denote the $|S|\times d_1$ feature matrix with $d_1$-dimensional rows $\phi(s)^T$, $s\in S$. Then, in vector-matrix notation, we have
\[
h(v) = \Phi^T D^\theta (T(\Phi v) - \Phi v).
\]
Notice that $h(v)$ is a set-valued map for any $v$ and $v_0$ will be an equilibrium for the DI (\ref{di}) if $0\in \hat{\Gamma}_v(h(v_0))$. 
Now as per Remark~\ref{rem2}, if $\|e^{\pi^\theta}\|<\delta$, then $\theta \in N^\epsilon(\bar{\check{\theta}}(v))$, an $\epsilon$-neighborhood of the set of local maxima of the function $L^\theta$. 
We now have the following useful result:
 
\begin{theorem}
\label{thm2}
Consider a solution $v(\cdot)$ to the differential inclusion (\ref{di}). Suppose $\lim_{t\rightarrow\infty} v(t)=\hat{v}$. Then $\hat{v}$ is an equilibrium of the DI (\ref{di}), i.e., $0\in \hat{\Gamma}_v(h(\hat{v})$. 
\end{theorem}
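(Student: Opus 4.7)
The plan is to argue by a limiting/upper-semicontinuity argument applied to the driving Marchaud map, rather than by contradiction. Since $v(t)\to\hat v$, one expects the ``velocity'' $\dot v(t)$ to accumulate near the set $\hat\Gamma_v(h(\hat v))$, and because $\dot v(t)$ must on average be small (as $v(t)$ converges), one should be able to place $0$ in this set.

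First, I would exploit Lemma~\ref{lem3}: the map $v\mapsto \hat\Gamma_v(h(v))$ is a Marchaud map, so it is compact-convex valued, has linear growth, and is upper semi-continuous (USC). The USC property can be restated as: for every $\delta>0$ there exists $\eta>0$ such that $\|v-\hat v\|<\eta$ implies
\[
\hat\Gamma_v(h(v))\;\subset\;N_\delta\bigl(\hat\Gamma_{\hat v}(h(\hat v))\bigr),
\]
where $N_\delta(A)=\{x:\mathrm{dist}(x,A)<\delta\}$. Since $\hat\Gamma_{\hat v}(h(\hat v))$ is compact and convex, the $\delta$-neighborhood $N_\delta(\hat\Gamma_{\hat v}(h(\hat v)))$ is also convex. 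Because $v(t)\to\hat v$, choose $T_\delta$ so that $\|v(t)-\hat v\|<\eta$ for $t\ge T_\delta$; then $\dot v(s)\in N_\delta(\hat\Gamma_{\hat v}(h(\hat v)))$ for a.e.\ $s\ge T_\delta$, because $v(\cdot)$ is a solution to the DI and is absolutely continuous (see Remark~\ref{rem3}).

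Next, I would compare $\dot v(s)$ with a discrete increment. For any integer $n\ge T_\delta$,
\[
v(n+1)-v(n)=\int_n^{n+1}\dot v(s)\,ds.
\]
Since the integrand lies a.e.\ in the convex set $N_\delta(\hat\Gamma_{\hat v}(h(\hat v)))$, so does the mean-value integral on the right (the integral is a limit of convex combinations). But $v(n+1)-v(n)\to 0$ as $n\to\infty$ because $v(t)\to\hat v$. Hence $0\in \overline{N_\delta(\hat\Gamma_{\hat v}(h(\hat v)))}$, and this holds for every $\delta>0$. Intersecting over $\delta\downarrow 0$ and using the closedness of $\hat\Gamma_{\hat v}(h(\hat v))$ (it is compact by Lemma~\ref{lem3}(i)) gives $0\in \hat\Gamma_{\hat v}(h(\hat v))$, which is precisely the equilibrium condition for the DI (\ref{di}).

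The main technical hurdle, in my view, is justifying that the Bochner/Lebesgue integral of an a.e.\ selection lying in a convex set remains in that set; this is standard but requires a clean statement of Aumann's integral for set-valued maps, and one has to be a bit careful because $N_\delta$ is open, so one actually lands in its closure, which still suffices after taking $\delta\downarrow 0$. A secondary subtlety is that the linear-growth property from Lemma~\ref{lem3}(ii) is needed to ensure $\dot v(s)$ is locally bounded (hence integrable) on compact intervals, so that the integral identity above is valid. Once these two points are in place, the argument is a direct application of upper semi-continuity plus convergence of the trajectory.
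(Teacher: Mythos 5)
Your argument is correct, but it takes a different route from the paper: the paper's proof of Theorem~\ref{thm2} is essentially a citation --- having established in Lemma~\ref{lem3} that $v\mapsto\hat{\Gamma}_v(h(v))$ is a Marchaud (Peano) map, it invokes Theorem 10.1.12 of Aubin--Cellina, which states precisely that a convergent solution of a DI with such a right-hand side converges to an equilibrium. What you have written is, in effect, a self-contained proof of that cited theorem: upper semi-continuity (in the neighborhood form, which follows from the closed-graph form of Lemma~\ref{lem3}(iii) together with the local boundedness supplied by Lemma~\ref{lem3}(ii)) traps $\dot v(s)$ in the convex set $N_\delta\bigl(\hat{\Gamma}_{\hat v}(h(\hat v))\bigr)$ for large $s$; the mean-value property of the Lebesgue integral places the unit-length increments $v(n+1)-v(n)$ in the closure of that convex set; and since these increments vanish, $0$ lies in every $\overline{N_\delta}$ and hence, by compactness of $\hat{\Gamma}_{\hat v}(h(\hat v))$, in the set itself. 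The two subtleties you flag (that the integral of an a.e.\ selection in a convex set lands only in the closure, and that linear growth is needed for local integrability of $\dot v$) are real but are both handled correctly by your taking $\delta\downarrow 0$ at the end. The trade-off is the usual one: the paper's citation is shorter and delegates the measure-theoretic details to a standard reference, while your argument makes the mechanism explicit and would let a reader verify the claim without consulting Aubin--Cellina. Either is acceptable; if you keep your version, state explicitly that the neighborhood form of USC is being derived from Lemma~\ref{lem3}(iii) plus local boundedness, since the lemma as written only gives the sequential closed-graph property.
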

\begin{proof}
Recall that by Lemma~\ref{lem3}, the set-valued map $\hat{\Gamma}_v(h(v))$ is a Peano map. The claim now follows by Theorem 10.1.12 of \citep{aubin}.    
\end{proof}

Theorem \ref{main-thm} gives the main result. For any function $f$, we say that it's local maxima are isolated if around each such maximum, one can construct an open ball of small enough radius so that any two balls do not intersect. 

\begin{theorem}
\label{main-thm}
Suppose the ODE (\ref{ode2}) has isolated local maxima $\theta^*$. Correspondingly suppose $v^*\in C^o$ (the interior of $C$) is a limit point of the solution to the DI (\ref{di}). Then under Assumptions \ref{assum:policy-lipschitz-bounded}, \ref{assum:negative-definite}, \ref{assum-ergodic-chain} and \ref{assum:ss}, $\{(v_t,\theta_t)\}$ governed according to (\ref{v1})-(\ref{theta}) satisfy
$(v_t,\theta_t) \rightarrow (v^*,\theta^*)$ almost surely, where $\theta^*$ is a local maximum of (\ref{ode2}) and $v^*$ is the unique solution to the projected Bellman equation corresponding to the policy $\pi^{\theta^*}$, i.e., the two together satisfy
\begin{equation}
\label{pbe_}
\Phi^T D^{\theta^*}\Phi v^* = 
\Phi^T D^{\theta^*} T_{\theta^*}(\Phi v^*).
\end{equation}
\end{theorem}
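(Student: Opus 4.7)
The plan is to combine the two-timescale stability results of Theorem~\ref{di-stability} (whose proof structure carries over to the projected-critic setting of Section~\ref{proj-critic}), the actor convergence in Proposition~\ref{lem2}, and the differential-inclusion tracking result of Theorem~\ref{thm1}, then exploit the isolated-maximum hypothesis to identify the limit uniquely. First I would note that the projections $\Gamma_1$ and $\Gamma_2$ force $\sup_t\|v_t\|<\infty$ and $\sup_t\|\theta_t\|<\infty$ automatically; for the un-projected critic one appeals to Theorem~\ref{di-stability} instead. This handles the stability half of the claim.

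Next I would invoke the timescale separation $\beta_t=o(\alpha_t)$: from the viewpoint of the faster $(L_t,\theta_t)$ recursions, $v_t$ is quasi-static. By Proposition~\ref{lem2} applied with this frozen $v$, $L_t\to L^{\theta}$ and $\theta_t\to\theta^*(v)$, the set of stable equilibria of the ODE (\ref{ode2}). Combined with Remark~\ref{rem2}, this means that from the slower timescale, $\theta_t$ is effectively an element of (a small neighborhood of) $\bar{\check\theta}(v_t)$ at each $t$, which is exactly the ingredient used to set up the set-valued map $h(v)$ driving the critic DI (\ref{di-c2}).

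Now I would apply Theorem~\ref{thm1} to the slower critic recursion: $v_t$ converges almost surely to a compact connected internally chain transitive invariant set $G$ of the DI (\ref{di-c2}), and by Theorem~\ref{thm2} every limit point $v^*\in G$ is an equilibrium, i.e.\ $0\in\hat\Gamma_{v^*}(h(v^*))$. Since $v^*\in C^o$ by hypothesis, the directional-derivative projection $\hat\Gamma_{v^*}$ acts as the identity on a neighborhood of $v^*$, so the equilibrium condition reduces to $0\in h(v^*)$. Unpacking $h$ and using Assumption~\ref{assum:negative-definite} (negative definiteness of $\mathbf{A}$ under any policy, uniformly in $\theta$), the equation $0\in h(v^*)$ admits the \emph{unique} solution $v^*$ determined by the projected Bellman equation (\ref{pbe2}) for whichever $\theta\in\bar{\check\theta}(v^*)$ realizes the element of $h(v^*)$. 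The isolated-local-maxima assumption on (\ref{ode2}) then forces $\bar{\check\theta}(v^*)$ to contain a single point $\theta^*$, and we conclude $(v_t,\theta_t)\to(v^*,\theta^*)$ almost surely with the pair satisfying (\ref{pbe2}).

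The hard step I anticipate is the third one: rigorously showing that the limit point of the DI, which is characterised only through the set-valued map $h$, actually selects a unique pair $(v^*,\theta^*)$ rather than a continuum. The isolated-maxima hypothesis does the heavy lifting here by collapsing the set $\bar{\check\theta}(v^*)$ to a singleton, so that $h(v^*)$ reduces to a single vector and Assumption~\ref{assum:negative-definite} yields uniqueness of the fixed point of the projected Bellman operator. Without isolation one would only obtain $(v_t,\theta_t)\to \mathcal{K}$ for some compact set $\mathcal{K}$, which is exactly the generalisation anticipated in Remark~\ref{isolated-max}.
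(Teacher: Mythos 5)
Your proposal is correct and follows essentially the same route as the paper's proof: actor convergence to $\theta^*(v)$ via Proposition~\ref{lem2}, critic tracking of the DI via Theorems~\ref{thm1}--\ref{thm2}, the reduction of $0\in\hat\Gamma_{v^*}(h(v^*))$ to $0\in h(v^*)$ because $v^*\in C^o$, and the collapse of $h$ to a point-to-point map under the isolated-maxima hypothesis, yielding the projected Bellman equation. Your write-up is in fact somewhat more explicit than the paper's (which compresses these steps into a few lines), particularly in spelling out where Assumption~\ref{assum:negative-definite} delivers uniqueness of the Bellman fixed point.
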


\begin{proof}
Recall that for any $v$, $\theta_t\rightarrow \theta^*$ almost surely according to Proposition~\ref{lem2}-Remark~\ref{rem21}. Moreover, $v^*\in C^o$ would mean $0\in h(v^*)$ and in fact since $\theta_t\rightarrow \theta^*$ almost surely, as $t\rightarrow\infty$, we have that corresponding to $\theta^*$, $h(v)$ would be a point-to-point map and will no longer be a (nontrivial) point-to-set map as before. Setting $h(v)=0$ in this case would imply the Bellman equation (\ref{pbe_}) for policy $\pi^{\theta^*}$. The claim follows. 
\end{proof}
\begin{remark}
\label{rem-proj}
In Theorem~\ref{main-thm}, (\ref{pbe_}) may or may not hold if $v^*\in \partial C$ (the boundary of $C$). This is because projection set boundaries themselves can induce spurious attractors, see \citep{kushneryin}. We analyze below the case where the critic is not projected to the set $C$ but can take any value in $\mathbb{R}^{d_1}$. We prove the stability of the critic recursion in this case, i.e., that $\sup_t \|v_t\| <\infty$ w.p.1 and Theorem~\ref{main-thm2} presents the main (general) result on convergence of the joint iterate process in this case of unprojected critic iterates.
\end{remark}
\begin{remark}
\label{rem5}
It has been argued in the case of the discounted cost CA algorithm in \citep{bhatnagar2023actorcritic} for the look-up table setting that the algorithm there mimics value iteration unlike AC that mimics the policy iteration procedure. Now by iterating $\theta$ on a faster timescale as opposed to the $v$-update, the resulting scheme is seen to resemble projected value iteration, see  \citep{bertsekas-vol2}.  
\end{remark}

\subsubsection{The Case of Critic without Projection}
\label{unp-critic}

We now consider the case when the critic recursion is unconstrained though the actor recursion continues to be constrained (using the projection operator $\Gamma_2$ as before). The AC analog of this algorithm has for instance been analysed for its asymptotic convergence in \citep{BHATNAGAR20092471}. An important observation here is that since there is no projection now on the critic, one needs to establish explicitly that the critic recursion remains uniformly bounded almost surely. 
In this case, we rewrite (\ref{v1}) as 
\begin{equation}
\label{v1-re10}
v_{t+1} = v_t +\beta_t (y_t + Y_t +\kappa_t),
\end{equation}
where,
\[
y_t = \sum_{s} \mu_{\theta_t}(s) \sum_a\pi^{\theta_t}(s,a)(R(s,a) - L^{\theta_t} + v_t^T \sum_{s'} p(s,a,s')\phi(s') - v_t^T \phi(s))\phi(s),
\]
\[
Y_{t} = \delta_t \phi(s_t) - E[\delta_t\phi(s_t)|\mathcal{F}_2(t)], 
\]
\[
\kappa_t = E[(R(s_t,a_t)-L^{\theta_t}+ v_t^T \sum_{s_{t+1}} p(s_t,a_t,s_{t+1})\phi(s_{t+1}) - v_t^T \phi(s_t))\phi(s_t)|\mathcal{F}_2(t)] - y_t,
\]
as before. Thus, $(Y_t,\mathcal{F}_2(t)), t\geq 0$ is a martingale difference sequence and $\kappa_t,t\geq 0$ constitutes the Markov noise. 
Now from Assumption~\ref{assum-ergodic-chain}, it follows that $\kappa_t\rightarrow 0$ almost surely as $t\rightarrow\infty$. We now prove Theorem \ref{di-stability} concerning the stability of the recursion (\ref{v1-re10}):

\vspace*{0.1in}
\noindent {\em Proof of Theorem~\ref{di-stability}:}

\vspace*{0.1in}
Note that (\ref{di-c21}) denotes the  differential inclusion (\ref{di-c2}) associated with the critic update. 
\begin{equation}
\label{di-c21}
\dot{v} \in h(v),
\end{equation}
where recall that
\[
h(v) = \{ \sum_s \mu_{\theta} (s)\sum_a \pi^{\theta} (s,a) (R(s,a) - L^{\theta} + v^T \sum_{s'} p(s,a,s')\phi(s')\]
\[-v^T \phi(s))\phi(s) | \theta \in \bar{\theta^*}(v) \},
\]
and where $\bar{\theta}^*(v)$ is the closure of the set $\theta^*(v)$, that in turn is convex and bounded. Thus, $\bar{\theta}^*(v)$ is a convex and compact set. Observe that $\mu_{\theta}$, $\pi^{\theta}$ and $L^\theta$ are bounded and Lipschitz continuous. Moreover, letting ${\displaystyle h_\infty(v) = \lim_{c\rightarrow\infty} \frac{h(cv)}{c}}$, we have 
\begin{equation}
\label{di-inf}
h_\infty(v) 
= \{\sum_s \mu_{\theta}(s) \sum_a
\pi^{\theta}(s,a) (v^T \sum_{s'} p(s,a,s')\phi(s') - v^T\phi(s))\phi(s) | \theta \in \check{\theta^*}\}, 
\end{equation}
where $\check{\theta^*} = \lim_{c\rightarrow\infty}\bar{\theta^*}(cv)$.
By Assumption~\ref{assum:negative-definite}, 
    for all potential policy parameters $\theta$, the matrix $\Ab$ defined as under is negative definite:  
${\displaystyle
    \Ab := \EE_{s,a,s^{'}} \big[ \phi(s) \big( \phi(s^{'}) - \phi(s)\big)^{\top} \big],
}$
where $s \sim \mu_{\theta}(\cdot)$, $a \sim \pi^{\theta}(s,\cdot), s' \sim p(s, a,\cdot)$. Consider now the DI:
\[
\dot{v} \in \{\Ab v | \theta \in \check{\theta^*}\}.
\]
Since $\Ab$ is negative definite regardless of $\theta$, the above DI will have the origin as an asymptotically stable attractor with a unit ball around the origin as its fundamental neighborhood. 
The claim now follows from Theorem 1 of \citep{ramaswamy-bhatnagar}. \hfill $\Box$

\vspace*{0.1in}

The conclusions of Theorem~\ref{thm1}-\ref{thm2} shown above continue to hold.
We finally come to the proof of Theorem~\ref{main-thm}.

\vspace*{0.1in}
\noindent {\em Proof of Theorem~\ref{main-thm2}:}

\vspace*{0.1in}


The proof now follows from Theorem 2 of \citep{ramaswamy-bhatnagar}. 
\subsection{ Hyper-parameters and Compute time }
\label{simdetails}
 
We describe in Tables~\ref{table_hyperparameter_ca}-\ref{table_hyperparameter_ppo_ca} below the hyper-parameters used for each of the algorithms and subsequently the training time for the various algorithms is given in Table~\ref{sample-table2}. The learning rates used for our CA algorithm are those that have been found optimal (see the finite-time analysis). Similarly, the rates used in the AC algorithm are those that have been found optimal in \citep{wu2022finite}. These rates have also been used in the PPO-CA and PPO-AC algorithms respectively.

\begin{table*}[t]
\caption{Hyper-parameters used for the Average Reward CA Algorithm
}
\label{table_hyperparameter_ca}
\vskip 0.15in
\begin{center}
\begin{small}
\begin{tabular}{|c|c|}
\toprule
Hyper-parameter & Value/Description \\ \hline
Number of hidden layers in actor NN & 1 \\
Number of hidden layers in critic NN & 1\\
Number of nodes in hidden layer of actor NN & 64\\
Number of nodes in hidden layer of critic NN & 64\\
Activation Function used in critic NN & ReLU \\
Activation Function used in actor NN  & softmax\\ 
Learning rate for actor update (at time instant t) & $\frac{1.5}{(1 + t)^{0.5}}$\\
Learning rate for critic update (at time instant t) & $\frac{1.5}{(1 + t)^{0.51}}$\\
\bottomrule
\end{tabular}
\end{small}
\end{center}
\vskip -0.1in
\end{table*}

\begin{table*}[t]
\caption{Hyper-parameters used for the Average Reward AC  Algorithm
}
\label{table_hyperparameter_ac}
\vskip 0.15in
\begin{center}
\begin{small}
\begin{tabular}{|c|c|}
\toprule
Hyper-parameter & Value/Description \\ \hline
Number of hidden layers in actor NN & 1 \\
Number of hidden layers in critic NN & 1\\
Number of nodes in hidden layer of actor NN & 64\\
Number of nodes in hidden layer of critic NN & 64\\
Activation Function used in critic NN & ReLU \\
Activation Function used in actor NN  & softmax\\ 
Learning rate for actor update (at time instant t) & $\frac{1.5}{(1 + t)^{0.6}}$\\
Learning rate for critic update (at time instant t) & $\frac{1.5}{(1 + t)^{0.4}}$\\
\bottomrule
\end{tabular}
\end{small}
\end{center}
\vskip -0.1in
\end{table*}

\begin{table*}[t]
\caption{Hyper-parameters used for the Average Reward Single timescale AC  Algorithm
}
\label{table_hyperparameter_single_ac}
\vskip 0.15in
\begin{center}
\begin{small}
\begin{tabular}{|c|c|}
\toprule
Hyper-parameter & Value/Description \\ \hline
Number of hidden layers in actor NN & 1 \\
Number of hidden layers in critic NN & 1\\
Number of nodes in hidden layer of actor NN & 64\\
Number of nodes in hidden layer of critic NN & 64\\
Activation Function used in critic NN & ReLU \\
Activation Function used in actor NN  & softmax\\ 
Learning rate for actor update (at time instant t) & $\frac{1.5}{(1 + t)^{0.6}}$\\
Learning rate for critic update (at time instant t) & $\frac{1.5}{(1 + t)^{0.6}}$\\
\bottomrule
\end{tabular}
\end{small}
\end{center}
\vskip -0.1in
\end{table*}

\begin{table*}[t]
\caption{Hyper-parameters used for the Average Reward DQN Algorithm
}
\label{table_hyperparameter_dqn}
\vskip 0.15in
\begin{center}
\begin{small}
\begin{tabular}{|c|c|}
\toprule
Hyper-parameter & Value/Description \\ \hline
Number of hidden layers in QNN & 1 \\
Number of nodes in hidden layer of QNN & 64\\
Activation Function used in QNN & ReLU \\ 
Learning rate & 0.5 \\
\bottomrule
\end{tabular}
\end{small}
\end{center}
\vskip -0.1in
\end{table*}

\begin{table*}[t]
\caption{Hyper-parameters used for the Average Reward PPO-AC Algorithm
}
\label{table_hyperparameter_ppo_ac}
\vskip 0.15in
\begin{center}
\begin{small}
\begin{tabular}{|c|c|}
\toprule
Hyper-parameter & Value/Description \\ \hline
Number of hidden layers in actor NN & 1 \\
Number of hidden layers in critic NN & 1\\
Number of nodes in hidden layer of actor NN & 64\\
Number of nodes in hidden layer of critic NN & 64\\
Activation Function used in critic NN & ReLU \\
Activation Function used in actor NN  & softmax\\ 
Learning rate for actor update (at time instant t) & $\frac{1.5}{(1 + t)^{0.6}}$\\
Learning rate for critic update (at time instant t) & $\frac{1.5}{(1 + t)^{0.4}}$\\
Batch length & 50\\
\bottomrule
\end{tabular}
\end{small}
\end{center}
\vskip -0.1in
\end{table*}

\begin{table*}[t]
\caption{Hyper-parameters used for the Average Reward PPO-CA Algorithm
}
\label{table_hyperparameter_ppo_ca}
\vskip 0.15in
\begin{center}
\begin{small}
\begin{tabular}{|c|c|}
\toprule
Hyper-parameter & Value/Description \\ \hline
Number of hidden layers in actor NN & 1 \\
Number of hidden layers in critic NN & 1\\
Number of nodes in hidden layer of actor NN & 64\\
Number of nodes in hidden layer of critic NN & 64\\
Activation Function used in critic NN & ReLU \\
Activation Function used in actor NN  & softmax\\ 
Learning rate for actor update (at time instant t) & $\frac{1.5}{(1 + t)^{0.5}}$\\
Learning rate for critic update (at time instant t) & $\frac{1.5}{(1 + t)^{0.51}}$\\
Batch length & 50\\
\bottomrule
\end{tabular}
\end{small}
\end{center}
\vskip -0.1in
\end{table*}

\begin{table*}[t]
\caption{Training Time (in seconds) Averaged over 10 Seeds  for 10,000 Iterations.
}
\label{sample-table2}
\vskip 0.15in
\begin{center}
\begin{small}
\begin{tabular}{|c|c|c|c|p{2 cm}|p{1.6 cm}|c|}
\toprule
    Environment & CA & AC & DQN & PPO-AC & PPO-CA & Single Timescale AC\\     
\midrule
Frozen Lake & 21.82 & 22.66 & 14.26 & 53.01 & 51.72 &  23.77 \\ \hline
Pendulum &  13.31 & 26.87 & 9.51 & 61.84 & 61.55 & 27.91\\ \hline
Mountain Car Continous & 12.09 & 24.2 & 7.77 & 53.86 & 53.13 & 26.28\\
\bottomrule
\end{tabular}
\end{small}
\end{center}
\vskip -0.1in
\end{table*}

\begin{table*}[t]
\caption{CPU Configuration of the Server used for the Experiments}
\label{cpu_configuration}
\begin{center}
\begin{small}
\begin{tabular}{|c|c|}
\toprule
Architecture   &                    x86\_64\\ \hline
CPU op-mode(s) &                 32-bit, 64-bit\\ \hline
Byte Order &                         Little Endian\\ \hline
Address sizes &                     46 bits physical, 48 bits virtual\\ \hline
CPU(s) &                         40\\ \hline
On-line CPU(s) list &               0-39\\ \hline
CPU family &                        6\\ \hline
Model name &                         Intel(R) Xeon(R) CPU E5-2698 v4 @ 2.20GHz\\ \hline
CPU MHz &                     2759.590 \\ \hline
OS &                          Linux\\
\bottomrule
\end{tabular}
\end{small}
\end{center}
\vskip -0.1in  
\end{table*}
Observe from Table~\ref{sample-table2} that CA shows better results than AC. In particular, on Pendulum and Frozen Lake environments, CA requires nearly half the training time of AC while showing better average reward performance (see Table~\ref{tab:experiment}. Similarly, amongst PPO-CA and PPO-AC, the former requires lower training time. Amongst all algorithms, DQN has the lowest training time even though it does not perform as well as the CA algorithm in terms of average reward performance, see Table~\ref{tab:experiment}. We have used the pytorch library in our code. 

We observed that when the number of hidden layers in both the actor NN and critic NN was more than 1, the performance of the algorithms was affected negatively. Hence the number of hidden layers in both the actor NN and critic NN was chosen to be 1. Similarly the performance was observed to be the best when 64 nodes were used in the hidden layer. This was obtained by trying out over the full range of values 1-100. For finding the best activation function, we tried Linear, Tanh, ReLU and Softmax for each network, in the various algorithms. The best in each case was selected for each network (actor/critic) and these are provided in Tables \ref{table_hyperparameter_ca}--\ref{table_hyperparameter_ppo_ca}, respectively.

Finally, Table~\ref{cpu_configuration} shows the CPU configuration of the server on which the various simulations were conducted.

\end{document}